\documentclass[11pt,usenames,dvipsnames]{article}
\usepackage{enumerate}
\usepackage{amsmath,amssymb}
\usepackage{natbib}
\usepackage{caption}
\usepackage{enumitem}
\usepackage{comment}
\usepackage[usenames]{color}
\usepackage{bm}
\usepackage{ying}
\usepackage{multirow}
\usepackage{rotating}
\usepackage{fullpage}
\usepackage{authblk}
\usepackage{mathtools}
\usepackage{geometry}
\usepackage{parskip}
\usepackage{float}
\usepackage{subcaption}
\usepackage{tikz,tabularx}
\usetikzlibrary{patterns}
\usetikzlibrary{arrows}
\usetikzlibrary{tikzmark, positioning, fit, shapes.misc,backgrounds}
\usetikzlibrary{decorations.pathreplacing, calc}
\tikzset{brace/.style={decorate, decoration={brace}},
  brace mirrored/.style={decorate, decoration={brace,mirror}},
}

\newcolumntype{g}{>{\columncolor{red}}c}

\usepackage{graphicx,amssymb}
\usepackage{xcolor}

\usepackage[colorlinks,
            linkcolor=red,
            anchorcolor=blue,
            citecolor=blue
            ]{hyperref}
\usepackage{algorithm}
\usepackage{algorithmic}

\let\emptyset\varnothing

\def \calib {\textrm{calib}}
\def \train {\textrm{train}}

\def \test {\textrm{test}}

\def \quant {\textrm{Quantile}}

\def \RA {{\textnormal{RA}}}
\def \uquant {\textnormal{Quantile}^+}
\def \learn {\textnormal{learn}}

\def \low {\textnormal{low}}
\def \high {\textnormal{high}}
\def \OPT {\textnormal{OPT}}

\providecommand{\keywords}[1]
{{ 
  \fontsize{9}{12}\selectfont
  \textbf{\textit{Keywords:}} #1
}}

\theoremstyle{plain}

\allowdisplaybreaks

\usepackage{hyperref}
\def\@#1\@{\begin{align}#1\end{align}}
\def\$#1\${\begin{align*}#1\end{align*}}

\usepackage{color-edits}
\addauthor[Ying]{ying}{magenta}
\addauthor[Yurui]{yurui}{blue}

\usepackage{xcolor}
 
\title{Prediction Sets for Counterfactual Decisions: \\ Coverage, Optimality, and Conformal Prediction}
\author[1]{Yurui Zheng}
\author[2]{Ying Jin} 
\affil[1]{School of Mathematical Sciences, Peking University}
\affil[2]{Department of Statistics and Data Science, University of Pennsylvania}
\date{}

\begin{document}

\maketitle

\begin{abstract}
Predictions are increasingly used to guide high-stakes decisions, from treatment selection to policy making. To ensure reliability with imperfect predictions, uncertainty quantification methods such as conformal prediction build prediction sets with coverage guarantees. However, statistical validity alone does not immediately determine the decisions to take,  nor the optimality thereof. This gap is especially delicate in counterfactual settings where the outcome that materializes depends on the action taken, so uncertainty cannot be specified independently of the decision rule. 

We develop a decision-theoretic framework for uncertainty-informed counterfactual decisions. 
We identify a novel notion of \emph{policy-coupled coverage}---namely, coverage of the realized outcome under the action induced by the prediction sets themselves---as the optimal and lossless interface between uncertainty and action. 
It plays three roles. First, it justifies acting via a natural max--min rule as minimax-optimal under distributional ambiguity. Second, optimizing prediction sets under policy-coupled coverage is equivalent both to a stronger universal-coverage formulation and to the direct risk-averse optimization over policies and utility certificates; this equivalence yields the explicit form of the population-optimal prediction sets. Third, it admits a two-stage procedure, Policy-Coupled Risk-Averse Conformal Prediction (PC-RACP), that approximates these optimal sets with rigorous finite-sample coverage. 
Simulations and a real email-marketing experiment confirm that PC-RACP delivers higher utility than existing approaches while maintaining valid coverage, and that ignoring the counterfactual structure of the decision problem is suboptimal for both validity and utility.\footnote{Reproduction code for the experiments can be found in \url{https://github.com/yurui-zheng/PC-RACP}.}

\end{abstract}
\keywords{Decision theory, uncertainty quantification, conformal prediction, causal inference, counterfactuals}


\section{Introduction}

\subsection{Prediction, uncertainty, and counterfactual decisions}

Predictions are often used to guide decisions whose consequences will only be realized after they are made. A physician choosing among treatments, an online platform selecting product recommendations, or a policymaker allocating interventions may rely on  predictions to probe the likely outcomes and guide their actions~\citep{manski2004statistical,li2010contextual,athey2015machine,gao2024causal,feuerriegel2024causal}.   
Since predictions are inherently imperfect, a critical challenge in high-stakes problems is how to act reliably under predictive uncertainty. 
A natural starting point is to quantify such uncertainty, 
for example by building prediction sets---ranges of values the unknown outcome may take---around the predictions. To ensure reliability, an immediate  goal is \emph{statistical validity}. In particular, conformal prediction offers finite-sample coverage guarantees under minimal assumptions, making it an attractive uncertainty quantification framework~\citep{vovk2005algorithmic}. 

However, statistical validity alone does not fully address the role of uncertainty quantification in supporting decision-making. Many existing methods  do not explain how to derive downstream decisions based on estimated uncertainty.  In general, it is not straightforward how validity (e.g., coverage)  may translate to guarantees for downstream actions.  
If the ultimate goal is to make ``better'' uncertainty-informed decisions, the fundamental question is decision-theoretic: what decisions do uncertainty guarantees justify, what optimality criterion do they correspond to, and when does  uncertainty quantification preserve the right amount of information needed for optimal action?

These questions have motivated recent work on decision-theoretic foundations of predictive inference in standard prediction problems, where there is a \emph{single} realized outcome $Y\in \cY$ to be predicted~\citep{kiyani2025decision,wang2026optimal}. In that setting, prediction sets $C(X)\subseteq \cY$ obeying the usual \emph{marginal coverage} $\Pr(Y\in C(X))\ge 1-\alpha$ can serve as a lossless interface between uncertainty and decision: for risk-averse agents that maximize a high-probability bound of realized utility, the policy and utility certificate induced by optimally designed prediction sets attain the same value as those obtained by directly optimizing over policies and utility certificates. This picture provides essential insights on the value of prediction sets for decision-making, but a key premise is that the object to be covered is a fixed outcome \(Y\) independent of the action  taken.

\vspace{-0.75em}

\paragraph{Counterfactual decision-making.} Many decision-making problems, however, are \emph{counterfactual}: the outcome that will be realized depends on the action taken. Such a structure is standard in many tasks spanning clinical trials, marketing, and policy-making, captured by the potential-outcome framework in causal inference and contextual bandits~\citep{rubin2005causal,imbens2015causal}.
Let $\cA=\{a_1,\dots,a_{|\cA|}\}$ be a finite set of actions. Each action $a\in\cA$ induces a distinct
potential outcome $Y(a)\in\cY$, and the potential outcomes $\{Y(a)\}_{a\in \cA}$ are treated as separate random variables. With the
chosen action $A$, the realized outcome is $Y=Y(A)$. 
Accordingly,  the realized utility is $u(A,Y(A))$, where the utility function $u\colon \cA\times\cY\to \RR$ describes the value $u(a,y)$ of taking action $a\in \cA$ and observing outcome $y\in \cY$. The  
setting of~\cite{kiyani2025decision} is the special case where $Y(a)\equiv Y$ for all $a\in \cA$. 

In the counterfactual setting, uncertainty can no longer be specified independently of the decision rule since the latter changes the realized outcome. 
It is therefore unclear what coverage guarantee to begin with, and whether, and how, uncertainty quantification with such coverage connects to optimal counterfactual decisions. 
This raises the central question of this paper: 

\vspace{0.25em}
\begin{quote}
    \emph{What is the right interface between uncertainty quantification and counterfactual decision-making where actions determine which potential outcome is realized?}
\end{quote}
\vspace{0.25em}

Several natural validity notions may come to mind. One could require separate (marginal) coverage of every potential outcome,  coverage of the realized outcome under every possible policy, or only coverage of the outcome realized under the policy induced by the prediction sets.  

Our results reveal the role of \emph{policy-coupled coverage} (formalized in Section~\ref{sec:prelim}) as the decision-theoretically optimal interface. Figure~\ref{fig:intro} summarizes both the practical deployment pipeline and the theoretical organization of our framework. The top row shows that our practical procedure \emph{Policy-Coupled Risk-Averse Conformal Prediction} (PC-RACP) produces a collection of prediction sets satisfying the coverage guarantee, which further induce actions based on the prediction sets. This pipeline is justified by our theoretical components in the bottom row of Figure~\ref{fig:intro}, which we preview in the next and summarize in Figure~\ref{fig:vis}.

\begin{figure}
    \centering
    \includegraphics[width=0.9\linewidth]{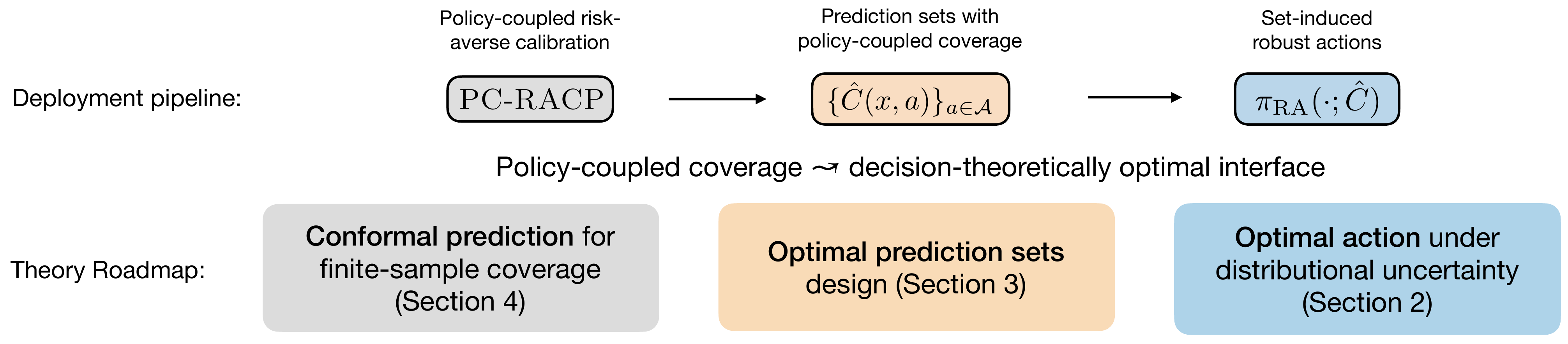}
    \caption{{\small Top: deployment pipeline. PC-RACP constructs action-indexed prediction sets $\{\widehat C(x,a)\}_{a\in\cA}$ with policy-coupled coverage, and acting on these sets via the counterfactual max--min rule yields set-induced risk-averse decisions. Bottom: theory roadmap. Section~\ref{sec:prelim} characterizes the fixed-set robust decision rule under distributional ambiguity, Section~\ref{sec:opt} studies optimal prediction-set design, and Section~\ref{sec:alg} gives the conformal procedure for constructing (nearly) optimal prediction sets with finite-sample coverage.}}
    \label{fig:intro}
\end{figure}
 
\subsection{Preview of results}

The first challenge in the counterfactual setting is that there is no single unknown outcome to be ``covered'' in advance. 
For a decision rule $\pi\colon \cX\to \cA$ mapping from the feature space to the actions, the realized outcome $Y(\pi(X))$ is naturally of primary interest, yet its distribution varies with $\pi$.  Consequently, the first question is what uncertainty object is being constructed and what coverage guarantee it should satisfy.

A natural idea is to consider a collection of prediction sets $\{C(x,a)\}_{a\in\cA}$, one per action's potential outcome. 
Our first result identifies the validity notion and decision rule that match risk-averse counterfactual decision-making for a fixed collection of  prediction sets. Somewhat surprisingly, rather than the widely-studied per-outcome marginal coverage~\citep{lei2021conformal,jin2023sensitivity}, a \emph{policy-coupled coverage} (Section~\ref{subsec:coverage}) over the realized outcome under the max--min rule 
\[
\pi_{\mathrm{RA}}(x;C)=\argmax_{a\in\cA}\inf_{y\in C(x,a)}u(a,y) 
\]
achieves the optimal performance under distributional uncertainty. 
Specifically, for the risk-averse objective  in~\cite{kiyani2025decision},  the policy $\pi_{\mathrm{RA}}(\cdot;C)$ maximizes the worst-case objective among all data distributions inducing the policy-coupled coverage of $\{C(x,a)\}_{a\in\cA}$. Thus, policy-coupled coverage is the decision-relevant notion of validity, and $\pi_{\mathrm{RA}}(\cdot;C)$ is the optimal decision rule.

Upon establishing the match between coverage and utility objective for a \emph{given} set of prediction sets, we  study optimal design of prediction sets (Section~\ref{sec:opt}). 
Our second result reveals that optimizing prediction sets under policy-coupled coverage attains the same utility objective as directly optimizing over policies and utility certificates (Theorem~\ref{thm:equi-dpo}). This establishes prediction sets as a lossless interface for risk-averse counterfactual decision-making, extending the conclusions in~\cite{kiyani2025decision}. 

Our third result shows that policy-coupled coverage is equivalent in the decision-theoretic sense to \emph{universal coverage}, i.e., requiring valid coverage for the realized outcome under any policy, although the former is strictly weaker (Theorem~\ref{thm:equi-cpo}). Such an equivalence enables us to rely on either of them to derive the formulae of the optimal prediction sets and construct prediction sets with finite-sample validity. The universal coverage proves useful in the first task (Theorem~\ref{thm:unified}). 

Finally, we return to the construction of prediction sets obeying the policy-coupled coverage with data (Section~\ref{sec:alg}). We propose a weighted two-stage conformal prediction procedure  and prove the coverage guarantee of the resulting prediction sets, i.e., they cover  the realized outcome under the induced max--min rule  with high probability. In simulation studies and real data case studies in Section~\ref{sec:simulation}, we demonstrate the high utility achieved by the max--min policy based on, and the valid coverage of, our prediction sets. In particular, ignoring the counterfactual structure and following the single-outcome recipe is suboptimal in terms of both statistical validity and decision utility.

\begin{figure} 
    \centering
    \includegraphics[width=\linewidth]{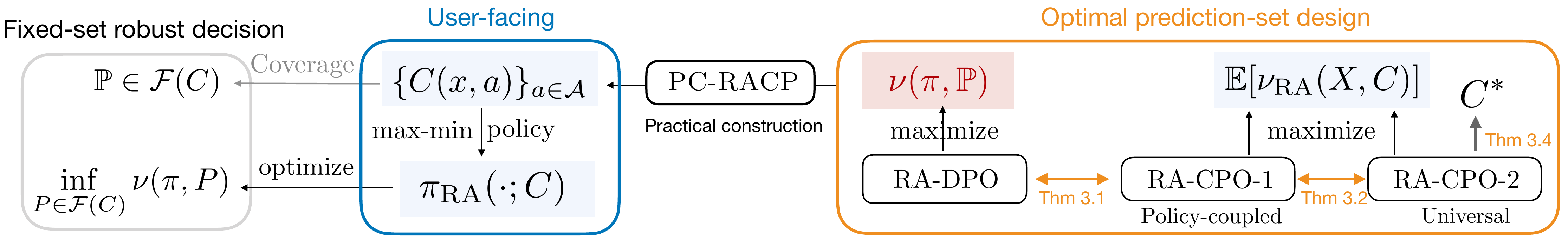}
    \caption{{\small  
Blue box: the user-facing prediction sets $\{C(x,a)\}_{a\in\cA}$ and  the induced max--min policy $\pi_{\mathrm{RA}}(\cdot;C)$. 
Grey box (Sec.~\ref{sec:prelim}): Under policy-coupled coverage, the prediction sets induce an ambiguity class $\cF(C)$, for which $\pi_{\mathrm{RA}}$ is worst-case optimal for risk objective $\nu(\pi,P)$. 
Orange box (Sec.~\ref{sec:opt}): Optimizing prediction sets under policy-coupled coverage (RA-CPO-1) is equivalent both to the stronger universal-coverage formulation (RA-CPO-2) and to direct risk-averse optimization over policies and utility certificates (RA-DPO). 
Finally, PC-RACP produces conformal prediction sets obeying the policy-coupled coverage in finite samples.}}
    \label{fig:vis}
\end{figure}

\subsection{Related work}

Our work builds on recent decision-theoretic foundations for predictive inference~\citep{kiyani2025decision,wang2026optimal,zhu2026conformalriskaversedecisionmaking}. This literature complements the classical literature on calibration, where probabilistic forecasts are shown to be optimal for risk-neutral decision-making~\citep{foster1998asymptotic,kakade2008deterministic,zhao2021calibrating}. In contrast, our work follows the risk-averse framework of~\cite{kiyani2025decision} to establish optimal decision-making based on prediction sets.
\citet{kiyani2025decision} show that, in standard single-outcome prediction problems, prediction sets can serve as a lossless interface between uncertainty and risk-averse decision-making; \citet{wang2026optimal} extend this perspective to expected-loss objectives. The distinguishing feature of the counterfactual setting is that the decision changes the outcome to be covered, which leads to distinct coverage target, decision-theoretic optimality, and construction of prediction sets. The recent concurrent work of~\cite{zhu2026conformalriskaversedecisionmaking} considers action-conditional coverage in the single-outcome setting; although they tie the action closely to the coverage, due to the counterfactual setting, we build distinct optimality theory, and  we construct a set of per-action prediction sets instead of one for the single realized outcome.

Our work is also related to conformal inference for counterfactual and causal quantities under the potential outcome framework in causal inference and data-driven decision-making~\citep{rubin2005causal,imbens2015causal}. This framework is standard in our motivating applications such as treatment selection and online marketing. 
Prior work develops conformal prediction sets for potential outcomes, individual treatment effects, and sensitivity analysis~\citep{lei2021conformal,jin2023sensitivity,yin2024conformal,alaa2023conformal}, whose validity is often  per-action or per-potential-outcome coverage guarantees. 
By contrast, our goal is to connect statistical validity to downstream decision optimality. 
This leads to the new notion of policy-coupled coverage of the realized outcome under the policy induced by the prediction sets, with different motivations and conformal procedures.


\section{A decision-theoretic framework for counterfactual decisions}
\label{sec:prelim}

We begin by introducing the problem setup and the decision-theoretic framework. 
Section~\ref{subsec:objective} introduces the utility objective studied in this paper.  Section~\ref{subsec:set_action} introduces the notion of prediction sets and set-based decisions. Section~\ref{subsec:coverage} studies the fixed-set optimal decision problem (grey box in Figure~\ref{fig:vis}) which justifies acting based on prediction sets under distributional ambiguity. 

Recall that $X\in\mathcal{X}$ is the observed features, $\mathcal{A}=\{a_1,\ldots,a_{|\cA|}\}$ a finite action set, and
$\{Y(a)\}_{a\in\mathcal{A}}$ potential outcomes, where $Y(a)\in\mathcal{Y}$ is the outcome realized if
action $a$ is taken. A bounded utility function $u:\mathcal{A}\times\mathcal{Y}\to\mathbb{R}$ assigns value $u(a,y)$ to taking action $a$ and observing outcome $y$. A policy $\pi:\mathcal{X}\to\mathcal{A}$ maps features to actions; under policy $\pi$, the realized outcome is $Y(\pi(X))$ and the realized utility is $u(\pi(X),Y(\pi(X)))$. The defining feature of our setting is that \emph{the outcome that materializes depends on the action taken}. 
We use the notation $P$ to denote a distribution over $(X,\{Y(a)\}_{a\in \cA})$, and for clarity, let $\PP$ denote the unknown, true data distribution.

\subsection{Risk-averse objective: the population-level problem}
\label{subsec:objective}
  
The decision-making problem aims to choose actions that optimize certain objectives.
Following~\cite{kiyani2025decision}, we study  risk-averse agents whose goal is
to choose actions that ensure a sufficiently high utility with
high probability over the randomness of unknown outcomes.
Formally, fix some $\alpha\in(0,1)$ and let $u_{\max}\ge \max_{a\in \cA}\sup_{y\in\cY}u(a,y)$ be a fixed upper bound on the utility. We call $\eta\colon\cX\to(-\infty,u_{\max}]$ a \emph{utility certificate} for $\pi$ under distribution $P$ if
\@\label{eq:cert}
P\bigl(u(\pi(X),Y(\pi(X)))\ge\eta(X)\bigr)\ge 1-\alpha.
\@
Here,~\eqref{eq:cert} extends the utility certificate notion in~\cite{kiyani2025decision}  by replacing the standard label $Y$ with the realized outcome $Y(\pi(X))$. 
The feature-dependent certificate thus concerns the realized utility: with probability at least $1-\alpha$, policy $\pi$ delivers at least $\eta(X)$ of utility. 
The larger $\eta(X)$ is, the better the policy serves a risk-averse agent. 
This motivates the risk objective
\@\label{eq:nu}
\nu(\pi,P):=\max\Bigl\{\EE_P[\eta(X)]\colon
P\bigl(u(\pi(X),Y(\pi(X)))\ge\eta(X)\bigr)\ge 1-\alpha\Bigr\},
\@
the highest expected certificate attainable under policy $\pi$ and distribution $P$. Such a risk objective is justified (see~\citet[Section 2]{kiyani2025decision}) as a marginalized version of an $X$-conditional high-probability risk optimization problem. 
The marginalization is useful since conditional coverage is often intractable in finite samples~\citep{foygel2021limits}. 

The direct risk-averse optimization problem maximizes $\nu(\pi,\PP)$ over $\pi\in\Pi$:
\begin{equation}
\tag{RA-DPO}\label{eq:def_ra_dpo}
\begin{aligned}
   \mathop{\text{Maximize}}_{\pi\in \Pi}~~\nu(\pi,\PP) \quad \Leftrightarrow \quad  \mathop{\text{Maximize}}_{\pi\in \Pi,~\eta(\cdot)} ~~&\EE[\eta(X)] \\ 
\text{subject to} ~~ & \PP\big[u(\pi(X),Y(\pi(X))) \geq \eta (X)\big] \geq 1-\alpha.
\end{aligned}
\end{equation} 
Here $\Pi$ is the collection of all policies $\pi\colon \cX\to \cA$.
  
\subsection{Prediction sets and set-based actions}
\label{subsec:set_action}

Since the true joint distribution $\PP$ of $(X,\{Y(a)\}_{a\in\mathcal{A}})$ is typically unknown, the decision-maker needs to rely on partial information about $\PP$. We focus on the scenario where such partial information is encapsulated in a collection of prediction sets $C=\{C(x,a)\}_{a\in\mathcal{A}}$. Intuitively, for each  action $a\in \cA$, the prediction set $C(x,a)\subseteq\mathcal{Y}$ quantifies the likely range of the outcome $Y(a)$ when observing feature values $X=x\in \cX$. 

Without any formal justification of how to act upon the prediction sets,  a natural instinct is to assume $Y(a)$ varies within $C(x,a)\subseteq\mathcal{Y}$ and, extending~\cite{kiyani2025decision}, take the action maximizing worst-case utility within its set,
\@\label{eq:pi_ra}
\pi_\RA(x;C):=\argmax_{a\in\cA}\inf_{y\in C(x,a)}u(a,y),\qquad
\nu_\RA(x;C):=\max_{a\in\cA}\inf_{y\in C(x,a)}u(a,y).
\@
While it is common in practice to enforce non-empty prediction sets, to avoid additional assumptions on either the data-generating process or the utility function, we set the convention  $\inf_{y \in \varnothing}u(a,y):=u_{\max}$. 
Intuitively, $\pi_\RA$ guards against the worst outcome $Y(a)$ within the likely range $C(x,a)$, representing a \emph{risk-averse} way of acting based on the prediction sets. 

At this stage, $\pi_{\mathrm{RA}}(\cdot;C)$ and $\nu_{\mathrm{RA}}(\cdot;C)$ are set-induced quantities only. 
The remaining question is decision-theoretic: what validity guarantee on $C$ makes this rule justified, and in what sense?

\subsection{What coverage guarantee justifies set-induced counterfactual decisions?}
\label{subsec:coverage}

Our conceptual basis is coverage-induced distributional ambiguity. A validity
guarantee on the prediction sets  defines a class of data-generating distributions $P$ consistent with it, and one chooses the policy that maximizes the worst-case risk-averse objective. 
The question of this section is therefore: which counterfactual validity notion
makes a set-induced policy robust-optimal?

Unlike the standard prediction problem, the coverage notion for counterfactual decisions is not straightforward since the realized outcome is coupled with the action to be determined. 
A direct extension of the standard marginal coverage is \emph{per-action marginal coverage}:
\@\label{eq:per_action}
P(Y(a)\in C(X,a))\ge 1-\alpha\quad\text{for each }a\in\cA,
\@
requiring each set to individually cover its potential outcome with probability $1-\alpha$. This has been the primary focus in predictive inference in counterfactual (causal) problems~\citep{lei2021conformal,jin2023sensitivity,yin2024conformal}. 

However, the per-action coverage which constrains each set in isolation appears insufficient for decision-making: it does not by itself determine whether the set for the \emph{chosen} action covers the \emph{realized} outcome.  That is,~\eqref{eq:per_action} alone does not imply whether, for a policy $\pi$ under consideration, 
\@\label{def:cov_pi}
\PP\big(  Y(\pi(X))\in C(X,\pi(X))\big)\geq 1-\alpha.
\@
It is therefore unclear how acting based on prediction sets satisfying~\eqref{eq:per_action} may be justified. 
Furthermore, even though a ``universal'' coverage holds---that is,~\eqref{def:cov_pi} holds for all policy $\pi$---the decision-maker may have to compare a collection of \emph{stochastically dependent} prediction sets $\{C(X,\pi(X))\}_{\pi\in\Pi}$, which may not necessarily lead to coverage for the \emph{chosen} decision-induced outcome. 

We address this question by identifying the notion of coverage that justifies $\pi_{\RA}(\cdot;C)$. 
We define the distributional uncertainty set (omitting the dependence on $C$ in $\pi_{\RA}$ for readability)
\@\label{eq:def_P_set}
\cF(C) = \Big\{P=\{P_a(x,y)\}_{a\in \cA}\colon P\big(Y(\pi_\RA(X)) \in C(X,\pi_\RA(X)) \big) \ge 1-\alpha \Big\},
\@
and call the corresponding coverage requirement as \emph{policy-coupled coverage} (PCC for short). It requires the prediction sets corresponding to the action $\pi_{\RA}(X)$ to cover the realized outcome with high probability. This guarantee is decision-aware---and self-referential---in that the sets must cover the realized outcome induced by them. To the best of our knowledge, existing finite-sample methods do not provide this guarantee; we return to this construction problem in Section~\ref{sec:alg}.

Theorem~\ref{thm:op-decision} connects the coverage notion, max--min decision rule, and utility objective in the decision-theoretic sense. Its proof is in Appendix~\ref{app:subsec_op_decision}.
\begin{theorem}\label{thm:op-decision}
  Let $\cF(C)$ be defined in~\eqref{eq:def_P_set}. Then  
  \$
  \argmax_{\pi \in \Pi} \inf_{P \in \cF(C)} \nu(\pi,P)= \pi_\RA(\cdot;C)
  \$
\end{theorem}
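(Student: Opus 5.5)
We will prove the statement by a saddle-point style argument with two matching bounds, both pinned to the value $V^\star:=\EE[\nu_\RA(X;C)]$. Here the expectation is over the marginal of $X$. Throughout, we take the $X$-marginal to be fixed across $\cF(C)$, or at least we carry $\EE_P[\nu_\RA(X;C)]$ through for each $P$. If the $X$-marginal is left free, the worst case is taken jointly, and we will check that every step below is pointwise in $x$, so the argument goes through either way.

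\emph{Step 1 (lower bound for $\pi_\RA$).} Fix any $P\in\cF(C)$ and take the candidate certificate $\eta(x)=\nu_\RA(x;C)$, which is at most $u_{\max}$ by the empty-set convention. On the event $\{Y(\pi_\RA(X))\in C(X,\pi_\RA(X))\}$ we have
\[
u(\pi_\RA(X),Y(\pi_\RA(X)))\ge \inf_{y\in C(X,\pi_\RA(X))}u(\pi_\RA(X),y)=\nu_\RA(X;C).
\]
This event has $P$-probability at least $1-\alpha$ by the definition of $\cF(C)$, so $\eta$ is feasible in~\eqref{eq:nu}. Hence $\nu(\pi_\RA,P)\ge \EE_P[\nu_\RA(X;C)]$ for every $P\in\cF(C)$, and the infimum inherits this bound.

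\emph{Step 2 (upper bound for any policy).} For an arbitrary $\pi\in\Pi$ we construct an adversarial $P^\pi\in\cF(C)$ with $\nu(\pi,P^\pi)\le \EE[\nu_\RA(X;C)]$. Pointwise in $x$, we let $Y(\pi_\RA(x))$ be (nearly) a minimizer of $u(\pi_\RA(x),\cdot)$ over $C(x,\pi_\RA(x))$; if that set is empty, we take any value. This makes $P^\pi\in\cF(C)$, with the coverage holding with probability one whenever the set is nonempty.

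For $a=\pi(x)\neq\pi_\RA(x)$, we let $Y(a)$ be a near-minimizer of $u(a,\cdot)$ over $C(x,a)$. Since $\pi_\RA$ is the argmax, this gives a utility of at most $\nu_\RA(x;C)$, up to $\epsilon$. Potential outcomes are separate coordinates, so these choices do not interfere with the coverage constraint. When $\pi(x)=\pi_\RA(x)$, the utility is again about $\nu_\RA(x;C)$.

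Under $P^\pi$, the realized utility of $\pi$ is therefore at most $\nu_\RA(X;C)+\epsilon$ almost surely. Any feasible certificate must then satisfy $\eta(X)\le \nu_\RA(X;C)+\epsilon$ on a set of probability at least $1-\alpha$. This alone does not bound $\EE[\eta]$, because $\eta$ could be large (up to $u_{\max}$) on the remaining $\alpha$-mass.

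\emph{Step 3 (main obstacle).} To close this $\alpha$-gap, we will spend the coverage slack adversarially. We will not cover with probability one; instead we will randomize $P^\pi$ so that with probability exactly $\alpha$, independently of $X$, all outcomes are sent to values of minimal utility $\inf_y u(a,y)$. That value is at most $\nu_\RA$, and coverage $1-\alpha$ is still preserved.

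A feasible $\eta$ then needs $u\ge\eta$ on a $(1-\alpha)$-probability event. The cleanest route to the bound is to make the low-utility event independent of $X$ and to check that $\EE[\eta]\le \EE[\nu_\RA]$ via a quantile argument. We expect this to require care, and possibly a reformulation of $\nu$ in which the optimal certificate is characterized as a $(1-\alpha)$-quantile-type object. If the exact equality fails, we will settle for showing that $\inf_{P\in\cF(C)}\nu(\pi,P)\le \inf_{P\in\cF(C)}\nu(\pi_\RA,P)$ by using the same adversarial $P$ for both policies, which already suffices for the argmax claim.

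Letting $\epsilon\to0$, we combine the bounds to get $\inf_{P}\nu(\pi,P)\le\inf_P\nu(\pi_\RA,P)$ for every $\pi$, so $\pi_\RA$ attains the argmax. Ties in the argmax of~\eqref{eq:pi_ra} are handled by reading the statement as set membership, or up to $X$-null modifications.
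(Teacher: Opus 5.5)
There is a genuine gap, and it is more than the technical obstacle you flag in Step~3. The common value you aim for, $V^\star=\EE[\nu_{\RA}(X;C)]$, is the wrong target. Under the fixed-$X$-marginal reading you adopt, the theorem is actually false, so no choice of adversary can rescue Steps~2--3.

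Here is a counterexample. Take $\alpha<1/2$, $X\sim\mathrm{Unif}[0,1]$, $\cY=[0,1]$, $u_{\max}=1$, $u(a_1,y)\equiv 1/2$ and $u(a_2,y)=y$. Set $C(x,a_1)=\cY$ and $C(x,a_2)=[v,1]$ with $1/2<v<(1+\alpha)/2$. Then $\pi_{\RA}\equiv a_2$ and $\nu_{\RA}\equiv v$.

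The constant policy $a_1$ has realized utility $1/2$ surely under every $P$. So the certificate equal to $1$ on an $X$-set of mass $\alpha$ and $1/2$ elsewhere is feasible, giving $\nu(a_1,P)=(1+\alpha)/2$ for all $P$.

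Now take the law in which $Y(a_2)=v$ with probability $1-\alpha$ and $Y(a_2)=0$ with probability $\alpha$, independently of $X$. It lies in $\cF(C)$. Let $\mu$ be the mass of $\{0<\eta\le v\}$ for a feasible $\eta$. Feasibility forces the mass of $\{\eta>v\}$ to be at most $\alpha(1-\mu)$, so $\EE[\eta]\le v\mu+\alpha(1-\mu)\le v$. Hence $\nu(a_2,P)=v<(1+\alpha)/2$.

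So a safe action keeps its $\alpha$-slack however you perturb outcomes; your Step~3 randomization leaves a constant-utility action untouched. Your fallback does not help either. Using one adversarial $P^\pi$ for both policies gives at best $\inf_P\nu(\pi,P)\le\nu(\pi_{\RA},P^\pi)$, which is not a bound by $\inf_P\nu(\pi_{\RA},P)$. The claim that every step is pointwise in $x$ is also where things break. The constraint defining $\nu(\pi,P)$ is a marginal probability over $X$, and that is exactly the source of the $\alpha$-gap.

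The missing idea is that $\cF(C)$ constrains only a coverage probability and leaves the $X$-marginal free. Let $S=\{x:C(x,\pi_{\RA}(x))\neq\emptyset\}$ and pick $x_\delta\in S$ with $\nu_{\RA}(x_\delta)\le\inf_{S}\nu_{\RA}+\delta$. Let $P$ be a point mass at $X=x_\delta$, with:
\begin{itemize}
\item $Y(\pi_{\RA}(x_\delta))\in C(x_\delta,\pi_{\RA}(x_\delta))$;
\item $Y(\pi(x_\delta))$ an $\varepsilon$-minimizer of $u(\pi(x_\delta),\cdot)$ over $C(x_\delta,\pi(x_\delta))$, taken as the same point if the two actions coincide, and arbitrary if that set is empty.
\end{itemize}
This $P$ is in $\cF(C)$. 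The certificate event is now deterministic, so probability at least $1-\alpha>0$ forces it to hold surely. Therefore $\nu(\pi,P)=u(\pi(x_\delta),Y(\pi(x_\delta)))\le\nu_{\RA}(x_\delta)+\varepsilon$, which gives $\inf_{P\in\cF(C)}\nu(\pi,P)\le\inf_{x\in S}\nu_{\RA}(x)$ for every $\pi$.

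Your Step~1 then closes the argument after one more line: $\EE_P[\nu_{\RA}(X;C)]\ge\inf_{x\in S}\nu_{\RA}(x)$, using $\nu_{\RA}=u_{\max}$ off $S$. This is the paper's proof, and the common value is $\inf_{x\in S}\nu_{\RA}(x)$, not an expectation.
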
 

Theorem~\ref{thm:op-decision} shows that, if the only information available about the underlying distribution is that it belongs to \(\mathcal F(C)\), then  \(\pi_{\mathrm{RA}}(\cdot;C)\) maximizes the worst-case
risk-averse objective over that ambiguity class. In this sense, policy-coupled
coverage makes the prediction sets sufficient for deriving the minimax-optimal decision rule for a \emph{given} collection of prediction sets.

This result justifies the following pipeline: for a new instance $X$ with prediction sets $\{C(X,a)\}_{a\in \cA}$ obeying the PCC, the decision-maker chooses the action $\pi_{\RA}(X)=\pi_{\RA}(X;C)$;  
due to the PCC,  
\@\label{eq:nu_certificate}
\PP\big( u(\pi_{\RA}(X); Y(\pi_{\RA}(X))) \geq \nu_{\RA}(X;C)\big) \geq 1-\alpha.
\@
That is, $\nu_{\RA}(X;C)$ is now a valid \emph{utility certificate} (c.f.~definition in~\eqref{eq:cert}) for the policy $\pi_{\RA}(\cdot;C)$.

To summarize, this section resolves the fixed-set decision-making question: once a collection of prediction sets is given, policy-coupled validity justifies acting via \(\pi_{\mathrm{RA}}\). The remaining question is the set-level optimality:  Which collections \(\{C(x,a)\}_{a\in\cA}\) are most useful to a decision-maker among all sets satisfying the PCC? Can they match the value of direct policy-certificate optimization?


\section{Optimal prediction sets for counterfactual decisions}
\label{sec:opt}

This section completes the theoretical framework by studying the questions above.  
We shall show that prediction sets are a lossless interface for counterfactual decision-making: optimizing $\EE[\nu_{\RA}(X;C)]$ with respect to $C$ attains the same optimal value as direct risk-averse optimization over policies and utility certificates. We then leverage an equivalent form of the first optimization problem to establish the explicit form of the optimizer $C^*$ which guides practical construction later on.

\subsection{Prediction sets as lossless interface: RA-DPO and two versions of RA-CPO}
\label{subsec:opt_equiv}

Before studying the prediction sets that are most useful to the decision-maker, let us recall the~\textnormal{\ref{eq:def_ra_dpo}} problem in Section~\ref{sec:prelim}. RA-DPO optimizes directly over policies and utility certificates. We show that optimizing prediction sets under policy-coupled coverage is lossless for this objective: optimally designed prediction sets attain the same objective value as RA-DPO.

Consider the following prediction-set optimization problem, named after~\cite{kiyani2025decision}: 
\begin{equation}
\label{eq:def_ra_cpo_1}\tag{RA-CPO-1}
\begin{aligned}
\mathop{\text{Maximize}}_{C(\cdot,\cdot)}~~ & \EE[\nu_\RA(X;C)] \\
\text{subject to} ~~ & \PP(Y(\pi_\RA(X;C)) \in C(X,\pi_\RA(X;C)) ) \ge 1-\alpha .
\end{aligned}
\end{equation}
RA-CPO-1 seeks to maximize the expected utility certificate for the policy $\pi_{\RA}$ induced by the prediction sets subject to the PCC. 
Compared with RA-DPO which optimizes over all policies and their associated utility certificates, 
a key distinction is that RA-CPO-1 optimizes over a seemingly smaller range: the policies and utility certificates considered must be induced by prediction sets obeying the PCC. In words, RA-CPO-1 uses prediction sets as an interface for risk-averse counterfactual decisions.
Somewhat surprisingly, the prediction-set setup is equivalent to the full-range optimization in RA-DPO. Theorem~\ref{thm:equi-dpo} formalizes the idea, whose proof is in Appendix~\ref{app:subsec_dpo_cpo}. 

\begin{theorem}[Equivalence of \textnormal{\ref{eq:def_ra_cpo_1}} and~\textnormal{\ref{eq:def_ra_dpo}}]\label{thm:equi-dpo}
  From any optimal solution $(\pi^*,\nu^*)$ to RA-DPO, there exists an optimal solution $\{C^*_1(X,a)\}_{a\in \cA}$ to RA-CPO-1  so that $\EE[\nu^*(X)]=\EE[\nu_\RA(X;C^*_1)]$. Vice versa, for any optimal solution $\{C^*_1(X,a)\}_{a\in \cA}$ to RA-CPO-1, there exists  an optimal solution $(\pi^*,\nu^*)$ to RA-DPO, such that $\EE[\nu^*(X)]=\EE[\nu_\RA(X;C_1^*)]$.
\end{theorem}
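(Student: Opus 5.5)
The plan is to show that the optimal values of the two problems coincide, $\OPT_{\mathrm{CPO1}}=\OPT_{\mathrm{DPO}}$, by proving an inequality in each direction through explicit constructions. Once the values are equal, any solution attaining one value, mapped through the corresponding construction, gives a feasible point of the other problem with the same objective. That point is therefore optimal for the other problem, which yields both claims of Theorem~\ref{thm:equi-dpo}.

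\emph{Step 1 (CPO-1 $\le$ DPO).} Let $C$ be feasible for RA-CPO-1 and set $\pi=\pi_\RA(\cdot;C)$, $\eta=\nu_\RA(\cdot;C)$. On the event $\{Y(\pi(X))\in C(X,\pi(X))\}$ we have
\[
u(\pi(X),Y(\pi(X)))\ge \inf_{y\in C(X,\pi(X))}u(\pi(X),y)=\nu_\RA(X;C).
\]
The policy-coupled coverage constraint therefore gives~\eqref{eq:cert}, exactly as in~\eqref{eq:nu_certificate}. We also need $\eta\le u_{\max}$. This holds because each infimum is at most $u_{\max}$, and the empty-set convention sets it equal to $u_{\max}$. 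Hence $(\pi,\eta)$ is feasible for RA-DPO with the same objective.

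\emph{Step 2 (DPO $\le$ CPO-1).} Let $(\pi^*,\nu^*)$ be optimal (or simply feasible) for RA-DPO. Define the sets
\[
C(x,\pi^*(x))=\{y: u(\pi^*(x),y)\ge \nu^*(x)\}, \qquad C(x,a)=\cY \ \text{ for } a\ne\pi^*(x).
\]
For the chosen action, $\inf_{y\in C(x,\pi^*(x))}u(\pi^*(x),y)\ge\nu^*(x)$; if the set is empty the infimum equals $u_{\max}\ge\nu^*(x)$. For the other actions the value is $\inf_y u(a,y)$, which may exceed $\nu^*(x)$, so the max--min rule might pick a different action. The remedy is to show that this cannot hurt. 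We have $\nu_\RA(x;C)\ge\nu^*(x)$ pointwise, so the objective can only increase. The issue is coverage under the new policy $\pi_\RA$. At any $x$ where $\pi_\RA(x)=a\ne\pi^*(x)$, the set $C(x,a)=\cY$ trivially covers $Y(a)$. At any $x$ where $\pi_\RA(x)=\pi^*(x)$, coverage holds exactly when $u(\pi^*(x),Y(\pi^*(x)))\ge\nu^*(x)$. Thus
\[
\PP\bigl(Y(\pi_\RA(X))\in C(X,\pi_\RA(X))\bigr)\ge \PP\bigl(u(\pi^*(X),Y(\pi^*(X)))\ge\nu^*(X)\bigr)\ge1-\alpha,
\]
so $C$ is feasible for RA-CPO-1 with $\EE[\nu_\RA(X;C)]\ge\EE[\nu^*(X)]$.

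Steps 1 and 2 together give $\OPT_{\mathrm{CPO1}}=\OPT_{\mathrm{DPO}}$. For the first claim of the theorem, apply Step 2 to an optimal $(\pi^*,\nu^*)$. The resulting $C^*_1$ is feasible with value at least $\OPT_{\mathrm{DPO}}$, hence optimal for RA-CPO-1. Its value then equals $\OPT_{\mathrm{DPO}}=\EE[\nu^*(X)]$. For the converse, apply Step 1 to an optimal $C^*_1$. This gives a feasible pair $(\pi_\RA(\cdot;C_1^*),\nu_\RA(\cdot;C_1^*))$ whose value equals $\OPT_{\mathrm{CPO1}}=\OPT_{\mathrm{DPO}}$, so the pair is optimal for RA-DPO.

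The main obstacle is the self-referential switch in Step 2: the max--min rule may abandon $\pi^*$ wherever the trivial sets $\cY$ yield a larger worst-case utility. The argument handles this by observing that a switch only ever moves to a full set, which covers with probability one. Two routine side points need a remark. First, measurability of the constructed sets and of $\nu_\RA$ should be noted. Second, ties in the $\argmax$ defining $\pi_\RA$ must be handled; I would fix any measurable tie-breaking rule, and neither step depends on which rule is chosen.
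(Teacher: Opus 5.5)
Your proof is correct and follows essentially the same route as the paper's appendix proof. It uses the same two constructions: $C(x,\pi^*(x))=\{y: u(\pi^*(x),y)\ge\nu^*(x)\}$ with full sets $\cY$ for the other actions, and $(\pi_\RA(\cdot;C_1^*),\nu_\RA(\cdot;C_1^*))$ in the reverse direction, plus the same observation that any switch of the max--min rule lands on a full set that covers surely. Your explicit comparison of optimal values over feasible points and your check that $\nu_\RA\le u_{\max}$ make the final optimality conclusion somewhat more careful than the paper's write-up.
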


Theorem~\ref{thm:equi-dpo} establishes prediction sets with PCC as a lossless interface for counterfactual decision-making. 
For a decision-maker who is interested in maximizing the risk objective $\nu(\pi;\PP)$ over all possible policies, it suffices to consider the induced policy of an optimal prediction set with PCC. Formally, for any optimal solution $\{C_1^*(x,a)\}_{a\in \cA}$, by our discussion around~\eqref{eq:nu_certificate},  the induced policy $\pi_{\RA}(\cdot;C_1^*)$ has a valid certificate $\nu_\RA(X;C^*_1)$. Therefore, by definition~\eqref{eq:nu}, 
\$
\nu(\pi_{\RA}(\cdot;C_1^*);\PP)\geq \EE[\nu_\RA(X;C^*_1)] = \EE[\nu^*(X)],
\$
the optimal objective of RA-DPO. As such, $\pi_{\RA}(\cdot;C_1^*)$ must be an optimal solution to RA-DPO. 
In other words, finding the optimally designed prediction sets suffices for direct risk-averse optimization. This is the focus of Section~\ref{sec:opt_form}.

While RA-CPO-1 provides a sufficient summary, the self-referential nature of the PCC makes it less straightforward to explicitly find the optimal solution. 
It turns out helpful to consider another RA-CPO-type problem, which we call RA-CPO-2: 
\begin{equation}
\tag{RA-CPO-2}\label{eq:def_ra_cpo_2}
\begin{aligned}
\mathop{\text{Maximize}}_{C(\cdot,\cdot)}~~ & \EE[\nu_\RA(X;C)]\\
\text{subject to} ~~ & \PP(Y(\pi(X)) \in C(X,\pi(X)) ) \ge 1-\alpha ~~\text{for any } \pi \in \Pi.
\end{aligned}
\end{equation}
The two RA-CPO-type problems pursue the same objective function, yet RA-CPO-2 imposes a stronger coverage condition: the prediction sets need to provide ``universal'' policy-induced coverage for any policy $\pi\in\Pi$, while RA-CPO-1 does so for $\pi_{\RA}$ only.  
However, RA-CPO-1 and RA-CPO-2 are equivalent for the risk objective.  The proof of Theorem~\ref{thm:equi-cpo} is in Appendix~\ref{app:subsec_equiv_cpo}. 

\begin{theorem}[Equivalence of \textnormal{\ref{eq:def_ra_cpo_1}} and~\textnormal{\ref{eq:def_ra_cpo_2}}] \label{thm:equi-cpo} 
  For any optimal solution $\{C^*_1(X,a)\}_{a\in \cA}$ of RA-CPO-1, 
  there exists an optimal solution $\{C^*_2(X,a)\}_{a\in \cA}$ to RA-CPO-2 so that $\EE[\nu_\RA(X,C_1^*)]=\EE[\nu_\RA(X,C_2^*)]$. 
  Moreover, any optimal solution $\{C^*_2(X,a)\}_{a\in \cA}$ of RA-CPO-2 must also be an optimal solution to RA-CPO-1.  
\end{theorem}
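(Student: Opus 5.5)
The plan is to compare the optimal values $V_1$ of RA-CPO-1 and $V_2$ of RA-CPO-2. First, RA-CPO-2 has a smaller feasible set: taking $\pi=\pi_\RA(\cdot;C)$ in its universal constraint gives the PCC. Since the objectives coincide, $V_2\le V_1$. Once we show $V_1\le V_2$, both claims follow.

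For the second claim, let $C_2^*$ be optimal for RA-CPO-2. It is feasible for RA-CPO-1 and attains $V_2=V_1$, so it is optimal there too. For the first claim, any optimal $C_2^*$ has $\EE[\nu_\RA(X;C_2^*)]=V_2=V_1=\EE[\nu_\RA(X;C_1^*)]$, which is the required equality. If attainment of $V_2$ is in doubt, the construction below gives an explicit attaining $C_2^*$.

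The main step is $V_1\le V_2$: from any $C_1$ feasible for RA-CPO-1 we build a $C_2$ feasible for RA-CPO-2 with $\EE[\nu_\RA(X;C_2)]\ge\EE[\nu_\RA(X;C_1)]$. Write $\pi_1=\pi_\RA(\cdot;C_1)$ and $\nu_1(x)=\nu_\RA(x;C_1)$.

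\begin{itemize}
\item \textbf{Enlarge the off-policy sets.} For $a\ne\pi_1(x)$ set $C_2(x,a)=\cY$.
\item \textbf{Replace the chosen set by a sublevel set.} For $a=\pi_1(x)$ set $C_2(x,a)=\{y: u(a,y)\ge \nu_1(x)\}$. This contains $C_1(x,\pi_1(x))$ by definition of the infimum, and the infimum of $u(\pi_1(x),\cdot)$ over it is still at least $\nu_1(x)$.
\end{itemize}

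Enlarging the off-policy sets can only lower $\inf_{y\in C_2(x,a)}u(a,y)$ for $a\ne\pi_1(x)$, so $\pi_\RA(x;C_2)$ can still be taken to be $\pi_1(x)$ and $\nu_\RA(x;C_2)\ge\nu_1(x)$. If other actions tie at the top, we need a tie-breaking convention in the argmax.

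Next we check universal coverage. For any policy $\pi$, the event $Y(\pi(X))\in C_2(X,\pi(X))$ holds automatically whenever $\pi(X)\neq\pi_1(X)$. On $\{\pi(X)=\pi_1(X)\}$, it is implied by $Y(\pi_1(X))\in C_1(X,\pi_1(X))$. Hence
\[
\PP\bigl(Y(\pi(X))\in C_2(X,\pi(X))\bigr)\ \ge\ \PP\bigl(Y(\pi_1(X))\in C_1(X,\pi_1(X))\bigr)\ \ge\ 1-\alpha .
\]
This gives $V_1\le V_2$. Applied to $C_1^*$, the same construction yields a $C_2^*$ that is optimal for RA-CPO-2 and has the same value.

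I expect the main obstacle to be the technical care in this construction rather than the inequalities. Three points need attention.

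\begin{itemize}
\item \textbf{Degenerate actions.} If $\inf_{y}u(a,y)=\nu_1(x)$ for some $a\ne\pi_1(x)$, setting $C_2(x,a)=\cY$ may create a tie. We handle this with a consistent tie-breaking rule, or by noting that any maximizer has the same $\nu_\RA$ value.
\item \textbf{Empty sets.} The convention $\inf_\emptyset u=u_{\max}$ matters. If $C_1(x,\pi_1(x))=\emptyset$, then $\nu_1(x)=u_{\max}$ and the sublevel set $\{y:u(a,y)\ge u_{\max}\}$ may be empty, so its infimum is again $u_{\max}$ by convention. The coverage comparison still holds because the event was already empty for $C_1$.
\item \textbf{Measurability.} $\pi_1$ and $C_2$ must be measurable; we take this as given in the paper's setting.
\end{itemize}
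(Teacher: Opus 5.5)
Your proof is correct, but it takes a different route from the paper's.

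\textbf{The paper's construction.} It records the conditional coverage $\beta(x)$ of the chosen set $C_1^*(x,\pi_\RA(x;C_1^*))$ given $X=x$. It keeps $C_1^*(x,a)$ for every action whose conditional coverage at $x$ is at least $\beta(x)$, and replaces only the remaining sets by $\cY$. Universal coverage then follows from a pointwise conditional bound: every action is covered with conditional probability at least $\beta(x)$, and $\EE[\beta(X)]\ge 1-\alpha$. The value $\nu_\RA$ is preserved exactly at every $x$.

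\textbf{Your construction.} You set every non-chosen set to $\cY$. You then verify universal coverage by the event inclusion $\{Y(\pi_1(X))\in C_1(X,\pi_1(X))\}\subseteq\{Y(\pi(X))\in C_2(X,\pi(X))\}$, valid for every policy $\pi$. This needs no conditional coverage probabilities at all.

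Your $C_2$ is exactly the set built in the RA-DPO $\to$ RA-CPO-1 step of the proof of Theorem~\ref{thm:equi-dpo}, applied to the pair $(\pi_\RA(\cdot;C_1),\nu_\RA(\cdot;C_1))$. So your argument shows that this construction already satisfies the universal constraint. The chain RA-CPO-1 $\to$ RA-DPO $\to$ RA-CPO-2 therefore closes with a single map.

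\textbf{What each buys.} Your route is more elementary and unifies the two equivalence theorems. The paper's route is minimally invasive: it retains every off-policy set that is already well covered, rather than discarding all off-policy information.

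\textbf{Two simplifications.}
\begin{itemize}
\item Replacing the chosen set by the sublevel set is harmless but unnecessary. Keeping $C_1(x,\pi_1(x))$ works equally well, since the coverage inclusion and the value bound both still hold.
\item The tie-breaking worry is moot. The RA-CPO-2 constraint quantifies over all policies and never refers to $\pi_\RA(\cdot;C_2)$. The bound $\nu_\RA(x;C_2)\ge\inf_{y\in C_2(x,\pi_1(x))}u(\pi_1(x),y)\ge\nu_1(x)$ holds whichever maximizer is selected. Conversely, for an optimal $C_2^*$ of RA-CPO-2, the universal quantifier covers $\pi_\RA(\cdot;C_2^*)$ under whatever tie-breaking rule defines it.
\end{itemize}
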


The equivalence of RA-CPO-1 and RA-CPO-2 illustrates that $\pi_{\RA}(x;C)$ is the policy that ``scrutinize'' the coverage ability of $C$ the most from a risk-averse decision-making perspective.  

Together, Theorems~\ref{thm:equi-dpo} and~\ref{thm:equi-cpo} show that  RA-DPO, RA-CPO-1 and RA-CPO-2 are all equivalent. 
Following the discussion below Theorem~\ref{thm:equi-dpo}, any optimal solution $\{C_2^*(x,a)\}$ to RA-CPO-2 must also be optimal for RA-CPO-1, which is further an optimal solution to RA-DPO. 
In words, to solve RA-DPO, it suffices to find the optimal prediction sets that maximize $\EE[\nu_{\RA}(X;C)]$ while obeying the coverage guarantees imposed in RA-CPO-1 or RA-CPO-2. 
We shall see that RA-CPO-2 is convenient for deriving the explicit form of the optimal solutions (Section~\ref{sec:opt_form}), while the coverage constraint of RA-CPO-1 is arguably easier to achieve in finite samples (Section~\ref{sec:alg}).

\paragraph{Summary of theoretical results.} We have thus far completed the decision-theoretical framework of risk-averse counterfactual decisions, summarized in Figure~\ref{fig:vis}. Our results reveal the key role of policy-coupled coverage: distinct from \cite{kiyani2025decision} where the usual marginal coverage $\PP(Y\in  {C}(X))\geq 1-\alpha$ suffices, here, we need stronger coverage guarantees for the prediction sets to be optimal in guiding decision making. The policy-coupled coverage justifies acting based on prediction sets. In addition, optimally designed prediction sets with such coverage attain the same optimal value as direct risk-averse optimization for a risk-averse decision-maker.

\subsection{Optimal prediction sets for risk-averse decisions}
\label{sec:opt_form}

Finally, we provide an explicit form of the (population-level) optimal prediction sets   by solving the~\ref{eq:def_ra_cpo_2} problem. 

As preparation for solving RA-CPO-2, we define some key quantities.
For a random variable $Z$, we define its ``upper quantile'' as 
\@\label{eq:def_upp_qt}
\uquant_\alpha[Z]:= \sup \{z \in \RR \given \PP(Z\le z) \le \alpha\},\qquad\alpha \in [0,1),
\@
and $\uquant_1[Z]:= u_{\max}$. This definition slightly differs from the usual one, i.e.,  $\quant_\alpha[Z] := \inf\{z \in \RR \given \PP(Z\leq z) \geq \alpha\}$, and one can prove that $\uquant_\alpha[Z] \ge \quant_\alpha[Z]$. 
The right-continuity of the function $q(\alpha)= \sup \{z \in \RR \given \PP(Z\le z) \le \alpha\}$ makes it convenient to define optimal prediction sets; see a discussion in Appendix~\ref{app:subsec_right_continuity}. 
In addition, we define the following quantities:
\$
&\gamma(x,t,a)=\uquant_{1-t}[u(a,Y(a)) \given X=x],\\
&\theta(x,t)=\max_{a \in \cA}\,\uquant_{1-t}[u(a,Y(a)) \given X=x],\\
&a(x,t)=\argmax_{a \in \cA} \,\uquant_{1-t}[u(a,Y(a)) \given X=x].
\$
Intuitively, $\gamma(x,t,a)$ is the upper conditional quantile of the utility when taking the action $a\in \cA$, and $\theta(x,t)$ is the optimal quantile over actions, which is achieved by the action $a(x,t)$.

Proposition~\ref{thm:optimal-set}, which considers a conditional problem, serves as an instrument for defining the optimal prediction sets. 
A more general result is in Proposition~\ref{thm:random-optimal-set} with proof in Appendix~\ref{app:subsec_optimal_set}.  

\begin{prop}\label{thm:optimal-set}
  For any fixed $x\in \cX$ and  $t \in [0,1]$, among all the prediction sets $\{C(x,a)\}_{a\in \cA}$ obeying $\min_{a\in \cA} \PP(Y(a)\in C(x,a) \given X=x) \ge t$, the following prediction sets maximize   $\nu_\RA(x;C)$:
  \$
  C^*(x,a)=\{y \in \cY \given u(a,y) \ge \gamma(x,t,a)\} ~~\text{for every action}~~ a \in \cA.
  \$
  Further, we have $\nu_\RA(x;C^*)=\theta(x,t)$.
\end{prop}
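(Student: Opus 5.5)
The proof has two halves: an upper bound $\nu_\RA(x;C)\le\theta(x,t)$ for every feasible collection $C$, and a check that $C^*$ is feasible with $\nu_\RA(x;C^*)=\theta(x,t)$. Since $\nu_\RA(x;C)=\max_{a}\inf_{y\in C(x,a)}u(a,y)$, everything reduces to a per-action statement. Write $U_a:=u(a,Y(a))$ conditional on $X=x$. We want to show:

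(i) $\inf_{y\in C(x,a)}u(a,y)\le\gamma(x,t,a)$ whenever $\PP(Y(a)\in C(x,a)\given X=x)\ge t$;

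(ii) $C^*(x,a)$ has conditional coverage at least $t$ and $\inf_{y\in C^*(x,a)}u(a,y)\ge\gamma(x,t,a)$.

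Taking the max over $a$ in (i) gives the upper bound. Taking the max in (ii) gives $\nu_\RA(x;C^*)\ge\theta(x,t)$, which combined with the upper bound yields equality and optimality.

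For (i), let $m:=\inf_{y\in C(x,a)}u(a,y)$. On the event $\{Y(a)\in C(x,a)\}$ we have $U_a\ge m$, so $\PP(U_a< m\given X=x)\le 1-t$. For every $z<m$ this gives $\PP(U_a\le z\given X=x)\le 1-t$, so each such $z$ lies in the set defining $\uquant_{1-t}$. Hence $\gamma(x,t,a)\ge z$ for all $z<m$, i.e. $\gamma\ge m$.

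Some edge cases need care. If $C(x,a)=\varnothing$, the convention gives $m=u_{\max}$, and feasibility forces $t=0$. Then $\uquant_1=u_{\max}$ by definition, so (i) holds with equality. If $m=-\infty$, there is nothing to show. For $t=0$ in general, $\gamma=u_{\max}$, and the bound $m\le u_{\max}$ holds trivially.

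For (ii), the infimum bound is immediate from the definition of $C^*$. If $t=0$, then $\gamma=u_{\max}$, $C^*$ may be empty or not, and the infimum is still at least $u_{\max}$ by the convention, with coverage $\ge0$ trivial.

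The main obstacle is coverage for $t\in(0,1]$, namely showing $\PP(U_a\ge\gamma\given X=x)\ge t$, equivalently $\PP(U_a<\gamma\given X=x)\le 1-t$, where $\gamma=\sup\{z:\PP(U_a\le z\given X=x)\le 1-t\}$. Boundedness of $u$ makes this set nonempty and $\gamma$ finite; for $t=1$ the set $\{z:\PP(U_a\le z)\le 0\}$ is nonempty as well, since $u$ is bounded below. Take $z_n\uparrow\gamma$ with $z_n<\gamma$. Each $z_n$ lies below some element of the set, and since CDFs are monotone, $\PP(U_a\le z_n\given X=x)\le 1-t$. Continuity from below then gives $\PP(U_a<\gamma\given X=x)=\lim_n\PP(U_a\le z_n\given X=x)\le 1-t$.

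This is exactly why the upper quantile is used: it yields the strict-inequality bound that makes the set $\{u\ge\gamma\}$ have mass at least $t$ even at atoms, where the usual lower quantile could fail. Finally, I would note that $C^*$ is allowed to include ties and that $a(x,t)$ attains $\theta$, so $\pi_\RA(x;C^*)$ may be taken to be $a(x,t)$.
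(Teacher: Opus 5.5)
Your proposal is correct and follows essentially the same route as the paper. Both arguments obtain the per-action bound $\inf_{y\in C(x,a)}u(a,y)\le\gamma(x,t,a)$ from $\PP(U_a<m\mid X=x)\le 1-t$, show feasibility of $C^*$ via $\PP(U_a<\gamma(x,t,a)\mid X=x)\le 1-t$ (your sequence $z_n\uparrow\gamma$ is the paper's $\epsilon$-argument), and get $\nu_\RA(x;C^*)\ge\theta(x,t)$ by construction.

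Your separate treatment of $t=0$ (where $\uquant_1=u_{\max}$ holds by convention, not by the supremum formula) and of empty sets is slightly more careful than the paper's write-up. Also, applying your step (i) to the feasible $C^*$ itself gives $\inf_{y\in C^*(x,a)}u(a,y)=\gamma(x,t,a)$, which justifies your closing remark about $\pi_\RA(x;C^*)$.
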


We now proceed to solve~\ref{eq:def_ra_cpo_2}. For any feasible prediction sets 
$C=\{C(x,a)\}_{a\in\cA}$ in RA-CPO-2, define
\$
t_C(x)=\min_{a\in \cA}\PP(Y(a)\in C(x,a)\given X=x).
\$
Then $t_C$ is feasible for~\ref{eq:def_ra_cpo_2'} below, and the objective value of $C$ is no larger than $\EE_X[\theta(X,t_C(X))]$. Conversely, for any feasible $t:\cX\to[0,1]$, Proposition~\ref{thm:optimal-set} constructs threshold prediction sets
\$
C_t(x,a)=\{y\in\cY\given u(a,y)\ge \gamma(x,t(x),a)\},\qquad a\in\cA,
\$
which are feasible for RA-CPO-2 and attain the same objective value $\EE_X[\theta(X,t(X))]$. RA-CPO-2 is thus equivalent to the following program:
\begin{equation}
\tag{RA-CPO-2'}\label{eq:def_ra_cpo_2'}
\begin{aligned}
\mathop{\text{Maximize}}_{t:\cX \to [0,1]} ~~&\EE_X[\theta(X,t(X))]\\
\text{subject to} ~~& \EE_X[t(X)] \ge 1-\alpha.
\end{aligned}
\end{equation}

In addition, the above arguments imply that for any optimal solution $t^*(x)$ to RA-CPO-2', we can find an optimal solution  to RA-CPO-2 given by 
\@\label{eq:opt}
C^*(x,a)=\{y \in \cY \given u(a,y) \ge \gamma(x,t^*(x),a)\} ~~\text{for every action}~~ a \in \cA.
\@
Here, the function $t(\cdot)$ can be viewed as a conditional coverage assignment rule: the constructed prediction sets have action-wise conditional coverage at least $t^*(x)$, and the constraint in RA-CPO-2' ensures the desired marginal universal coverage. Meanwhile, the optimal objective value of RA-CPO-2' is equal to that of RA-CPO-2, i.e., the expectation of the optimal utility certificate $\theta(X,t^*(X))$.

It then suffices to find the optimal solution $t^*(x)$ to RA-CPO-2'. Define
\@\label{eq:def_g}
g(x,\beta):= \argmax_{s \in [0,1]} ~\{\theta(x,s)+\beta s\}.
\@
Since $\theta(x,t)$ is left-continuous and non-increasing in $t$ (because $\gamma(x,t,a)$ is left-continuous in $t$; c.f.~Appendix~\ref{app:subsec_right_continuity}) and $\beta t$ is continuous in $t$, it is straightforward to see that $g(x,\beta)$ is well-defined.

Theorem~\ref{thm:unified} characterizes the optimal prediction sets, assuming that the maximizer of $\theta(x,s)+\beta s$ over $[0,1]$ is unique. A fully general result that eliminates such conditions via randomization is in Theorem~\ref{thm:seed_unified} with proof in Appendix~\ref{app:subsec_proof_unified_opt}. 

\begin{theorem}\label{thm:unified}
  Suppose the maximizer of $\theta(x,s)+\beta s$ is unique for $P_X$-a.e.~$x$ for all $\beta\geq 0$, then there exists some fixed constant $\beta^*\ge 0$ such that 
  \$
  t^*(x)=g(x,\beta^*)
  \$
  is an optimal solution to RA-CPO-2' and $\EE [g(X,\beta^*)]= 1-\alpha$. Accordingly, the prediction sets 
  \@\label{eq:def_opt_sets}
    C^*(x,a) = \big\{y \in \cY \colon  u(a,y) \ge \gamma(x,g(x,\beta^*),a) \big\} ~~\text{for every action}~~ a \in \cA
  \@
  are an optimal solution to RA-CPO-2 and RA-CPO-1.
\end{theorem}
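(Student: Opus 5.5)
This is a Lagrangian argument for the knapsack-type program RA-CPO-2'. The plan has three steps: show that every pointwise maximizer $g(x,\beta)$ is optimal for its own coverage level, choose $\beta^*$ so that this level equals $1-\alpha$, and then pass to prediction sets using the reductions already in the paper.

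\emph{Step 1 (weak duality).} For any $\beta\ge 0$ and any feasible $t$ (that is, $\EE[t(X)]\ge 1-\alpha$), the definition of $g$ gives pointwise
\[
\theta(x,t(x))+\beta t(x)\le \theta(x,g(x,\beta))+\beta g(x,\beta).
\]
Taking expectations yields
\[
\EE[\theta(X,t(X))]\le \EE[\theta(X,g(X,\beta))]+\beta\big(\EE[g(X,\beta)]-\EE[t(X)]\big).
\]
If $\EE[g(X,\beta)]=1-\alpha$, the bracket is $\le 0$, so $g(\cdot,\beta)$ is feasible and optimal. Measurability of $x\mapsto g(x,\beta)$ I will take from the measurability of $\theta$ and uniqueness of the maximizer, by a standard measurable-selection remark.

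\emph{Step 2 (existence of $\beta^*$).} Let $h(\beta)=\EE[g(X,\beta)]$. The usual exchange argument shows $g(x,\cdot)$ is non-decreasing in $\beta$. Compare the optimality of $s_1=g(x,\beta_1)$ and $s_2=g(x,\beta_2)$ for the two objectives and add the inequalities. This gives $(\beta_2-\beta_1)(s_2-s_1)\ge 0$, so $h$ is non-decreasing.

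At $\beta=0$, $\theta(x,\cdot)$ is non-increasing, so uniqueness forces $g(x,0)$ to be the unique point where $\theta$ attains its maximum. If $h(0)\ge 1-\alpha$, I will still need to check whether $h(0)=1-\alpha$ exactly. If $h(0)>1-\alpha$, then $t=g(\cdot,0)$ is already optimal among all $t$, but the equality claim then requires care. The natural fix is to note that uniqueness together with monotonicity should force $g(x,0)=0$ a.e., so $h(0)=0\le 1-\alpha$.

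As $\beta\to\infty$, $g(x,\beta)\to 1$, since $\theta$ is bounded between $\inf u$ and $u_{\max}$; hence $h(\beta)\to 1$ by dominated convergence.

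\emph{Main obstacle: continuity of $h$.} I will argue via Berge's maximum theorem adapted to the upper semicontinuous setting. The function $s\mapsto\theta(x,s)+\beta s$ is upper semicontinuous in $s$, because it is left-continuous and non-increasing plus continuous. Together with continuity in $\beta$ and uniqueness of the maximizer, this makes the argmax correspondence single-valued with closed graph, hence continuous in $\beta$ for a.e.\ $x$. Dominated convergence then gives continuity of $h$. The intermediate value theorem produces $\beta^*$ with $h(\beta^*)=1-\alpha$, or with $\beta^*=0$ in the boundary case discussed above.

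\emph{Step 3 (back to prediction sets).} By the reduction preceding RA-CPO-2', the threshold sets $C_{t^*}$ with $t^*=g(\cdot,\beta^*)$ are feasible and optimal for RA-CPO-2; these are exactly the sets in \eqref{eq:def_opt_sets}. Feasibility uses Proposition~\ref{thm:optimal-set}: action-wise conditional coverage $\ge t^*(x)$ implies coverage under any $\pi$, and its marginal is $\ge\EE[t^*]=1-\alpha$. Finally, Theorem~\ref{thm:equi-cpo} states that any optimal solution of RA-CPO-2 is also optimal for RA-CPO-1, which completes the proof.
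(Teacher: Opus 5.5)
Your proof is correct, but it takes a different route from the paper's. The paper proves the randomized Theorem~\ref{thm:seed_unified} and specializes it, working on the dual side. It sets $D(\beta)=\EE[m_\beta(X)]-\beta(1-\alpha)$ with $m_\beta(x)=\max_{s\in[0,1]}\{\theta(x,s)+\beta s\}$ and proves weak duality. It obtains a minimizer $\beta^*$ from convexity and coercivity of $D$. It then shows that the one-sided derivatives of $\beta\mapsto m_\beta(x)$ are the endpoints $t_{\mathrm{high}}(x,\beta)$ and $t_{\mathrm{low}}(x,\beta)$ of the maximizer set, and passes them through the expectation by dominated convergence. The first-order condition at $\beta^*$ then gives $1-\alpha\in[\EE t_{\mathrm{low}}(X,\beta^*),\EE t_{\mathrm{high}}(X,\beta^*)]$, and uniqueness collapses this interval to $\EE[g(X,\beta^*)]=1-\alpha$.

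You stay on the primal side instead: Lagrangian sufficiency (your Step~1) plus the intermediate value theorem for $h(\beta)=\EE[g(X,\beta)]$. The two proofs share the same technical core. That core is the closed-graph property of the argmax, which both of you derive from left-continuity and monotonicity of $\theta(x,\cdot)$. Indeed $h=D'+(1-\alpha)$, so your continuity of $h$ amounts to the differentiability of $D$ that the paper establishes.

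Your argument is shorter and avoids subdifferential calculus. It also only uses uniqueness at each fixed $\beta$ for a.e.\ $x$, because the closed-graph step tolerates arbitrary selections at $\beta_n$. So it works under either reading of the quantifiers in the hypothesis. What it does not give you is the general case. Without uniqueness, $h$ jumps and the intermediate value theorem fails; you then need the paper's device of mixing $t_{\mathrm{low}}$ and $t_{\mathrm{high}}$ via an exogenous seed, with $\beta^*$ supplied by coercivity of $D$ rather than by the intermediate value theorem.

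One small simplification: your detour about the case $h(0)\ge 1-\alpha$ is unnecessary. Since $\theta(x,0)=u_{\max}\ge\theta(x,s)$ for all $s$, the point $s=0$ always maximizes at $\beta=0$, so uniqueness gives $h(0)=0<1-\alpha$ immediately. The paper uses the same observation, $t_{\mathrm{low}}(X,0)=0$, in its $\beta^*=0$ case.
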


\vspace{-0.5em}
\paragraph{Summary of optimal solutions.} The optimal prediction sets now connect all the concepts established before, yielding optimal worst-case utility for a risk-averse agent. 
By definition, we have $\inf_{y
\in C^*(x,a)} u(a,y)=\gamma(x,g(x,\beta^*),a)$ (see Appendix~\ref{app:subsec_cont_conv} for a formal proof).
The max--min policy based on $C^*$ is then $\pi^*_\RA(x)=\argmax_{a\in\cA}\gamma(x,g(x,\beta^*),a)$. For this policy, $\eta_{\RA}^*(x):=\max_{a\in \cA}\gamma(x,g(x,\beta^*),a)$ is a valid utility certificate because of the PCC of $C^*$. 
By Theorems~\ref{thm:equi-dpo} and~\ref{thm:equi-cpo}, we know $(\pi_{\RA}^*,\eta_{\RA}^*)$ is the optimal solution to RA-DPO, and $C^*$ is the optimal solution to RA-CPO-1 and RA-CPO-2.


\section{Constructing optimal conformal prediction sets}
\label{sec:alg}

In this section, we present an algorithm for practically constructing the prediction sets. 
Theorem~\ref{thm:unified} motivates the following approach: estimate the function $g(x,\beta)$ defined in~\eqref{eq:def_g}, search for $\beta^*$ through the criterion $\EE[g(X,\beta^*)]=1-\alpha$, and then plug these estimators into~\eqref{eq:def_opt_sets}. 
Throughout this section, we work under the uniqueness assumption in Theorem~\ref{thm:unified}, namely that the maximizer of $s\mapsto \theta(x,s)+\beta s$ is unique almost surely.  
Our algorithm estimates these quantities to approximate the optimal solution while retaining a rigorous finite-sample PCC guarantee.


Following the standard setups in counterfactual inference~\citep{imbens2015causal,lei2021conformal,jin2023sensitivity}, we assume access to a dataset of i.i.d.~triplets $\{(X_i,Y_i,A_i)\}_{i=1}^N$ where $X_i\sim \PP_X$, and $A_i\sim \pi(\cdot\given X_i)$ where $\pi\colon \cX\to  \Delta(\cA)$ is a behavior policy, and the outcome is from $Y_i\sim \PP_{Y(a)\given X}$ for $a=A_i$. For simplicity, we assume the behavior policy $\pi$ is known; if it is not, it is natural to estimate it and plug into our procedure (which can be viewed as an extension of weighted conformal prediction~\citep{tibshirani2019conformal}). There has been extensive literature on the robustness of  weighted conformal prediction to estimation error in weights~\citep{lei2021conformal} whose results shall similarly apply here; we thus do not pursue this direction.  

We are interested in a test point $X_{\test}\sim \PP_X$ independent of the labeled data, given which the potential outcomes  $\{Y_{\test}(a)\}_{a\in \cA}$ are from $\PP_{Y(a)\given X=X_{\test}}$ for each $a\in \cA$. 
We aim to output prediction sets $\hat{C}(X_{\test},a)\subseteq\cY$ 
that attain the PCC~\eqref{eq:def_P_set} for the induced policy $\pi_{\RA}(X;\hat{C})$. The policy-coupled coverage introduces the new challenge that both $\hat{C}$ and $\pi_{\RA}(\cdot;\hat{C})$ are learned from data and intertwine with each other. 
To achieve the PCC, we take a two-step approach: we first learn certain preliminary prediction sets to pin down the max--min policy, and then calibrate the final prediction sets that induce the same policy while obeying valid coverage. 
 
\vspace{-0.5em}
\paragraph{Data splitting.}
We partition the labeled data into three disjoint sets
$\cI_{\train} \cup \cI_{\learn} \cup \cI_{\calib}=\{1,\dots,N\}$.
The training split is used to fit prediction models; the learning split is used to learn the max--min policy;
the calibration split is used to   construct the final prediction sets.

\vspace{-0.5em}
\paragraph{Step 1: model estimation.}
Recall the definition of the $\alpha$-upper quantile in~\eqref{eq:def_upp_qt}. 
Using the training fold $\{(X_i,Y_i,A_i)\}_{i\in\cI_{\train}}$, we fit 
an estimator $\hat\gamma(x,t,a)$ for utility quantiles $\gamma(x,t,a)=\uquant_{1-t}\big[u(a,Y)\given X=x,A=a\big]$ for  $t\in[0,1]$, $x\in\cX$, and  $a\in\cA$. Here $\hat\gamma(x,t,a)$ is obtained from an outcome model $\widehat{P}(\cdot\given X=x,A=a)$, i.e., analogously to~\cite{kiyani2025decision}, 
\$
\hat\gamma(x,t,a)=\uquant_{1-t}\big[u(a,\widehat Y(a))\given \widehat Y(a)\sim \widehat{P}(\cdot\given X=x,A=a)\big].
\$

We then define  
\$
\hat\theta(x,t):=\max_{a\in\cA}~\hat\gamma(x,t,a),
\qquad
\hat a(x,t):=\argmax_{a\in\cA}~\hat\gamma(x,t,a).
\$

To approximate the optimal solution in Theorem~\ref{thm:optimal-set}, we begin by estimating the optimal conditional coverage assignment rule $t^*(x)=g(x,\beta^*)$. 
Given a Lagrange parameter $\beta\ge 0$, we define
\$
\hat g(x,\beta):=\argmax_{t\in[0,1]} \, \{\hat\theta(x,t)+\beta t\}.
\$
The value of $\beta$ will be calibrated twice. In step 2, we will use $\cI_{\text{learn}}$ to find some $\hat\beta$ that approximately satisfy the coverage to induce an reasonable max--min policy. In step 3, we use conformal inference to calibrate  $\beta$ again with $\cI_{\calib}$, so the prediction sets satisfy the finite-sample coverage guarantee.

\vspace{-0.5em}
\paragraph{Step 2: learning the optimal policy.}
Using the learning fold $\{(X_i,Y_i,A_i)\}_{i\in\cI_{\learn}}$, we find the smallest $\beta$ that satisfies an approximate average coverage constraint:
\@\label{eq:def_hat_beta}
\hat\beta:=\inf\Big\{\beta\ge 0:\ \textstyle{\frac{1}{|\cI_{\learn}|}\sum_{i\in\cI_{\learn}}}\hat g(X_i,\beta) \ge  1-\alpha\Big\}.
\@
Here, we set $\hat\beta$ which estimates $\beta^*$ by enforcing the estimated marginal coverage: $\EE[t^*(X)] = \EE[g(X,\beta)] \approx \hat\EE_{\learn}[\hat{g}(X,\beta)]\geq 1-\alpha$. This typically yields reasonable policies, though it does not necessarily come with any finite-sample guarantee. 
The resulting learned policy is then 
\$
\hat a(x):=\hat a\big(x,\hat t(x)\big),\quad \text{where}\quad 
\hat t(x):=\hat g(x,\hat\beta).
\$

The key idea to disentangle the prediction sets and the induced policy is to fix the latter: in the next step, we calibrate the prediction sets within a class of sets whose max--min rules are $\hat{a}(\cdot)$.

\vspace{-0.5em}
\paragraph{Step 3: weighted conformal calibration.}
For calibration purposes,   we only use labeled data whose logged action matches the learned policy action:
$ 
\cI_{\calib}^0:= \big\{i\in\cI_{\calib}\colon  A_i=\hat a(X_i)\big\}.
$
For any $\beta\ge 0$ and $i\in\cI_{\calib}^0$, we define the calibration score
$
S_i(\beta):=u(A_i,Y_i)\;-\;\hat\theta(X_i,\hat g(X_i,\beta)).
$
Due to sampling under the behavior policy, there is a covariate shift from $\cI_\calib^0$ and the test point (see, e.g.,~\cite{lei2021conformal,tibshirani2019conformal}). 
This can be adjusted by the importance weights based on the known behavior policy (which can be estimated and plugged in if unknown):
\$
w_i:=   \pi(\hat{a}(X_i)\given X_i)^{-1} =  \pi(A_i\given X_i)^{-1},
\qquad
w_{\test} = \pi(\hat a(X_{\test})\given X_{\test})^{-1}.
\$
We then define the calibrated parameter 
\@\label{eq:beta_star}
\beta^*:=
\inf\Bigg\{ \beta\geq 0\colon 
\frac{\sum_{i\in\cI_{\calib}^0} w_i\ind\{S_i(\beta)\ge 0\}}
{\sum_{i\in\cI_{\calib}^0} w_i + w_{\test}}
\ \ge\ 1-\alpha
\Bigg\}.
\@  
Finally, the prediction sets are given by 
\@\label{eq:est_pred_set}
&\hat{C}(X_{\test},\hat{a}(X_{\test}))= \Big\{y\in\cY:\ u(\hat a(X_{\test}),y) \geq \hat\theta(X_{\test},\hat g(X_{\test},\beta^*)) \Big\}\\
&\hat{C}(X_{\test},a)=\Big\{y\in\cY  \colon u(a,y) \geq  \hat\gamma(X_{\test},\hat g(X_{\test},\beta^*),a)  \Big\}, \quad \forall ~ a \neq \hat a(X_\test). \notag 
\@
Note that by definition $\hat{a}(X_{\test}) = \hat\pi_{\RA}(X_{\test})$; see Appendix~\ref{app:subsec_cont_conv} for a formal proof.

The entire procedure is summarized in Algorithm~\ref{alg:two_steps_short_betastar}.

Our method applies to finite and infinite label space. Compared with the usual weighted calibration~\citep{tibshirani2019conformal}, we drop $w_{\text{test}}$ in the numerator; such slight  conservativeness avoids the enumeration of the label space $y\in \cY$ such as in~\cite{kiyani2025decision} and full conformal prediction~\citep{vovk2005algorithmic}. 
When $|\cY|<\infty$, enumeration is feasible and we discuss this below. 
\begin{remark}[Exact calibration for finite label space]\label{rem:orc_cal}
    When $|\cY|<\infty$, one may use \@\label{eq:def_cp_set}
&\hat{C}^{\text{full}}(X_{\test}, \hat{a}(X_{\test})) = \Big\{y\in \cY\colon u(\hat a(X_{\test}),y) \geq \hat\theta(X_{\test},\hat g(X_{\test},\beta^{\text{full}}(y)) \Big\}, \\
&\hat{C}^{\text{full}}(X_{\test},a) = \Big\{y\in \cY\colon u(a,y)\geq \hat\gamma(X_{\test},\hat g(X_{\test},\beta^{\text{full}}_0),a) \Big\},\quad \forall ~ a\neq \hat{a}(X_{\test}), \notag
\@
where $\beta_0^{\text{full}}=\max_{y\in \cY} \beta^{\text{full}}(y)$, and for each $y\in \cY$, $S_{\test}^{(y)}(\beta)=u(\hat a(X_{\test}),y)-\hat\theta(X_{\test},\hat g(X_{\test},\beta))$, 
\@\label{eq:beta_star_y}
\beta^{\text{full}}(y):=
 \inf \Bigg\{ \beta\geq 0\colon 
\frac{\sum_{i\in\cI_{\calib}^0} w_i\ind \{S_i(\beta)\ge 0\} + w_{\test}\ind\{ S_{\test}^{(y)}(\beta)\geq 0\}}
{\sum_{i\in\cI_{\calib}^0} w_i + w_{\test}}
\ \ge\ 1-\alpha
\Bigg\}.
\@ 
The theory of weighted conformal prediction~\citep{tibshirani2019conformal} implies valid coverage of~\eqref{eq:def_cp_set} for $Y_{\test}(\hat{a}(X_{\test}))$, where $\hat a(X_{\test})$ coincides with the risk-averse policy for $\hat{C}^{\text{full}}$; see Theorem~\ref{thm:continuous-cov}. 
\end{remark}

Theorem~\ref{thm:continuous-cov} shows that both prediction sets constructed above satisfy policy-coupled coverage guarantee. 
The proof is in Appendix~\ref{app:subsec_cont_conv}. 

\begin{theorem}\label{thm:continuous-cov}
    Denote the realized outcomes $Y_{\test} = Y_{\test}(\pi_{\RA}(X_{\test};\hat{C}))$ for $\{\hat{C}(x,a)\}_{a\in \cA}$ defined in~\eqref{eq:est_pred_set}, and $Y_{\test}^{\text{full}} = Y_{\test}(\pi_{\RA}(X_{\test};\hat{C}^{\text{full}}))$ for $\{\hat{C}^{\text{full}}(x,a)\}_{a\in \cA}$ defined in~\eqref{eq:def_cp_set}. 
    Then, we have 
    \$
    \PP\big(Y_{\test}^{\text{full}}\in \hat{C}^{\text{full}}(X_{\test},\pi_{\RA}(X_{\test};\hat{C}^{\text{full}}))\big)\geq 1-\alpha,\quad \text{and}\quad \PP\big(Y_{\test}\in \hat{C}(X_{\test},\pi_{\RA}(X_{\test};\hat{C}))\big)\geq 1-\alpha.
    \$
\end{theorem}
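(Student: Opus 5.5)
The proof has two parts. First we show that the set-induced max--min policy coincides with the learned policy $\hat a(\cdot)$. Then we reduce policy-coupled coverage to a weighted conformal statement about a single calibration score.

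\emph{Step 1 (policy identification).} Fix $X_{\test}$ and write $t=\hat g(X_{\test},\beta)$ for the relevant $\beta$ ($\beta^*$ or $\beta^{\text{full}}$). For threshold sets $\{y:u(a,y)\ge c_a\}$ we have $\inf_{y\in C}u(a,y)\ge c_a$, with equality whenever the threshold is an upper quantile of the (estimated) utility distribution. The empty-set convention $\inf_\varnothing=u_{\max}$ and the bound $c_a\le u_{\max}$ handle the degenerate case. For $a\neq\hat a(X_{\test})$ the threshold is $\hat\gamma(X_{\test},t,a)$, while for $\hat a(X_{\test})$ it is $\hat\theta(X_{\test},t)=\max_a\hat\gamma(X_{\test},t,a)$. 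Hence $\hat a(X_{\test})$ attains the max--min, and
\[
\pi_\RA(X_{\test};\hat C)=\hat a(X_{\test}),
\]
with ties broken consistently toward $\hat a$. The delicate point is that $\hat a(x)=\hat a(x,\hat t(x))$ is defined via $\hat\beta$, whereas the thresholds use $\beta^*$. This does not matter: the chosen action's threshold is $\hat\theta$, which by construction dominates every other action's threshold $\hat\gamma(\cdot,t,a)$ at the same $t$. The same argument applies to $\hat C^{\text{full}}$, using $\beta^{\text{full}}(y)\le\beta_0^{\text{full}}$ together with monotonicity of $\hat\theta(x,\hat g(x,\beta))$ in $\beta$. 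A larger $\beta$ gives a larger $\hat g$ and hence a smaller $\hat\theta$, so the chosen action's threshold stays at or above $\hat\gamma$ at $\beta_0^{\text{full}}$; this needs checking. Coverage therefore reduces to
\[
\PP\big(Y_{\test}(\hat a(X_{\test}))\in\hat C(X_{\test},\hat a(X_{\test}))\big)
=\PP\big(S_{\test}(\beta^*)\ge 0\big),
\]
where $S_{\test}(\beta)=u(\hat a(X_{\test}),Y_{\test}(\hat a(X_{\test})))-\hat\theta(X_{\test},\hat g(X_{\test},\beta))$.

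\emph{Step 2 (exchangeability after weighting).} Condition on $\cI_{\train}\cup\cI_{\learn}$, so that $\hat a,\hat\gamma,\hat g$ are fixed. Conditioning further on $A_i=\hat a(X_i)$ turns calibration points into draws of $(X,Y(\hat a(X)))$ under $\PP_X$ tilted by $\pi(\hat a(X)\mid X)$. The test pair $(X_{\test},Y_{\test}(\hat a(X_{\test})))$ is instead drawn with $X\sim\PP_X$. This is a covariate shift with likelihood ratio proportional to $w(x)=\pi(\hat a(x)\mid x)^{-1}$, so the data are weighted exchangeable in the sense of \citet{tibshirani2019conformal}. We also condition on $|\cI^0_{\calib}|$.

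\emph{Step 3 (coverage of the full version).} The map $\beta\mapsto\ind\{S(\beta)\ge0\}$ is nondecreasing for every point, since $\hat\theta(x,\hat g(x,\beta))$ is nonincreasing in $\beta$. Consequently $S_{\test}(\beta)\ge0$ holds whenever $\beta\ge$ a point-specific threshold $\tau_{\test}=\inf\{\beta:S_{\test}(\beta)\ge0\}$, and the same holds for each calibration point with threshold $\tau_i$. The event $\{\beta^{\text{full}}(Y_{\test})\ge\tau_{\test}\}$ is the event that $\tau_{\test}$ lies below the weighted $(1-\alpha)$-quantile of $\{\tau_i\}\cup\{\tau_{\test}\}$. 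The weighted-exchangeability quantile lemma then gives probability at least $1-\alpha$; one must verify right-continuity so that the infimum is attained. Since $\beta_0^{\text{full}}\ge\beta^{\text{full}}(Y_{\test})$ handles the other actions, this yields the first claim.

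\emph{Step 4 (the split version).} Dropping $w_{\test}\ind\{\cdot\}$ from the numerator makes the defining condition harder to satisfy. Hence $\beta^*\ge\beta^{\text{full}}(y)$ for every $y$, so $\beta^*\ge\beta_0^{\text{full}}$ and $\hat C\supseteq\hat C^{\text{full}}$ on the chosen action. Combined with Step 1, this transfers the coverage. The main obstacle is Step 1's consistency of the induced policy across the different $\beta$ values, together with tie-breaking in the argmax; I would resolve it with the monotonicity argument above and an explicit tie-breaking convention.
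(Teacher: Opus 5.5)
Your proposal is correct, and its skeleton is the paper's. You identify $\pi_{\mathrm{RA}}$ with $\hat a$ using $\hat\theta\ge\hat\gamma(\cdot,\cdot,a)$ and the monotonicity of $\beta\mapsto\hat\theta(x,\hat g(x,\beta))$. You weight the selected calibration points by $1/\pi(\hat a(x)\mid x)$. You pass to the split sets via $\beta^*\ge\beta^{\text{full}}(y)$.

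The genuine difference is the coverage step for $\hat C^{\text{full}}$. The paper keeps $\beta^{\text{full}}(Y_{n+1})$ as a symmetric function of the augmented sample and defines binary scores $V_i=1-\ind\{S_i(\beta^{\text{full}}(Y_{n+1}))\ge 0\}$. Their weighted $(1-\alpha)$-quantile is $0$ by construction, and the paper then invokes the full-conformal (symmetric-score) form of weighted exchangeability from \citet{tibshirani2019conformal}. You instead use nestedness to collapse each monotone family $\beta\mapsto\ind\{S_i(\beta)\ge 0\}$ into one scalar score $\tau_i$. This score depends only on $(X_i,Y_i)$ and the first-stage fits. Then $\beta^{\text{full}}(Y_{n+1})$ is literally the weighted quantile of $\tau_1,\dots,\tau_{n+1}$, and only the plain weighted split-conformal lemma is needed.

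Your route is more elementary and more revealing. The same identity shows that $\beta^*$ is the weighted quantile of $\sum_{i\le n}p_i\delta_{\tau_i}+p_{n+1}\delta_{+\infty}$. So the second claim follows directly from weighted split conformal, without the detour through $\hat C^{\text{full}}$. The paper's coverage step, by contrast, uses nothing about $\beta^{\text{full}}$ beyond symmetry and its defining inequality.

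Both routes need the infimum to be attained. The paper uses this silently when it asserts the defining inequality at $\beta^{\text{full}}(Y_{n+1})$. The right-continuity you flagged can be discharged as follows, under the uniqueness assumption (or with $\hat g$ taken as the largest maximizer).
\begin{itemize}
\item Apply the one-sided derivative computation in the proof of Theorem~\ref{thm:seed_unified} to $\hat\theta$. It shows that $\hat g(x,\cdot)$ is the right derivative of the convex, $1$-Lipschitz map $\beta\mapsto\hat m_\beta(x):=\max_{t\in[0,1]}\{\hat\theta(x,t)+\beta t\}$, hence nondecreasing and right-continuous.
\item Therefore $\hat\theta(x,\hat g(x,\beta))=\hat m_\beta(x)-\beta\,\hat g(x,\beta)$ is nonincreasing and right-continuous in $\beta$.
\item This gives exactly $\{\beta\ge 0: S_i(\beta)\ge 0\}=[\tau_i,\infty)$.
\end{itemize}
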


\begin{algorithm}[H]
  \caption{Policy-Coupled Risk-Averse Conformal Prediction}
  \label{alg:two_steps_short_betastar}
  \begin{algorithmic}[1]
    \REQUIRE Miscoverage level $\alpha\in(0,1)$; utility $u(\cdot,\cdot)$; logged data $\{(X_i,Y_i,A_i)\}_{i=1}^N$; test covariate $X_{\test}$.

    \STATE Split indices into disjoint sets $\cI_{\train},\cI_{\learn},\cI_{\calib}$.

    \STATE Fit $\hat\gamma(x,t,a)$ on $\cI_{\train}$. \hfill \textcolor{gray}{\texttt{// Nuisance estimation}}

    \STATE Compute $\hat\beta$ on $\cI_{\learn}$ following~\eqref{eq:def_hat_beta}. \hfill \textcolor{gray}{\texttt{// Learn optimal policy}}\\

    \vspace{0.1em}
    \textcolor{gray}{\texttt{// Weighted conformal calibration}}\vspace{0.1em}
    
    \STATE Set $\cI_{\calib}^0=\{i\in\cI_{\calib}:A_i=\hat a(X_i)\}$ and define $S_i(\beta):=u(A_i,Y_i)\;-\;\hat\theta(X_i,\hat g(X_i,\beta)).$ 
    \STATE Set weights $w_i=\frac{1}{ \pi(A_i\given X_i)}$ for $i\in \cI_\calib^0$ and $w_{\test}=\frac{1}{ \pi(\hat a(X_{\test})\given X_{\test})}.$

    \STATE Compute $\beta^*$ as in~\eqref{eq:beta_star} (or $\beta^{\text{full}}(y)$ in~\eqref{eq:beta_star_y} if $|\cY|<\infty$).

    \STATE Compute $\hat{C}(X_{\test},a)$ for $a\in \cA$ based on~\eqref{eq:est_pred_set} (or $\hat{C}^{\text{full}}(X_{\test},a)$ based on~\eqref{eq:def_cp_set}).  
    
    \ENSURE Risk-averse policy $\hat a(\cdot)$ and prediction sets $\{\hat{C}(X_{\test},a)\}_{a\in\cA}$ (or $\{\hat{C}^{\text{full}}(X_{\test},a)\}_{a\in\cA}$).
  \end{algorithmic}
\end{algorithm}


\section{Simulation study}
\label{sec:simulation}

We first evaluate the proposed \textsc{PC-RACP} method on a synthetic decision-making task. The goal is to assess whether modeling the action-conditional outcome distribution improves the decision quality while maintaining the target coverage guarantee.

\vspace{-0.5em}
\paragraph{Task and data generation process.} The covariates $X\in\RR^d$ are i.i.d.~from a multivariate standard Gaussian distribution. Conditional on $X$, the logged action $A$ is drawn from a softmax behavior policy $\pi_b(a\given X)$. The outcome $Y$ is then sampled from an action-dependent conditional distribution $\PP_{Y(a)\given X}$ for $a=A$, also parameterized by a softmax model. See Appendix~\ref{app:simulation-dgp} for details. We set $|\cA|=3$ and $|\cY|=4$. The utility function is  
\$
u(a,y)=
\begin{array}{c|cccc}
 & y=0 & y=1 & y=2 & y=3\\
\hline
a=0 & 0.70 & 0.60 & 0.35 & 0.15\\
a=1 & 0.95 & 0.55 & 0.45 & 0.15\\
a=2 & 0.80 & 0.50 & 0.20 & 0.20
\end{array}.
\$
We also set $u_{\max}=1.00$. 
This utility specification induces nontrivial trade-offs across actions and labels, making the quality of the prediction set directly relevant to decisions. We generate $30{,}000$ i.i.d.~samples in total, partition them into disjoint subsets  
(30\%\ \text{train},  
20\%\ \text{learn},  
20\%\ \text{calib},  
30\%\ \text{test}), and apply the methods to produce the learned actions and prediction sets. The experiments are repeated for $N=20$ independent runs. 

\vspace{-0.5em}
\paragraph{Methods.}
We compare the proposed \textsc{PC-RACP} procedure with RAC (\cite{kiyani2025decision}) and a Plug-in baseline. RAC constructs prediction sets from the marginal predictive distribution $\PP(Y\given X)$, without conditioning on the action. The Plug-in baseline learns the action-conditional model $\PP(Y\given X,A)$, but does not perform any calibration: it directly selects the action by maximizing the plug-in utility quantile and then forms the prediction set; it is detailed in Appendix~\ref{app:plugin}.

Although the true behavior policy is known in the synthetic setup, we use estimated behavior policies whenever behavior-policy information is required, in order to mimic the logged-data setting and examine robustness to behavior-policy estimation error.

In PC-RACP, we fit the models on the training split via multinomial regression, including the behavior policy model $\widehat{\PP}(A\given X)$ and  outcome model $\widehat{\PP}(Y\given X,A)$ which regresses $Y$ on $X$ using  the training samples with observed action $A=a$. In RAC, we ignore the actions and fit the marginal predictive model $\widehat{\PP}(Y\given X)$ via a multinomial regression of $Y$ on $X$ using the training split; the pooled learn and calibration splits are then used for RAC calibration. The Plug-in method fits multinomial regression for each action $a\in \cA$ in the pooled train, learn, and calib splits. In this way, the three methods use the same set of labeled data. 
Finally, in addition to the well-specified multinomial regression, we conduct the same set of experiments using the random forest classifier, while keeping the same data splits and model fitting protocol as above.

\vspace{-0.5em}
\paragraph{Evaluation metrics.}  

We vary the target miscoverage level over
$
\alpha \in \{0.02,0.04,\dots,0.20\}.
$
For each test point $X_i$, PC-RACP and the Plug-in method output a learned action $\hat a_i$ and a prediction set $\hat C_i=\hat{C}(X_i,\hat{a}_i)$. For RAC, the method outputs a prediction set $\hat C_i$, and the action is then induced by $\hat a_i=\argmax_{a\in\cA}\min_{y\in \hat C_i}u(a,y)$. We evaluate (i) the empirical coverage by 
$ \widehat{\mathrm{Cov}}(\alpha)=\frac{1}{|I_{\test}|}\sum_{i\in I_{\test}}\ind\! \{  Y_i(\hat a_i)\in \hat C_i \} 
$ based on the sampled counterfactual outcomes, and (ii) the utility certificate by $\widehat{\mathrm{Util}}(\alpha)=\frac{1}{|I_{\test}|}\sum_{i\in I_{\test}}\min_{y\in \widehat C_i} u(\hat a_i,y)$ which serves as a valid lower bound for the realized utility when coverage is satisfied.

\begin{figure}
    \centering

    \begin{subfigure}[t]{0.49\linewidth}
        \centering
        \includegraphics[width=\linewidth]{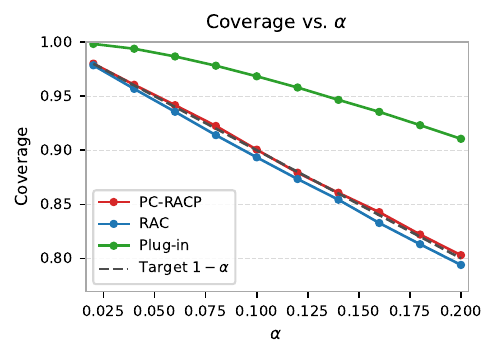}
        \caption{Multinomial model: empirical coverage v.s.~$\alpha$.}
        \label{fig:coverage-simu-1-logistic}
    \end{subfigure}
    \hfill
    \begin{subfigure}[t]{0.49\linewidth}
        \centering
        \includegraphics[width=\linewidth]{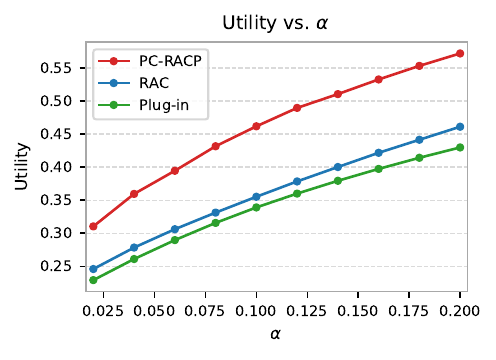}
        \caption{Multinomial model: average utility certificate v.s.~$\alpha$.}
        \label{fig:utility-simu-1-logistic}
    \end{subfigure}

    \vspace{0.6em}

    \begin{subfigure}[t]{0.49\linewidth}
        \centering
        \includegraphics[width=\linewidth]{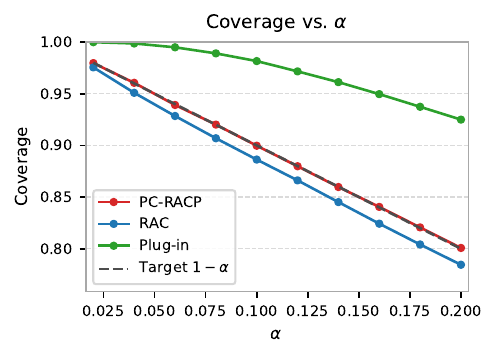}
        \caption{Random forest: empirical coverage v.s.~$\alpha$.}
        \label{fig:coverage-simu-1-rf}
    \end{subfigure}
    \hfill
    \begin{subfigure}[t]{0.49\linewidth}
        \centering
        \includegraphics[width=\linewidth]{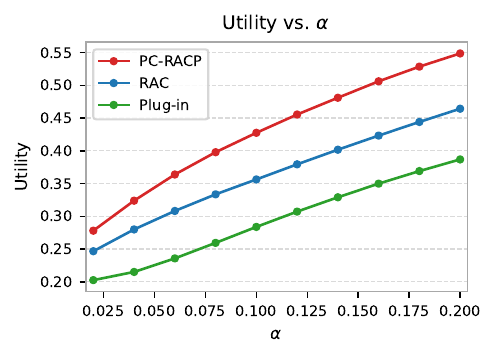}
        \caption{Random forest: average utility certificate v.s.~$\alpha$.}
        \label{fig:utility-simu-1-rf}
    \end{subfigure}

    \caption{Simulation results across various miscoverage levels $\alpha$. Top row show results when models are fitted by multinomial regression. Bottom row show results when models are fitted by random forests. Left column: coverage across different values of $\alpha$, where the dashed line indicating the target level $1-\alpha$. Right column: average utility certificate across different values of $\alpha$.}
    \label{fig:sim-results}
\end{figure}

\vspace{-0.5em}
\paragraph{Results.} The simulation results are summarized in Figure~\ref{fig:sim-results}. The empirical coverage in the left panel shows the tight coverage of \textsc{PC-RACP} for the outcomes under set-induced actions. 
However, RAC falls slightly below the target curve: it ignores the counterfactual nature, so the actions would induce a distinct distribution for the realized outcome compared with the marginal distribution of $Y(A)$ mixed under the behavior policy. On the other hand, the Plug-in baseline fails to provide exact coverage, even though we expect the learned models are more accurate since more samples are used in model fitting; this shows the importance of decision-oriented conformal calibration for finite-sample coverage.

The right panel of Figure~\ref{fig:sim-results} presents the average utility certificate for each method. \textsc{PC-RACP} consistently achieves the highest average utility certificate across $\alpha$. Compared with the RAC method, this shows the benefit of policy-coupled inference in counterfactual settings. Compared with the Plug-in baseline which performs the worst, this shows the importance of calibration: even with a well-specified multinomial model, the learning step (which ensures approximate coverage before deriving the policy) and conformal calibration (which ensures exact finite-sample coverage) are still instrumental to realize higher utility. Overall, these results confirm the strong empirical performance of \textsc{PC-RACP} in both safety (coverage) and utility. 

\section{Hillstrom experiment}
\label{sec:hillstrom-exp}

We next evaluate the proposed \textsc{PC-RACP} method on the Hillstrom email marketing dataset~\citep{minethatdata} to investigate whether the utility gains observed in simulation persist in a real-data decision problem. 
This is a randomized experiment for e-mail merchandise campaign. In online marketing campaigns, different interventions (such as sending a promotion email versus not) lead to distinct customer behavior and downstream purchasing outcomes, so the counterfactual perspective is necessary and commonly adopted as the standard~\citep{varian2016causal}. 

\subsection{Experimental setup}

\paragraph{Task, data preprocessing, and model training.}
The original treatment variable \texttt{segment} has three actions; for easier interpretation, we map it to a binary action space $A \in \{0,1\}$,
where $A=0$ denotes \texttt{No E-Mail} and $A=1$ denotes sending an email advertisement (merging \texttt{Mens E-Mail} and \texttt{Womens E-Mail}). The outcome is the binary label $Y \in \{0,1\}$,
where $Y=1$ indicates a website visit and $Y=0$ otherwise. 
We use five user features $X = (\texttt{recency},\texttt{history},\texttt{mens},\texttt{womens},\texttt{newbie})$ as the covariates. Here, \texttt{recency} and \texttt{history} are numeric-valued customer-history variables,  
while \texttt{mens}, \texttt{womens}, and \texttt{newbie} are binary variables for purchasing history of men and women products, and new customer. 
The behavior policy in Hillstrom is known: 
$\PP(A=0 \given X)=\tfrac13$ and $\PP(A=1 \given X)=\tfrac23$ due to the randomization. 
We set the utility function
\$
u(a,y)=
\begin{array}{c|cc}
 & y=0 & y=1\\
\hline
a=0 & 0.40 & 0.25\\
a=1 & 0.10 & 0.90
\end{array} 
\$
to encode the objective of the email marketing task: emails should ideally be sent to users who are likely to visit the website after receiving them. Accordingly, the case $(a=1,y=1)$ is assigned the highest utility, since it corresponds to a successful email campaign. The case $(a=1,y=0)$ is assigned low utility because it represents an unsuccessful email and thus a wastage of resources. The case $(a=0,y=0)$ receives relatively high utility, as refraining from sending an email is appropriate for a user who would not visit. The case $(a=0,y=1)$ is also undesirable because it reflects insufficient intervention: a potentially responsive user is not contacted. This missed-opportunity case is treated as less harmful than sending an unnecessary email. We also set $u_{\max}=1.00$.

\vspace{-0.5em}
\paragraph{Methods and evaluation.}
The Hillstrom dataset contains $n=64{,}000$ samples. We split the data into
30\%\ \text{train},  
20\%\ \text{learn},  
20\%\ \text{calib}, and 
30\%\ \text{test},
following the same protocol as in the simulation study.   
On the training split, we fit the action-conditional outcome model $\hat P(Y \given X,A)$ and the marginal label model $\hat P(Y \given X)$ using CatBoost. For $\hat P(Y \given X,A)$, we train a separate model for each action $a\in\{0,1\}$. We use  the known logging policy given above. 

We vary the miscoverage level 
$
\alpha \in \{0.02,0.04,\dots,0.20\}.
$
We report empirical coverage and average utility on the test set for each method. The average utility is computed in the same way as in the simulation study. Although the counterfactual outcomes are unknown, we can still construct an unbiased estimator for coverage; see Appendix~\ref{app:subsec_eval_real}. 

\subsection{Main results}

\begin{figure}
    \centering
    \begin{subfigure}[t]{0.49\linewidth}
        \centering
        \includegraphics[width=\linewidth]{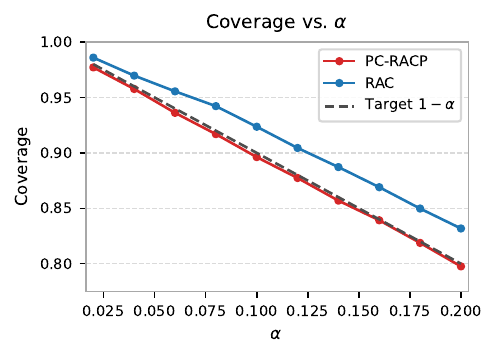}
        \caption{Empirical coverage versus $\alpha$.}
        \label{fig:hillstrom-coverage}
    \end{subfigure}
    \hfill
    \begin{subfigure}[t]{0.49\linewidth}
        \centering
        \includegraphics[width=\linewidth]{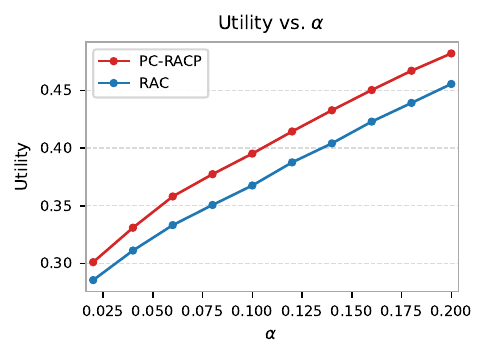}
        \caption{Average utility certificate versus $\alpha$.}
        \label{fig:hillstrom-utility}
    \end{subfigure}
    \caption{Results on the Hillstrom dataset. Left: empirical coverage across $\alpha$. Right: estimated average utility certificate across $\alpha$. \textsc{PC-RACP} consistently achieves  higher utility than RAC while both methods keep empirical coverage close to the nominal target $1-\alpha$.}
    \label{fig:hillstrom-main}
\end{figure}

The (estimated) empirical coverage and utility certificate are reported in Figure~\ref{fig:hillstrom-main}. 
Figure~\ref{fig:hillstrom-coverage} shows that \textsc{PC-RACP} maintains empirical coverage close to the nominal level $1-\alpha$ across the full range of $\alpha$. In contrast, RAC exhibits empirical coverage that is consistently above the target, indicating that it is more conservative than required. 
Figure~\ref{fig:hillstrom-utility} shows that \textsc{PC-RACP} consistently achieves higher utility than RAC. Together, these results show that the main simulation takeaway continues to hold on Hillstrom: explicitly modeling action-dependent outcome uncertainty can improve decision quality, while maintaining valid coverage for the realized outcome.

\subsection{Where the utility gain comes from}

We now zoom into the decisions produced by \textsc{PC-RACP} to analyze the utility gain over RAC.  
Let $\hat{A}\in \{0,1\}$ denote the decision produced by the methods. 
Although the general construction allows $\hat{C}=\emptyset$, empty prediction sets do not occur in this experiment; hence we omit this case from the table below.
For the nonempty prediction sets $\{0,1\}$, $\{0\}$, and $\{1\}$, the above utility function gives the following values of $\min_{y\in \hat{C}}u(a,y)$ for each configuration of action $a$ and prediction set $\hat{C}$:
\@\label{eq:util_bd_table}
\min_{y\in \hat{C}}u(a,y) =
\begin{array}{c|ccc}
& \hat{C}=\{0\} & \hat{C}=\{1\} & \hat{C}=\{0,1\}\\
\hline
a=0 & 0.40 & 0.25 & 0.25\\
a=1 & 0.10 & 0.90 & 0.10
\end{array} .
\@
Ignoring the counterfactual nature, RAC produces one single prediction set $\hat{C}$, based on which the action is selected. The RAC-selected action is $\hat{A}=0$, $\hat{A}=1$, and $\hat{A}=0$ for the three columns in~\eqref{eq:util_bd_table}. In contrast, \textsc{PC-RACP} constructs separate $\hat{C}(0)$ and $\hat{C}(1)$ for the two actions, so the selected action $\hat{A}$ depends on the comparison of two cells in the two rows of~\eqref{eq:util_bd_table}.

\begin{figure}[t]
    \centering
    \begin{subfigure}[t]{0.49\linewidth}
        \centering
        \includegraphics[width=\linewidth]{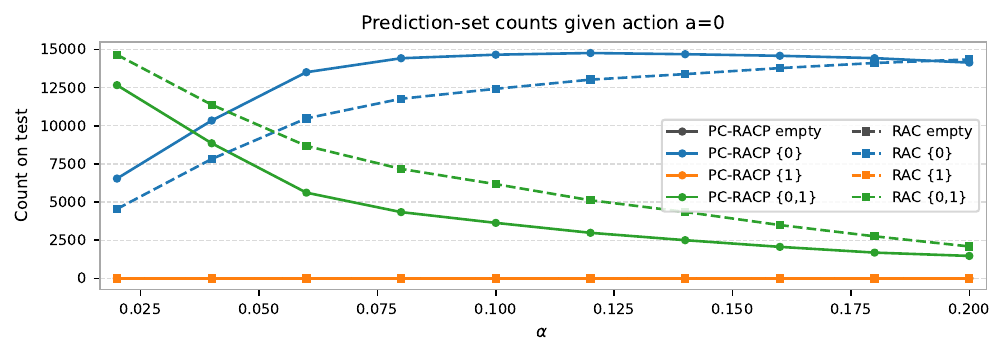}
        \caption{Prediction-set counts among test points with selected action $\hat{A}=0$.}
        \label{fig:hillstrom-pair-a0}
    \end{subfigure}
    \hfill
    \begin{subfigure}[t]{0.49\linewidth}
        \centering
        \includegraphics[width=\linewidth]{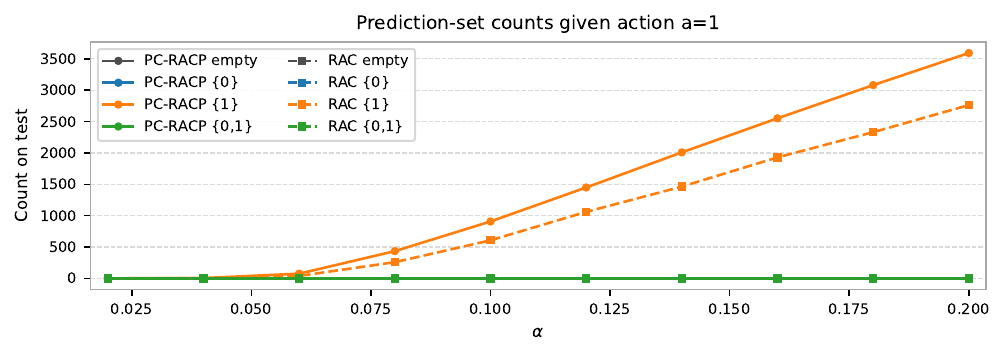}
        \caption{Prediction-set counts among test points with selected action $\hat{A}=1$.}
        \label{fig:hillstrom-pair-a1}
    \end{subfigure}
    \caption{Prediction-set composition conditional on the selected action. For $\hat{A}=0$, the prediction sets concentrate on $\{0\}$ and $\{0,1\}$. For $\hat{A}=1$, the prediction sets are essentially all equal to $\{1\}$.}
    \label{fig:hillstrom-pair-counts}
\end{figure}

\begin{figure}
    \centering
    \begin{subfigure}[t]{0.49\linewidth}
        \centering
        \includegraphics[width=\linewidth]{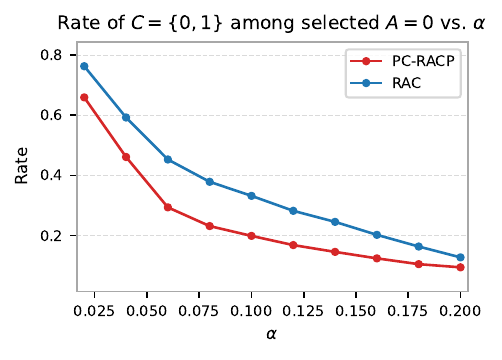}
        \caption{Fraction of $\hat{C}(X,\hat{A})=\{0,1\}$ among those $\hat{A}=0$.}
        \label{fig:hillstrom-a0-rate}
    \end{subfigure}
    \hfill
    \begin{subfigure}[t]{0.49\linewidth}
        \centering
        \includegraphics[width=\linewidth]{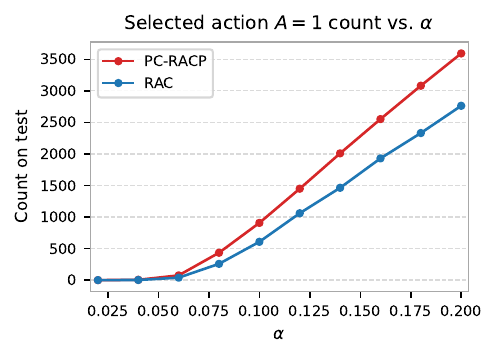}
        \caption{Number of test points with final action $\hat{A}=1$.}
        \label{fig:hillstrom-a1-count}
    \end{subfigure}
    \caption{Left: in the small-$\alpha$ regime, among test points with $\hat{A}=0$, \textsc{PC-RACP} produces prediction set $\{0,1\}$ for a substantially smaller fraction of cases than RAC. Right: in the large-$\alpha$ regime, \textsc{PC-RACP} selects more $\hat{A}=1$ cases than RAC. 
    }
    \label{fig:hillstrom-mechanism}
\end{figure}

First, we examine the form of the prediction sets conditional on the selected action (for \textsc{PC-RACP} we examine the prediction set for the selected action). Figure~\ref{fig:hillstrom-pair-a0} shows that, for both methods, test samples with $\hat{A}=0$ are assigned with prediction sets of the form $\{0\}$ or $\{0,1\}$. In contrast, Figure~\ref{fig:hillstrom-pair-a1} shows that when the selected action is $\hat{A}=1$, the prediction set always takes the form $\{1\}$. Thus, the sharpness of the utility certification relies on two factors: (i) how frequently the test samples are assigned to $\hat{A}=1$, and (ii) when $\hat{A}=0$, how frequently the prediction set takes the form $\{0\}$ instead of $\{0,1\}$. Figure~\ref{fig:hillstrom-mechanism} explains these two cases:
\begin{itemize}
\item 
For a small value of $\alpha$, the comparison is driven mainly by the cases with $\hat{A}=0$ (very few test points are assigned $\hat{A}=1$ by both methods;   see Figure~\ref{fig:hillstrom-a1-count}).  Figure~\ref{fig:hillstrom-a0-rate} shows that for these cases, the fraction of ambiguous sets $\{0,1\}$ produced by \textsc{PC-RACP} is substantially smaller than that produced by RAC, which leads to higher utility certificate: for action $a=0$, the utility certificate is $\min\{u(0,0),u(0,1)\}=0.25$ when $\hat{C}=\{0,1\}$, whereas  $\hat{C}=\{0\}$ gives a sharper utility certificate of $u(0,0)=0.40$. 
\item
When $\alpha$ becomes larger, the mechanism changes. Figure~\ref{fig:hillstrom-a1-count} shows that \textsc{PC-RACP} assigns substantially more test points to the  action $\hat{A}=1$ than RAC, and the gap widens as $\alpha$ increases. This improves the utility because, as shown in Figure~\ref{fig:hillstrom-pair-a1}, the prediction set for those $\hat{A}=1$ is essentially always $\{1\}$ with a high utility certificate $u(1,1)=0.90$. Hence, in the large-$\alpha$ regime, action selection plays an important role in sharpness. 
\end{itemize}

Taken together, these figures show that the utility gain of \textsc{PC-RACP} comes from reducing conservativeness in two different ways. For small $\alpha$, it is less conservative in prediction-set construction, more frequently producing sets of the form $\{0\}$ for the dominant $\hat{A}=0$ cases.  For larger values of $\alpha$, it is less conservative in action selection, assigning $\hat{A}=1$ to more users than RAC.

\section*{Discussion}

We developed a decision-theoretic framework for predictive inference in counterfactual decision-making. The key challenge is that, when actions determine outcomes, the target of coverage is no longer fixed in advance. This makes ordinary validity notions insufficient for understanding how prediction sets should support downstream decisions.
Our results identify policy-coupled coverage as the decision-relevant validity notion, where the coverage is evaluated on the potential outcome realized under the policy induced by the prediction sets themselves. For a given collection of per-action prediction sets, it justifies the counterfactual max--min rule for risk-averse decisions under distributional ambiguity. Furthermore, optimizing prediction sets under policy-coupled coverage is equivalent both to imposing universal policy coverage and to directly optimizing over policies and utility certificates. Thus, policy-coupled prediction sets form a lossless interface between uncertainty and action.
Overall, our results suggest that uncertainty quantification in counterfactual problems should be designed around the decisions it supports. We view policy-coupled coverage as a first step toward a broader theory of decision-aware uncertainty quantification for counterfactual settings. 

\bibliographystyle{apalike}
\bibliography{reference}

@book{vovk2005algorithmic,
  title={Algorithmic learning in a random world},
  author={Vovk, Vladimir and Gammerman, Alexander and Shafer, Glenn},
  volume={29},
  year={2005},
  publisher={Springer}
}

@article{kiyani2025decision,
  title={Decision theoretic foundations for conformal prediction: Optimal uncertainty quantification for risk-averse agents},
  author={Kiyani, Shayan and Pappas, George and Roth, Aaron and Hassani, Hamed},
  journal={arXiv preprint arXiv:2502.02561},
  year={2025}
}

@article{foygel2021limits,
  title={The limits of distribution-free conditional predictive inference},
  author={Foygel Barber, Rina and Candes, Emmanuel J and Ramdas, Aaditya and Tibshirani, Ryan J},
  journal={Information and Inference: A Journal of the IMA},
  volume={10},
  number={2},
  pages={455--482},
  year={2021},
  publisher={Oxford University Press}
}

@article{tibshirani2019conformal,
  title={Conformal prediction under covariate shift},
  author={Tibshirani, Ryan J and Foygel Barber, Rina and Candes, Emmanuel and Ramdas, Aaditya},
  journal={Advances in neural information processing systems},
  volume={32},
  year={2019}
}

@article{zhao2021calibrating,
  title={Calibrating predictions to decisions: A novel approach to multi-class calibration},
  author={Zhao, Shengjia and Kim, Michael and Sahoo, Roshni and Ma, Tengyu and Ermon, Stefano},
  journal={Advances in Neural Information Processing Systems},
  volume={34},
  pages={22313--22324},
  year={2021}
}

@article{rubin2005causal,
  title={Causal inference using potential outcomes: Design, modeling, decisions},
  author={Rubin, Donald B},
  journal={Journal of the American statistical Association},
  volume={100},
  number={469},
  pages={322--331},
  year={2005},
  publisher={Taylor \& Francis}
}

@book{imbens2015causal,
  title={Causal inference in statistics, social, and biomedical sciences},
  author={Imbens, Guido W and Rubin, Donald B},
  year={2015},
  publisher={Cambridge university press}
}

@article{wang2026optimal,
  title={Optimal Decision-Making Based on Prediction Sets},
  author={Wang, Tao and Dobriban, Edgar},
  journal={arXiv preprint arXiv:2602.00989},
  year={2026}
}

@article{lei2021conformal,
  title={Conformal inference of counterfactuals and individual treatment effects},
  author={Lei, Lihua and Cand{\`e}s, Emmanuel J},
  journal={Journal of the Royal Statistical Society Series B: Statistical Methodology},
  volume={83},
  number={5},
  pages={911--938},
  year={2021},
  publisher={Oxford University Press}
}

@article{jin2023sensitivity,
  title={Sensitivity analysis of individual treatment effects: A robust conformal inference approach},
  author={Jin, Ying and Ren, Zhimei and Cand{\`e}s, Emmanuel J},
  journal={Proceedings of the National Academy of Sciences},
  volume={120},
  number={6},
  pages={e2214889120},
  year={2023},
  publisher={National Academy of Sciences}
}

@online{minethatdata,
  author = {Hillstrom, Kevin},
  title = {The MineThatData E-Mail Analytics And Data Mining Challenge},
  url = {https://blog.minethatdata.com/2008/03/minethatdata-e-mail-analytics-and-data.html},
  year = {2008}, 
  urldate = {2026-06-24} 
}

@article{varian2016causal,
  title={Causal inference in economics and marketing},
  author={Varian, Hal R},
  journal={Proceedings of the National Academy of Sciences},
  volume={113},
  number={27},
  pages={7310--7315},
  year={2016},
  publisher={National Academy of Sciences}
}

@article{yin2024conformal,
  title={Conformal sensitivity analysis for individual treatment effects},
  author={Yin, Mingzhang and Shi, Claudia and Wang, Yixin and Blei, David M},
  journal={Journal of the American Statistical Association},
  volume={119},
  number={545},
  pages={122--135},
  year={2024},
  publisher={Taylor \& Francis}
}

@misc{zhu2026conformalriskaversedecisionmaking,
      title={Conformal Risk-Averse Decision Making with Action Conditional Guarantee}, 
      author={Zihan Zhu and Shayan Kiyani and George Pappas. Hamed Hassani},
      year={2026},
      eprint={2606.05551},
      archivePrefix={arXiv},
      primaryClass={stat.ML},
      url={https://arxiv.org/abs/2606.05551}, 
}

@article{alaa2023conformal,
  title={Conformal meta-learners for predictive inference of individual treatment effects},
  author={Alaa, Ahmed M and Ahmad, Zaid and van der Laan, Mark},
  journal={Advances in neural information processing systems},
  volume={36},
  pages={47682--47703},
  year={2023}
}

@article{foster1998asymptotic,
  title={Asymptotic calibration},
  author={Foster, Dean P and Vohra, Rakesh V},
  journal={Biometrika},
  volume={85},
  number={2},
  pages={379--390},
  year={1998},
  publisher={Oxford University Press}
}

@article{kakade2008deterministic,
  title={Deterministic calibration and Nash equilibrium},
  author={Kakade, Sham M and Foster, Dean P},
  journal={Journal of Computer and System Sciences},
  volume={74},
  number={1},
  pages={115--130},
  year={2008},
  publisher={Elsevier}
}

@inproceedings{athey2015machine,
  title={Machine learning and causal inference for policy evaluation},
  author={Athey, Susan},
  booktitle={Proceedings of the 21th ACM SIGKDD international conference on knowledge discovery and data mining},
  pages={5--6},
  year={2015}
}

@article{feuerriegel2024causal,
  title={Causal machine learning for predicting treatment outcomes},
  author={Feuerriegel, Stefan and Frauen, Dennis and Melnychuk, Valentyn and Schweisthal, Jonas and Hess, Konstantin and Curth, Alicia and Bauer, Stefan and Kilbertus, Niki and Kohane, Isaac S and van der Schaar, Mihaela},
  journal={Nature Medicine},
  volume={30},
  number={4},
  pages={958--968},
  year={2024},
  publisher={Nature Publishing Group US New York}
}

@article{gao2024causal,
  title={Causal inference in recommender systems: A survey and future directions},
  author={Gao, Chen and Zheng, Yu and Wang, Wenjie and Feng, Fuli and He, Xiangnan and Li, Yong},
  journal={ACM Transactions on Information Systems},
  volume={42},
  number={4},
  pages={1--32},
  year={2024},
  publisher={ACM New York, NY}
}

@inproceedings{li2010contextual,
  title={A contextual-bandit approach to personalized news article recommendation},
  author={Li, Lihong and Chu, Wei and Langford, John and Schapire, Robert E},
  booktitle={Proceedings of the 19th international conference on World wide web},
  pages={661--670},
  year={2010}
}

@article{manski2004statistical,
  title={Statistical treatment rules for heterogeneous populations},
  author={Manski, Charles F},
  journal={Econometrica},
  volume={72},
  number={4},
  pages={1221--1246},
  year={2004},
  publisher={Wiley Online Library}
}

\newpage 
\appendix 


\section{General results with randomization}
\label{app:sec_random}
In the main text, we discuss the optimization problems for deterministic $\pi$, $\nu$, and $C$ under the assumption that the relevant maximizers are unique. Here, we expand the framework to full generality by allowing randomization of $\pi$, $\nu$, and $C$, without assuming such uniqueness.

Let $\Xi,\Xi_1,\Xi_2\sim\mathrm{Unif}(0,1)$ denote exogenous random seeds, independent of
$(X,\{Y(a)\}_{a\in\cA})$. All maps below are assumed measurable. 
We consider a randomized policy mapping $\pi:\mathcal X\times[0,1]\to\mathcal A$ and a randomized utility mapping $\nu:\cX\times[0,1]\to(-\infty,u_{\max}]$. 
The extra randomness aims to smooth up the threshold conditions in the RA-DPO and RA-CPO problems.

\subsection{Randomized RA-DPO}
First, we introduce the corresponding randomized RA-DPO programs.

\medskip
\noindent\textbf{Randomized RA-DPO (two seeds).}
\begin{equation}\label{eq:RA-DPO-two-seeds}\tag{RA-DPO-Rand-2}
\begin{aligned}
\max_{\nu,\pi,\cL(\Xi_1,\Xi_2)}\quad & \EE\big[\nu(X,\Xi_1)\big] \\
\text{s.t.}\quad
& \PP(u(\pi(X,\Xi_2),\,Y(\pi(X,\Xi_2))) \ge \nu(X,\Xi_1))\ge\ 1-\alpha, \\
& \Xi_1,\Xi_2{\sim}\mathrm{Unif}(0,1),\quad (\Xi_1,\Xi_2)\perp (X,\{Y(a)\}_{a\in\cA}). 
\end{aligned}
\end{equation}
Here, $\cL(\Xi_1,\Xi_2)$ denotes the joint law of the two uniform seeds. All the probabilities are taken over the joint distribution of $(X,\{Y(a)\}_{a\in\cA})$ and the exogenous random seeds. 

While it is natural to separately randomize the policy and the utility certificate, the above problem relying on two random seeds is equivalent to the problem with one seed governing both. This can be seen from the fact that a single Unif$[0,1]$ random variable can simulate any Borel probability distribution on $[0,1]^2$, yet we include a formal statement in Theorem~\ref{thm:equi-seed} with proof in Appendix~\ref{app:subsec_equi_seed}.

\medskip
\noindent\textbf{Randomized RA-DPO (single-seed).}
Consider the following optimization problem with one single seed:
\begin{equation}
\label{eq:RA-DPO-one-seed}\tag{RA-DPO-Rand} 
\begin{aligned}
\max_{\nu,\pi}\quad & \EE[\nu(X,\Xi)] \\
\text{s.t.}\quad
& \PP\big(u(\pi(X,\Xi),\,Y(\pi(X,\Xi)))  \ge \nu(X,\Xi) \big) \ge 1-\alpha,\\
& \Xi\sim\mathrm{Unif}(0,1),\quad \Xi\perp (X,\{Y(a)\}_{a\in\cA}).
\end{aligned}
\end{equation}

Hereafter, we focus on the single-seed RA-DPO-Rand problem as our RA-DPO problem of interest. 
We call a collection of measurable maps together with exogenous seeds attaining the optimum an optimal realization.
\begin{theorem}\label{thm:equi-seed}
 Randomized RA-DPO (two seeds) and randomized RA-DPO (single-seed) are equivalent with each other. From any optimal realization $(\nu_2,\pi_2,\Xi_1,\Xi_2)$ of Randomized RA-DPO (two seeds), we can construct an optimal realization $(\nu_1,\pi_1,\Xi)$ of randomized RA-DPO (single-seed) such that $\EE[\nu_1(X,\Xi)]=\EE[\nu_2(X,\Xi_1)]$. Vice versa,
 From any optimal realization $(\nu_1,\pi_1,\Xi)$ of randomized RA-DPO (single-seed), we can construct an optimal realization $(\nu_2,\pi_2,\Xi_1,\Xi_2)$ of randomized RA-DPO (two seeds) such that $\EE[\nu_1(X,\Xi)]=\EE[\nu_2(X,\Xi_1)]$.  
\end{theorem}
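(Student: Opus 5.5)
The plan is to show that the two programs have the same feasible values. Concretely, we will map any feasible realization of one program to a feasible realization of the other with the same objective value $\EE[\nu(X,\cdot)]$. Optimal realizations then map to optimal realizations, and the optimal values coincide.

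\emph{From single seed to two seeds.} Given a feasible $(\nu_1,\pi_1,\Xi)$ of \eqref{eq:RA-DPO-one-seed}, set $\Xi_1=\Xi_2=\Xi$. Take $\nu_2(x,\xi_1)=\nu_1(x,\xi_1)$ and $\pi_2(x,\xi_2)=\pi_1(x,\xi_2)$. The joint law $\cL(\Xi_1,\Xi_2)$ is then the law of $(\Xi,\Xi)$ on the diagonal. This is allowed, because \eqref{eq:RA-DPO-two-seeds} optimizes over the joint law and only requires uniform marginals and independence from $(X,\{Y(a)\}_{a\in\cA})$. The constraint event is literally the same event, so the coverage probability is unchanged. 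The objective is also unchanged: $\EE[\nu_2(X,\Xi_1)]=\EE[\nu_1(X,\Xi)]$.

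\emph{From two seeds to single seed.} Given a feasible $(\nu_2,\pi_2,\Xi_1,\Xi_2)$, we use the Borel isomorphism (randomization) lemma. For any Borel probability law $\mu$ on $[0,1]^2$ there is a measurable map $\phi=(\phi_1,\phi_2)\colon[0,1]\to[0,1]^2$ with $\phi(U)\sim\mu$ when $U\sim\mathrm{Unif}(0,1)$. One explicit route is to take $\phi_1$ to be the quantile function of the first marginal, applied to the first half of the binary digits of $U$. We then apply the conditional quantile function of the second coordinate given the first, using the interleaved remaining digits. Take $\mu=\cL(\Xi_1,\Xi_2)$ and define
\[
\nu_1(x,\xi)=\nu_2(x,\phi_1(\xi)),\qquad \pi_1(x,\xi)=\pi_2(x,\phi_2(\xi)).
\]
Since $\Xi\perp(X,\{Y(a)\}_{a\in\cA})$ and $(\phi_1(\Xi),\phi_2(\Xi))\overset{d}{=}(\Xi_1,\Xi_2)$, the joint law of $(X,\{Y(a)\}_{a\in\cA},\phi_1(\Xi),\phi_2(\Xi))$ equals that of $(X,\{Y(a)\}_{a\in\cA},\Xi_1,\Xi_2)$. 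Both are product laws with the same factors. Hence the constraint probability, which is a functional of this joint law, is preserved. The objective $\EE[\nu_1(X,\Xi)]=\EE[\nu_2(X,\Xi_1)]$ is preserved as well.

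\emph{Concluding optimality.} The two maps preserve both feasibility and objective value, so the suprema of the two programs are equal. Suppose $(\nu_2,\pi_2,\Xi_1,\Xi_2)$ is optimal for \eqref{eq:RA-DPO-two-seeds}. Its image is then feasible for \eqref{eq:RA-DPO-one-seed} with value equal to the common supremum, so it is optimal; the converse direction works the same way. The only nontrivial step is the measurable simulation of an arbitrary joint law on $[0,1]^2$ from one uniform seed. I would cite the standard Borel isomorphism theorem, or carry out the digit-splitting plus conditional-quantile construction above. Measurability of the conditional quantile follows from the existence of regular conditional distributions on Polish spaces. Everything else is bookkeeping with equality in distribution.
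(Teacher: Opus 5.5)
Your proposal is correct and matches the paper's proof. You use the diagonal coupling $\Xi_1=\Xi_2=\Xi$ in one direction, and in the other a measurable map $T=(T_1,T_2)$ with $T(\Xi)\sim\cL(\Xi_1,\Xi_2)$ composed into $\nu_2,\pi_2$, then transfer optimality through equality of the optimal values. Your digit-interleaving/conditional-quantile sketch of the simulation lemma and your product-law argument for preserving the constraint probability only add detail the paper states without proof.
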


Proposition~\ref{thm:non-randomized} further shows that for any randomized optimal solution $\pi$ to~\ref{eq:RA-DPO-one-seed}, its dependence on $\xi$ is through the value optimizer $\nu(x,\xi)$ paired with it. The proof is in Appendix~\ref{app:subsec_non-randomized}. 

\begin{prop}\label{thm:non-randomized}
Assume RA-DPO-Rand admits an optimal solution $(\nu,\pi)$.
Then there exists an optimal solution $(\nu,\tilde\pi)$ such that
\$
\nu(x,\xi)=\nu(x,\xi') \ \Longrightarrow\ \tilde\pi(x,\xi)=\tilde\pi(x,\xi'),
\qquad \forall x\in\cX,\ \forall \xi,\xi'\in[0,1].
\$
In particular, $\tilde\pi(x,\xi)$ depends on $\xi$ only through the value $\nu(x,\xi)$.
\end{prop}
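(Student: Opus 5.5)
Given an optimal $(\nu,\pi)$ for RA-DPO-Rand, I will keep $\nu$ fixed and replace $\pi$ by a policy $\tilde\pi$ that, for each $x$ and each certificate level $v$, picks a single action. This action should maximize the conditional probability of meeting the certificate. Concretely, for $x\in\cX$, $a\in\cA$ and $v\in(-\infty,u_{\max}]$, define
\[
p(x,a,v):=\PP\big(u(a,Y(a))\ge v \given X=x\big),
\]
and let $a^\dagger(x,v)\in\argmax_{a\in\cA}p(x,a,v)$, with ties broken by a fixed ordering of $\cA=\{a_1,\dots,a_{|\cA|}\}$. The maximum exists because $\cA$ is finite. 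Set $\tilde\pi(x,\xi):=a^\dagger(x,\nu(x,\xi))$. By construction $\tilde\pi(x,\xi)$ depends on $\xi$ only through $\nu(x,\xi)$, which gives the implication in the statement. The objective $\EE[\nu(X,\Xi)]$ is unchanged, so it only remains to verify feasibility.

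For feasibility, condition on $(X,\Xi)=(x,\xi)$. Since $\Xi$ is independent of $(X,\{Y(a)\}_{a\in\cA})$, the conditional law of $Y(a)$ given $(X,\Xi)=(x,\xi)$ equals its law given $X=x$. Hence
\[
\PP\big(u(\pi(x,\xi),Y(\pi(x,\xi)))\ge\nu(x,\xi)\given X=x,\Xi=\xi\big)=p\big(x,\pi(x,\xi),\nu(x,\xi)\big).
\]
This is at most $p(x,\tilde\pi(x,\xi),\nu(x,\xi))$, which is the corresponding conditional probability under $\tilde\pi$. Integrating over $(X,\Xi)$ gives
\[
\PP\big(u(\tilde\pi(X,\Xi),Y(\tilde\pi(X,\Xi)))\ge\nu(X,\Xi)\big)\ge \PP\big(u(\pi(X,\Xi),Y(\pi(X,\Xi)))\ge\nu(X,\Xi)\big)\ge1-\alpha.
\]
So $(\nu,\tilde\pi)$ is feasible and attains the optimal value, hence it is optimal.

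The main obstacle is measurability. We need $p(x,a,v)$ to be jointly measurable in $(x,v)$, so that $a^\dagger$ and therefore $\tilde\pi$ are measurable maps. I would handle this with a regular conditional distribution of $Y(a)$ given $X$. Then $(x,v)\mapsto p(x,a,v)$ is measurable in $x$ for each fixed $v$, and left-continuous and non-increasing in $v$. A Carathéodory-type argument, approximating $v$ through rationals from the left, gives joint measurability.

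Given joint measurability, the argmax with lexicographic tie-breaking is measurable. The reason is that each set $\{a^\dagger=a_k\}$ is a finite Boolean combination of sets of the form $\{p(\cdot,a_j,\cdot)\ge p(\cdot,a_l,\cdot)\}$ or the strict version, and each of these is measurable. Composing with the measurable map $(x,\xi)\mapsto(x,\nu(x,\xi))$ then gives a measurable $\tilde\pi$. The conditioning step in the feasibility argument also uses the regular conditional distribution, applied to the measurable event $\{u(\pi(X,\Xi),Y(\pi(X,\Xi)))\ge\nu(X,\Xi)\}$. Because $\pi$ takes only finitely many values, this event decomposes as a finite union over actions, so the conditioning is straightforward.
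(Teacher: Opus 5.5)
Your proposal is correct and follows the paper's proof essentially step for step. The paper also defines $p(x,a,t)=\PP(u(a,Y(a))\ge t\mid X=x)$, takes a deterministic smallest-index maximizer $a^*(x,t)$, and sets $\tilde\pi(x,\xi)=a^*(x,\nu(x,\xi))$. It then shows the coverage probability cannot decrease, by conditioning on $(X,\Xi)$ and using $\Xi\perp(X,\{Y(a)\}_{a\in\cA})$, and notes the objective is unchanged because it depends only on $\nu$. Your measurability argument (joint measurability of $p$ via left-continuity in $v$, and measurability of the lexicographic argmax) is an extra step the paper handles only by assuming all maps are measurable.
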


\subsection{Randomized RA-CPO}

Given prediction sets $\{C(X,a,\xi)\}_{a\in \cA}$, we define the (random-seed-dependent) risk-averse policy $\pi_{\RA}$ and reward $\nu_{\RA}$ through
\@\label{eq:pi-nu-ra-def}
\pi_\RA(x,\xi;C):=\argmax_{a\in\cA} \min_{y\in C(x,a,\xi)} u(a,y),
\qquad
\nu_\RA(x,\xi;C):=\max_{a\in\cA} \min_{y\in C(x,a,\xi)} u(a,y).
\@
We then analogously define the randomized versions of RA-CPO-1 and RA-CPO-2. 

\medskip
\noindent\textbf{Randomized RA-CPO-2 (uniform coverage over all randomized policies).}
Let $C:\cX\times\cA\times[0,1]\to 2^{\cY}$ be a randomized prediction-set rule.
Consider
\begin{equation}\label{eq:RA-CPO2-Xi}\tag{RA-CPO-2-Rand}
\begin{aligned}
  \max_{C}\quad 
& 
\EE_{X,\Xi}[\nu_\RA(X,\Xi;C)]\\
\text{s.t.}\quad
& \PP( Y(\pi(X,\Xi))\in C(X,\pi(X,\Xi),\Xi)) \ge\ 1-\alpha,
~~ \forall\,\pi:\cX\times[0,1]\to\cA, \\
& \Xi\sim\mathrm{Unif}(0,1),\quad \Xi\perp (X,\{Y(a)\}_{a\in\cA}).  
\end{aligned}
\end{equation}

\medskip
\noindent\textbf{Randomized RA-CPO-1 (coverage only along $\pi_\RA$).}
\begin{equation}\label{eq:RA-CPO1-Xi}\tag{RA-CPO-1-Rand}
\begin{aligned}
    \max_{C}\quad 
& 
\EE_{X,\Xi}[\nu_\RA(X,\Xi;C)] \\
\text{s.t.}\quad
& \mathbb P( Y(\pi_\RA(X,\Xi;C))\in C(X,\pi_\RA(X,\Xi;C),\Xi)) \ge\ 1-\alpha, \\
& \Xi\sim\mathrm{Unif}(0,1),\quad \Xi\perp (X,\{Y(a)\}_{a\in\cA}).
\end{aligned}
\end{equation}  
Note that all these programs strictly generalizes the deterministic formulations. 

\subsection{Equivalence between randomized optimization problems}

With randomization, RA-DPO, RA-CPO-1 and RA-CPO-2 remain equivalent (i.e., with the same optimal values). The following two theorems generalize the results in Section~\ref{subsec:opt_equiv}, with proofs in Appendix~\ref{app:subsec_dpo_cpo} and Appendix~\ref{app:subsec_equiv_cpo}, respectively.

\begin{theorem}\label{thm:random-cpo}
  \textnormal{\ref{eq:RA-CPO1-Xi}} and  \textnormal{\ref{eq:RA-CPO2-Xi}} are equivalent in the sense of Theorem~\ref{thm:equi-cpo}. 
\end{theorem}

\begin{theorem} \label{thm:random-dpo-cpo}
\textnormal{\ref{eq:RA-DPO-one-seed}} and \textnormal{\ref{eq:RA-CPO1-Xi}} are equivalent in the sense of Theorem~\ref{thm:equi-dpo}. 
\end{theorem}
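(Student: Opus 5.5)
We prove the two directions of the equivalence separately, mirroring the deterministic Theorem~\ref{thm:equi-dpo}. The target is: from any optimal realization $(\nu,\pi,\Xi)$ of \ref{eq:RA-DPO-one-seed} we build a feasible randomized rule $C$ for \ref{eq:RA-CPO1-Xi} with $\EE[\nu_\RA(X,\Xi;C)]=\EE[\nu(X,\Xi)]$. Conversely, from any feasible $C$ for \ref{eq:RA-CPO1-Xi} we build a feasible pair for \ref{eq:RA-DPO-one-seed} with the same objective. The two constructions together show that the optimal values coincide, and each maps optimizers to optimizers.

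\emph{From RA-CPO-1-Rand to RA-DPO-Rand.} This is the easy direction. Given feasible $C$, set $\pi(x,\xi):=\pi_\RA(x,\xi;C)$ and $\nu(x,\xi):=\nu_\RA(x,\xi;C)$, choosing a measurable tie-breaking rule for the argmax. On the event $\{Y(\pi_\RA)\in C(X,\pi_\RA,\Xi)\}$ we have
\[
u(\pi_\RA,Y(\pi_\RA))\ge \min_{y\in C(X,\pi_\RA,\Xi)}u(\pi_\RA,y)=\nu_\RA(X,\Xi;C).
\]
This also covers the empty-set case, since then $\nu_\RA=u_{\max}$ and the event has probability zero. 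Hence the RA-DPO-Rand constraint follows from the PCC constraint, and the objectives agree. Since $\nu_\RA\le u_{\max}$, the certificate lies in the admissible range.

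\emph{From RA-DPO-Rand to RA-CPO-1-Rand.} Given an optimal $(\nu,\pi)$, we first use Proposition~\ref{thm:non-randomized} to assume without loss of generality that $\pi(x,\xi)$ depends on $\xi$ only through $\nu(x,\xi)$. We then define
\[
C(x,\pi(x,\xi),\xi):=\{y: u(\pi(x,\xi),y)\ge \nu(x,\xi)\},\qquad C(x,a,\xi):=\varnothing\ \text{ for } a\ne\pi(x,\xi).
\]
The empty sets must be avoided, because they give worst-case value $u_{\max}$ and would hijack the argmax. So for $a\neq\pi(x,\xi)$ we instead take $C(x,a,\xi):=\cY$, or more robustly $\{y:u(a,y)<\nu(x,\xi)\}\cup\{y_a\}$ for a minimizer $y_a$ of $u(a,\cdot)$, so that the inner minimum is at most $\min_y u(a,y)$.

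\textbf{Main obstacle.} We must guarantee that the max--min rule selects $\pi(x,\xi)$ and that its value equals $\nu(x,\xi)$. Two pathologies can break this.
\begin{itemize}
\item If $\nu(x,\xi)<\min_y u(a,y)$ for some other action $a$, then $a$ would beat $\pi$.
\item The set for $\pi(x,\xi)$ may be empty, or the infimum over it may exceed $\nu$.
\end{itemize}
We handle both by improving $(\nu,\pi)$ while preserving optimality, in the spirit of the deterministic proof.
\begin{itemize}
\item Replace $\nu(x,\xi)$ by $\tilde\nu(x,\xi):=\min_{y\in C(x,\pi,\xi)}u(\pi,y)\ge\nu$. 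This is still feasible, since the coverage event is unchanged, and by optimality $\tilde\nu=\nu$ almost surely.
\item Wherever some action $a$ has $\min_y u(a,y)>\nu(x,\xi)$, switch the policy to that action with certificate $\min_y u(a,y)$. This holds with probability one and strictly increases the objective, so by optimality such points are null.
\end{itemize}
After these modifications, for almost every $(x,\xi)$ the max--min rule attains value $\nu(x,\xi)$, with $\pi(x,\xi)$ among the maximizers. We then fix the tie-breaking convention of $\pi_\RA$ so that it returns $\pi(x,\xi)$. If ties are problematic, we can fold the desired choice into the randomized set definitions. Once $\pi_\RA=\pi$, the PCC constraint becomes exactly the RA-DPO-Rand constraint, and the objectives match.

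Combining the two directions with the equality of optimal values gives the stated correspondence of optimal solutions.
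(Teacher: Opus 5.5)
Your direction from RA-CPO-1-Rand to RA-DPO-Rand is correct and is exactly the paper's Step 2. The gap is in the other direction, at the step where you ``fix the tie-breaking convention of $\pi_\RA$ so that it returns $\pi(x,\xi)$.'' The rule $\pi_\RA(\cdot,\cdot;C)$ is determined by $C$ through a fixed selection from the argmax, so you cannot re-choose it to suit a given $(\nu,\pi)$. Your two modifications only exclude $\inf_{y}u(a,y)>\nu(x,\xi)$ for $a\neq\pi(x,\xi)$. They do not exclude the tie $\inf_y u(a,y)=\nu(x,\xi)$, which can carry positive mass at an optimum (e.g.\ when another action guarantees the same certificate surely). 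Folding the choice into the sets cannot break such a tie: any $C(x,a,\xi)\subseteq\cY$ has $\inf_{y\in C(x,a,\xi)}u(a,y)\ge\inf_{y\in\cY}u(a,y)$, so the off-policy value can never be pushed below the tie level. Worse, consider your ``more robust'' off-policy sets $\{y:u(a,y)<\nu\}\cup\{y_a\}$. Whenever $\pi_\RA$ lands on such an $a$, the conditional coverage there is only $\PP(u(a,Y(a))<\nu \text{ or } Y(a)=y_a\given X=x)$. This can be small, so policy-coupled coverage can actually fail.

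The missing idea, which is the paper's Step 1, is that you never need $\pi_\RA=\pi$ or $\nu_\RA=\nu$. Take $C(x,a,\xi)=\cY$ for all $a\neq\pi(x,\xi)$. At each $(x,\xi)$ there are two cases. If $\pi_\RA=\pi$, the two coverage events coincide. If $\pi_\RA\neq\pi$, then $C(x,\pi_\RA,\xi)=\cY$ and coverage is certain. Hence $\{u(\pi,Y(\pi))\ge\nu\}\subseteq\{Y(\pi_\RA)\in C(X,\pi_\RA,\Xi)\}$ under any tie-breaking rule. Moreover $\nu_\RA\ge\inf_{y\in C(x,\pi,\xi)}u(\pi,y)\ge\nu$, including under the empty-set convention. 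Combined with your easy direction, this gives $\EE[\nu_\RA(X,\Xi;C)]\ge\EE[\nu(X,\Xi)]=\OPT_{\textnormal{DPO}}\ge\OPT_{\textnormal{CPO-1}}\ge\EE[\nu_\RA(X,\Xi;C)]$, so the values agree and $C$ is optimal. With this observation, three of your steps become unnecessary: the appeal to Proposition~\ref{thm:non-randomized}, the replacement $\nu\mapsto\tilde\nu$, and the action-switching argument. They are valid, but they only force pointwise equalities that optimality already delivers in expectation.
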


Proposition~\ref{thm:random-optimal-set} generalizes Proposition~\ref{thm:optimal-set}, whose proof is in Appendix~\ref{app:subsec_optimal_set}. 

\begin{prop}\label{thm:random-optimal-set}
  For a fixed feature $X=x$, a fixed seed $\Xi=\xi$ and a coverage value $t \in [0,1]$, among all the prediction sets $\{C(x,a,\xi)\}_{a\in \cA}$ that satisfy $\PP(Y(a)\in C(x,a,\xi)\given X=x) \ge t$ for every action $a \in \cA$, the following prediction sets has the largest risk averse utility $\nu_\RA(\cdot,\cdot;C)$ defined in~\eqref{eq:pi-nu-ra-def}:
  \[
  C(x,a,\xi)=\{y \in \cY \given u(a,y) \ge \gamma(x,t,a)\} ~~\text{for every action}~~ a \in \cA.
  \]
  Further, we have $\nu_\RA(x,\xi;C)=\theta(x,t)$.
\end{prop}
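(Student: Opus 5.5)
The plan is to prove two things: that the threshold sets are feasible and attain $\theta(x,t)$, and that no feasible collection can exceed $\theta(x,t)$. Throughout, $x$ and $\xi$ are fixed. The seed $\xi$ enters only as an index. The constraint is a conditional probability over $Y(a)\mid X=x$ with $\xi$ held fixed, so the argument is the same as for Proposition~\ref{thm:optimal-set}, done pointwise in $\xi$. Write $Z_a=u(a,Y(a))$ under the law of $Y(a)\mid X=x$, so that $\gamma(x,t,a)=\uquant_{1-t}[Z_a]$.

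\emph{Step 1 (feasibility).} The first thing to show is that $\PP(Z_a\ge \gamma(x,t,a))\ge t$ for every $a$. Set $q=\uquant_{1-t}[Z_a]=\sup\{z:\PP(Z_a\le z)\le 1-t\}$ when $t>0$. For every $z<q$ we have $\PP(Z_a\le z)\le 1-t$. Taking $z\uparrow q$ gives $\PP(Z_a<q)\le 1-t$, hence $\PP(Z_a\ge q)\ge t$. This left-limit argument is exactly where the upper-quantile definition~\eqref{eq:def_upp_qt} is needed. The edge cases are handled separately:
\begin{itemize}
\item If $t=0$, then $q=u_{\max}$. The constraint is trivial, and the set $\{y:u(a,y)\ge u_{\max}\}$ is either empty (infimum $u_{\max}$ by convention) or has infimum $u_{\max}$.
\item If the supremum defining $q$ is $-\infty$, the set is all of $\cY$.
\end{itemize}

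\emph{Step 2 (value).} By construction, $\inf_{y\in C(x,a,\xi)}u(a,y)\ge \gamma(x,t,a)$, including the empty-set convention, since $\gamma\le u_{\max}$. Therefore $\nu_\RA\ge\theta(x,t)$, and it suffices to prove the upper bound in Step 3.

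\emph{Step 3 (optimality).} Take any feasible collection $\{C'(x,a,\xi)\}_a$ and set $m_a=\inf_{y\in C'(x,a,\xi)}u(a,y)$. I will show $m_a\le \gamma(x,t,a)$ for every $a$; taking the maximum over $a$ then gives $\nu_\RA(x,\xi;C')\le\theta(x,t)$. Suppose instead that $m_a>\gamma(x,t,a)$. Then $C'(x,a,\xi)\subseteq\{y:u(a,y)\ge m_a\}$, so
\[
t\le \PP(Z_a\ge m_a)=1-\PP(Z_a<m_a).
\]
Pick $z$ with $\gamma(x,t,a)<z<m_a$. Because $z$ exceeds the supremum in the definition of $q$, we have $\PP(Z_a\le z)>1-t$. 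Then $\PP(Z_a<m_a)\ge\PP(Z_a\le z)>1-t$, which contradicts the display above. If $C'$ is empty, feasibility forces $t=0$, where $\gamma=u_{\max}\ge m_a$, so the bound still holds. Combining Steps 2 and 3 gives $\nu_\RA(x,\xi;C)=\theta(x,t)$, which is also the maximum over feasible collections.

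The main obstacle is the quantile bookkeeping. The left-limit argument in Step 1, the choice of an intermediate $z$ in Step 3, and the boundary cases $t\in\{0,1\}$ together with the empty-set convention all need care. The rest is immediate.
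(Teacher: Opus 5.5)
Your proof is correct and follows essentially the same route as the paper. You show feasibility of the threshold sets via the left limit $\PP(Z_a<q)\le 1-t$, get $\nu_\RA\ge\theta(x,t)$ by construction, and prove the upper bound $m_a\le\gamma(x,t,a)$. Your intermediate point $z\in(\gamma(x,t,a),m_a)$ is just the contrapositive of the paper's argument that $m_a-\epsilon$ lies in $\{z:\PP(Z_a\le z)\le 1-t\}$ for every $\epsilon>0$. Your explicit handling of $t=0$ and the empty-set convention is slightly more careful than the paper, which leaves these cases implicit.
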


\subsection{Reformulation of randomized RA-CPO}
Like the discussion without randomization,~\ref{eq:RA-CPO2-Xi} is equivalent to the following optimization program:
\begin{equation}\label{eq:RA-CPO2'-Xi}\tag{RA-CPO-2'-Rand}
\begin{aligned}
\max_{t:\cX \times [0,1] \to [0,1]} &\EE_{X,\Xi}[\theta(X,t(X,\Xi))]\\
\text{s.t.} ~~&\EE_{X,\Xi}[t(X,\Xi)]\ge 1-\alpha 
\end{aligned}
\end{equation} 
In particular, if $t^*(x,\xi)$ is an optimal solution to~\ref{eq:RA-CPO2'-Xi}, then an optimal solution to~\ref{eq:RA-CPO2-Xi} is given by 
\$
C^*(x,a,\xi)=\{y \in \cY \given u(a,y) \ge \gamma (x,t^*(x,\xi),a)\} ~~\text{for every action} ~~a \in \cA.
\$
Intuitively, in RA-CPO-2'-Rand, the function $t(x,\xi)$ allocates the conditional (worst-case) coverage lower bounds across the $(x,\xi)$ values so as to satisfy the coverage guarantee while maximizing the corresponding optimal objective $\theta(x,t(x,\xi))$.

For $\beta\ge 0$ and $x\in\mathcal X$, define the pointwise maximizer set
\$
S(x,\beta):=\Big\{s\in[0,1]\colon  \theta(x,s)+\beta s=\max_{t\in[0,1]}(\theta(x,t)+\beta t)\Big\},
\$
and its endpoints $t_{\low}(x,\beta):=\inf S(x,\beta)$ and $t_{\high}(x,\beta):=\sup S(x,\beta)$.
The proof of Theorem~\ref{thm:seed_unified} is in Appendix~\ref{app:subsec_proof_unified_opt}. 

\begin{theorem}\label{thm:seed_unified}
Let $X\sim P_X$  and let $\Xi\sim \textnormal{Unif}(0,1)$ be the independent exogenous random variable. 
Then there exist $\beta^*\ge 0$ and a constant $c\in[0,1]$ such that 
\$
t^*(x,\xi)=
\begin{cases}
t_{\low}(x,\beta^*), & \xi\le c,\\
t_{\high}(x,\beta^*), & \xi>c,
\end{cases}
\$
is an optimal solution to \textnormal{\ref{eq:RA-CPO2'-Xi}} and satisfies $\EE[t^*(X,\Xi)]= 1-\alpha$.
\end{theorem}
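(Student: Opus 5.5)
The plan is a Lagrangian-duality argument for \textnormal{\ref{eq:RA-CPO2'-Xi}}. Randomization between the two endpoints of the pointwise maximizer set $S(x,\beta^*)$ closes the gap that a deterministic choice can leave in the coverage constraint. Write $h(x,\beta):=\max_{t\in[0,1]}\{\theta(x,t)+\beta t\}$, and set $\Phi(\beta):=\EE[t_{\low}(X,\beta)]$ and $\Psi(\beta):=\EE[t_{\high}(X,\beta)]$.

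\textbf{Step 1: structure of $S(x,\beta)$.} Because $\theta(x,\cdot)$ is non-increasing and left-continuous, it is upper semicontinuous on $[0,1]$. Hence $\theta(x,\cdot)+\beta\,\cdot$ attains its maximum, $S(x,\beta)$ is nonempty and closed, and $t_{\low}(x,\beta),t_{\high}(x,\beta)\in S(x,\beta)$.

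\textbf{Step 2: monotonicity in $\beta$.} For $\beta<\beta'$ I will show $t_{\high}(x,\beta)\le t_{\low}(x,\beta')$. Take $s\in S(x,\beta)$ and $s'\in S(x,\beta')$, and add the two optimality inequalities. This gives $(\beta'-\beta)(s'-s)\ge 0$, so $s'\ge s$. Consequently $\Phi$ and $\Psi$ are nondecreasing.

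\textbf{Step 3: boundary values.} At $\beta=0$, monotonicity of $\theta$ makes $0$ a maximizer, so $t_{\low}(x,0)=0$ and $\Phi(0)=0$. Since $u$ is bounded, $\theta$ takes values in a bounded interval $[\underline u,u_{\max}]$. For $\beta>u_{\max}-\underline u$ the unique maximizer is $s=1$, so $\Psi(\beta)=1$.

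\textbf{Step 4: choice of $\beta^*$.} Define $\beta^*:=\inf\{\beta\ge0:\Psi(\beta)\ge 1-\alpha\}$, which is finite by Step 3. The goal is $\Phi(\beta^*)\le 1-\alpha\le\Psi(\beta^*)$.

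For the right inequality, I would show that $\Psi$ is right-continuous, or more directly that $t_{\high}(x,\beta_n)\downarrow s\in S(x,\beta^*)$ for $\beta_n\downarrow\beta^*$. This uses the same limiting argument as below together with $t_{\high}(x,\beta^*)\ge s$ from closedness. Monotone convergence then gives $\Psi(\beta^*)\ge 1-\alpha$.

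For the left inequality with $\beta^*>0$, take $\beta_n\uparrow\beta^*$. The values $s_n=t_{\high}(x,\beta_n)$ are nondecreasing with some limit $\bar s$. Passing to the limit in $\theta(x,s_n)+\beta_n s_n\ge\theta(x,t)+\beta_n t$, and using $\limsup\theta(x,s_n)\le\theta(x,\bar s)$ from upper semicontinuity, gives $\bar s\in S(x,\beta^*)$. Hence $t_{\low}(x,\beta^*)\le\bar s$, and by monotone convergence $\Phi(\beta^*)\le\lim_n\Psi(\beta_n)\le 1-\alpha$. If $\beta^*=0$, the inequality holds because $\Phi(0)=0$.

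\textbf{Step 5: choice of $c$.} Since $\EE[t^*(X,\Xi)]=c\,\Phi(\beta^*)+(1-c)\Psi(\beta^*)$ is affine in $c$, Step 4 yields some $c\in[0,1]$ with $\EE[t^*]=1-\alpha$. Measurability of $t_{\low}$ and $t_{\high}$ in $x$ follows from measurability of $\theta$. One route is to write them as infima and suprema over rational $s$ of measurable events, using the semicontinuity from Step 1.

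\textbf{Step 6: optimality via weak duality.} Let $t$ be any feasible allocation. Then
\[
\EE[\theta(X,t)]\le \EE[\theta(X,t)+\beta^* t]-\beta^*(1-\alpha)\le \EE[h(X,\beta^*)]-\beta^*(1-\alpha).
\]
Because $t^*(x,\xi)\in S(x,\beta^*)$ pointwise, the right-hand side equals $\EE[\theta(X,t^*)]+\beta^*\bigl(\EE[t^*]-(1-\alpha)\bigr)=\EE[\theta(X,t^*)]$, which proves that $t^*$ is optimal.

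The main obstacle is Step 4, especially the left-limit argument placing $\bar s$ in $S(x,\beta^*)$. This is exactly where the left-continuity (upper semicontinuity) of $\theta$ from the upper-quantile convention is needed. If $\theta$ had jumps in the other direction, the maximizer sets could fail to close up and $\Phi(\beta^*)\le 1-\alpha$ could fail.
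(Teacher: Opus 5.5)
Your proof is correct in outline and takes a different route from the paper's. However, one claim in Step 3 is false and needs a one-line repair. It is not true that $s=1$ is the unique maximizer once $\beta>u_{\max}-\underline u$. Beating every $s$ near $1$ requires $\theta(x,s)-\theta(x,1)\le\beta(1-s)$, and this can fail for every finite $\beta$. For example, take a single action with $u(a,Y(a))\mid X=x$ having CDF $z^2$ on $[0,1]$ and $u_{\max}=1$. Then $\theta(x,s)=\sqrt{1-s}$, and the maximizer of $\sqrt{1-s}+\beta s$ is $\max\{0,\,1-1/(4\beta^2)\}<1$ for every $\beta>0$, so $\Psi(\beta)<1$ always. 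Step 4 only needs $\{\beta\ge0:\Psi(\beta)\ge1-\alpha\}\neq\emptyset$. This follows by comparing any $s\in S(x,\beta)$ with $1$: from $\theta(x,s)+\beta s\ge\theta(x,1)+\beta$ you get $1-s\le(u_{\max}-\underline u)/\beta$. Hence $\Psi(\beta)\ge\Phi(\beta)\ge1-\alpha$ whenever $\beta\ge(u_{\max}-\underline u)/\alpha$. This is the same estimate the paper uses to show its dual function is coercive.

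With that fix, your argument and the paper's share the same outer structure and differ only in how $\beta^*$ is located. The shared parts are closedness of $S(x,\beta)$ via left-continuity, the seed mixture of $t_{\low}$ and $t_{\high}$, and the closing weak-duality computation.

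The paper proceeds through convex duality:
\begin{itemize}
\item It forms $D(\beta)=\EE[m_\beta(X)]-\beta(1-\alpha)$ and takes a minimizer using convexity and coercivity.
\item It computes the one-sided derivatives of $\beta\mapsto m_\beta(x)$ as $t_{\high}(x,\beta)$ and $t_{\low}(x,\beta)$, using the same upper-semicontinuity limit you use.
\item It passes these through the expectation by dominated convergence.
\item It reads off $\EE[t_{\low}(X,\beta^*)]\le1-\alpha\le\EE[t_{\high}(X,\beta^*)]$ from the first-order condition $0\in\partial D(\beta^*)+N_{[0,\infty)}(\beta^*)$.
\end{itemize}

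You bypass the subdifferential calculus entirely. The exchange inequality in Step 2 orders the maximizer sets in $\beta$. You then define $\beta^*$ explicitly as the generalized inverse $\inf\{\beta:\Psi(\beta)\ge1-\alpha\}$. The same sandwich follows from right-continuity of $\Psi$ and from $\Phi(\beta^*)$ being at most the left limit of $\Psi$ at $\beta^*$.

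What each route buys:
\begin{itemize}
\item Yours is more elementary, and it gives a constructive description of $\beta^*$ that matches the empirical rule defining $\hat\beta$ in Step 2 of PC-RACP.
\item The paper's identifies $\beta^*$ as a dual minimizer and gives the full subdifferential of $D$. Your final weak-duality step still yields strong duality $\OPT_2=D(\beta^*)$ a posteriori.
\end{itemize}
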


\section{Technical proofs}
\subsection{Proof of Theorem~\ref{thm:op-decision}}
\label{app:subsec_op_decision}

\begin{proof}[Proof of Theorem~\ref{thm:op-decision}]
Write $\pi_{\RA}(\cdot)=\pi_\RA(\cdot;C)$ and $\nu_\RA(\cdot)=\nu_\RA(\cdot;C)$ for convenience.  
To prove Theorem~\ref{thm:op-decision}, it suffices to show that $\inf_{P\in \cF(C)}\nu(\pi_{\RA},P) = \max_{\pi\in\Pi}\inf_{P\in \cF(C)}\nu(\pi,P)$. 
%
We first show that no policy can achieve a worst-case value larger than
\$
\inf_{x:C(x,\pi_{\RA}(x))\neq\emptyset}\nu_{\RA}(x).
\$
Let $S:=\{x\in\cX:C(x,\pi_{\RA}(x))\neq\emptyset\}$. Since the prediction sets satisfy the coverage constraint along $\pi_{\RA}$, the set $S$ is nonempty.

Fix an arbitrary policy $\pi\in\Pi$. For any $\delta>0$, choose $x_\delta\in S$ such that
\$
\nu_{\RA}(x_\delta)\le \inf_{x\in S}\nu_{\RA}(x)+\delta.
\$
Since $x_\delta\in S$, choose $\widetilde y\in C(x_\delta,\pi_{\RA}(x_\delta))$. For any $\varepsilon>0$, choose $y^*_{\pi,\varepsilon}\in C(x_\delta,\pi(x_\delta))$ such that
\$
u(\pi(x_\delta),y^*_{\pi,\varepsilon})\le \inf_{y\in C(x_\delta,\pi(x_\delta))}u(\pi(x_\delta),y)+\varepsilon,
\$
whenever $C(x_\delta,\pi(x_\delta))\neq\emptyset$. If $C(x_\delta,\pi(x_\delta))=\emptyset$, choose $y^*_{\pi,\varepsilon}$ arbitrarily; under our convention $\inf_{y\in\emptyset}u(\pi(x_\delta),y)=u_{\max}$, the same inequality holds since $u\le u_{\max}$.

Define $\Omega^*_{\pi,\delta,\varepsilon}$ by placing all mass on $X=x_\delta$, setting $Y(\pi(x_\delta))=y^*_{\pi,\varepsilon}$, setting $Y(\pi_{\RA}(x_\delta))=\widetilde y$, and choosing the remaining potential outcomes arbitrarily. When $\pi(x_\delta)=\pi_{\RA}(x_\delta)$, take $\widetilde y=y^*_{\pi,\varepsilon}$. Then
\$
\Omega^*_{\pi,\delta,\varepsilon}\bigl(Y(\pi_{\RA}(X))\in C(X,\pi_{\RA}(X))\bigr)=1,
\$
so $\Omega^*_{\pi,\delta,\varepsilon}\in\cF(C)$. Under this distribution,
\$
\begin{aligned}
\nu(\pi,\Omega^*_{\pi,\delta,\varepsilon})&=u(\pi(x_\delta),y^*_{\pi,\varepsilon})\\
&\le \inf_{y\in C(x_\delta,\pi(x_\delta))}u(\pi(x_\delta),y)+\varepsilon\\
&\le \max_{a\in\cA}\inf_{y\in C(x_\delta,a)}u(a,y)+\varepsilon\\
&=\nu_{\RA}(x_\delta)+\varepsilon\\
&\le \inf_{x\in S}\nu_{\RA}(x)+\delta+\varepsilon.
\end{aligned}
\$
Therefore,
\$
\inf_{P\in\cF(C)}\nu(\pi,P)\le \inf_{x\in S}\nu_{\RA}(x)+\delta+\varepsilon.
\$
Letting $\delta\downarrow0$ and $\varepsilon\downarrow0$, we obtain
\$
\inf_{P\in\cF(C)}\nu(\pi,P)\le \inf_{x:C(x,\pi_{\RA}(x))\neq\emptyset}\nu_{\RA}(x).
\$
Since $\pi\in\Pi$ was arbitrary, taking the maximum over $\pi\in\Pi$ gives
\@\label{eq:bd1}
\max_{\pi\in\Pi}\inf_{P\in\cF(C)}\nu(\pi,P)\le \inf_{x:C(x,\pi_{\RA}(x))\neq\emptyset}\nu_{\RA}(x).
\@

In the next, we proceed to prove that the utility of $\pi_{\RA}$ further upper bounds the right-handed side of~\eqref{eq:bd1}. 
Consider any distribution $P \in \cF(C)$, which obeys $P(Y(\pi_{\RA}(X)) \in C(X,\pi_{\RA}(X))) \ge 1-\alpha$.

Note that on the event  $Y(\pi_\RA(X)) \in C(X,\pi_\RA(X))$, we have 
\$
u(\pi_\RA,Y(\pi_\RA(X))) \ge \inf_{y \in C(X,\pi_\RA(X))} u(\pi_\RA(X),y)=\max_{a \in \cA} \inf_{y \in C(X,a)} u(a,y) 
=\nu_\RA(X).
\$
The coverage guarantee 
$P(Y(\pi_\RA(X)) \in C(X,\pi_\RA(X))) \ge 1-\alpha$ 
thus implies 
\$
P(u(\pi_\RA(X),Y(\pi_\RA(X)))\ge \nu_\RA(X)) \ge 1-\alpha.
\$
As such, $\nu_{\RA}(\cdot)$ is a feasible utility certificate for the policy $\pi_{\RA}$ under $P$. Therefore, by the definition of $\nu(\pi_{\RA},P)$ as the maximum expected value over all feasible utility certificates, we have
\$
\nu(\pi_\RA, P)\geq \E_X[\nu_\RA(X)] \ge \inf_{x:C(x,\pi_\RA(x))\neq \emptyset}\nu_\RA(x).
\$
Now, by the arbitrariness of  $P \in \cF( C)$, we have
\@\label{eq:bd2}
\inf_{P \in \cF(C)}\nu(\pi_\RA,P)\ge \inf_{x:C(x,\pi_\RA(x))\neq \emptyset}\nu_\RA(x).
\@
Combining~\eqref{eq:bd1} and~\eqref{eq:bd2}, we obtain the desired result. 
\end{proof}

\subsection{Proof of Theorem \ref{thm:equi-cpo} and \ref{thm:random-cpo}}
\label{app:subsec_equiv_cpo}

The same construction proves both the deterministic and randomized statements. We therefore present only the randomized version.

\begin{proof}[Proof of Theorem~\ref{thm:random-cpo}]
    
We use the function $\cT(\cdot)$ to denote the risk-averse objective of  prediction sets, i.e., $\cT(C) =   \EE_{X,\Xi}[\nu_\RA(X,\Xi;C)]$ for any $\{C(x,a;\xi)\}_{a\in \cA}$.

\vspace{-0.5em}
\paragraph{Step 1: 
From randomized RA-CPO-1 to randomized RA-CPO-2.}
We first show that for any optimal solution to randomized RA-CPO-1, there exists a feasible solution to randomized RA-CPO-2 with identical objective value. 

Let $\{C^*_1(x,a,\xi)\}_{a\in \cA}$ be an optimal solution to~\ref{eq:RA-CPO1-Xi}. For each feature $x$ and random seed $\xi$, we denote the conditional coverage under the policy $\pi_{\RA}(x,\xi;C_1^*)$   as
\$
\beta(x,\xi) = \PP \big(Y(\pi_\RA(X,\Xi;C_1^*)) \in C_1^*(X,\pi_\RA(X,\Xi;C_1^*),\Xi)\given X=x,\Xi=\xi \big) .
\$
The feasibility of $C_1^*$ implies   $\EE[\beta(X,\Xi)]\geq 1-\alpha$. 
We now define the prediction set 
\$
C_2(x,a,\xi) = \begin{cases}
    C_1^*(x,a,\xi),\quad & \text{if}~~\PP(Y(a)\in C_1^*(x,a,\xi)\given X=x,\Xi=\xi) \geq \beta(x,\xi),\\ 
    \cY, \quad &  \text{if}~~\PP(Y(a)\in C_1^*(x,a,\xi)\given X=x,\Xi=\xi) < \beta(x,\xi).
\end{cases}
\$
It is clear that for any policy $\pi\in \Pi$, we must have
\$
\PP(Y(\pi(X,\Xi))\in C_2(X,\pi(X,\Xi),\Xi))
\geq \EE[\beta(X,\Xi)]\geq 1-\alpha.
\$
Therefore, $\{C_2(x,a,\xi)\}_{a\in \cA}$ is feasible for randomized RA-CPO-2.  

On the other hand, by construction, we know $C_2(x,a,\xi)=C_1^*(x,a,\xi)$ for $a = \pi_{\RA}(x,\xi;C_1^*)$. Therefore,
\$
\nu_{\RA}(x,\xi;C_2) \geq \nu_{\RA}(x,\xi;C_1^*).
\$
Moreover, for any $a\in\cA$, the construction of $C_2$ either keeps $C_1^*(x,a,\xi)$ unchanged or replaces it by $\cY$. In the latter case,
\$
\inf_{y\in \cY}u(a,y)
\leq
\inf_{y\in C_1^*(x,a,\xi)}u(a,y).
\$
Thus the worst-case utility associated with each action cannot increase when passing from $C_1^*$ to $C_2$, and hence
\$
\nu_{\RA}(x,\xi;C_2) \leq \nu_{\RA}(x,\xi;C_1^*).
\$
Combining the two inequalities, we have
\$
\nu_{\RA}(x,\xi;C_2) = \nu_{\RA}(x,\xi;C_1^*)
\$
for any value of $(x,\xi)$. This further implies $\cT(C_2) = \cT(C_1^*)$ and completes the proof of Step 1. 

\paragraph{Step 2.} 
Let $\OPT_1$ and $\OPT_2$ denote the optimal objective values of randomized RA-CPO-1 and RA-CPO-2, respectively. Note that the feasibility set of randomized RA-CPO-1 is a superset of that of randomized RA-CPO-2 since the constraint is weaker. As such,
\$
\OPT_1\geq \OPT_2.
\$
On the other hand, Step 1 shows that for any optimal solution $C_1^*$ to randomized RA-CPO-1, we can construct a feasible solution $C_2$ to randomized RA-CPO-2 such that
\$
\cT(C_2)=\cT(C_1^*)=\OPT_1.
\$
Therefore, $\OPT_2\geq \OPT_1$. Combining the two inequalities gives $\OPT_1=\OPT_2$. Hence, the constructed $C_2$ is an optimal solution to randomized RA-CPO-2.

Moreover, for any optimal solution $C_2^*(x,a,\xi)$ to randomized RA-CPO-2, since it is also feasible for randomized RA-CPO-1 and
\$
\cT(C_2^*)=\OPT_2=\OPT_1,
\$
it must also be an optimal solution to randomized RA-CPO-1.

\end{proof}

\subsection{Proof of Theorem~\ref{thm:equi-dpo} and \ref{thm:random-dpo-cpo}}
\label{app:subsec_dpo_cpo}

The deterministic proof follows the same argument as the randomized proof. For this reason, we present the randomized version only, since the deterministic formulation is recovered by taking $\pi(x,\xi)=\pi(x)$ and $\nu(x,\xi)=\nu(x)$.

\begin{proof}[Proof of Theorem~\ref{thm:random-dpo-cpo}] We show the equivalence in two steps.

\vspace{-0.5em}
\paragraph{Step 1: From randomized RA-DPO to randomized RA-CPO-1.} 
In this part, we aim to show that for any optimal solution $(\pi^*,\nu^*)$ to  randomized RA-DPO, there exists prediction sets $\{C_1^*(x,a,\xi)\}_{a\in \cA}$ feasible in randomized RA-CPO-1 with a no smaller objective value. 

Consider any  feasible solution $\pi^*(x,\xi)$, $\nu^*(x,\xi)$ to~\ref{eq:RA-DPO-one-seed}, which obeys  \$
\PP_{X,\Xi}(u(\pi^*(X,\Xi),Y(\pi^*(X,\Xi)))\ge \nu^*(X,\Xi))\ge 1-\alpha,
\$ 
and maximizes the target function $\E_{X,\Xi}[\nu^*(X,\Xi)]$. Then we build the prediction set 
\$
C_1^*(x,a,\xi) = \begin{cases}
    \{y\in \cY \given u(\pi^*(x,\xi),y)\ge \nu^*(x,\xi)\},\quad &a = \pi^*(x,\xi),\\ 
      \cY,\quad &a\neq \pi^*(x,\xi).
\end{cases}
\$ Since $C_1^*(x,a,\xi)=\cY$ when $a \neq \pi^*(x,\xi)$, we have
\$
\PP\big(Y(\pi_\RA(X,\Xi;C_1^*))\in C_1^*(X,\pi_\RA(X,\Xi;C_1^*),\Xi) \big)&\ge \PP\big(Y(\pi^*(X,\Xi))\in C_1^*(X,\pi^*(X,\Xi),\Xi) \big)\\
&= \PP(u(\pi^*(X,\Xi),Y(\pi^*(X,\Xi)))\ge \nu^*(X,\Xi))\\
&\ge 1-\alpha,
\$
so $C_1^*$ is feasible solution to randomized RA-CPO-1. 
On the other hand, note that for any $(x,\xi)$, by definition  
\$
\max_{a\in \cA}\inf_{y \in C_1^*(x,a,\xi)}u(a,y) \ge \inf_{y \in C_1^*(x,\pi^*(x,\xi),\xi)}u(\pi^*(x,\xi),y)\ge \nu^*(x,\xi).
\$
Taking  expectation of both sides, we have 
\$
\E[\nu^*(X,\Xi)]
\leq \EE\Big[ \inf_{y \in C_1^*(X,\pi^*(X,\Xi),\Xi)}u(\pi^*(X,\Xi),y)  \Big]
\leq \EE \Big[\max_{a\in \cA}\inf_{y\in C_1^*(X,a,\Xi)}u(a,y) \Big].
\$

\paragraph{Step 2: From Randomized RA-CPO-1 to randomized RA-DPO.}
In this part, we show that for any optimal solution $\{C_1^*(x,a,\xi)\}_{a\in \cA}$ to randomized RA-CPO-1, there exists a feasible solution $(\pi^*,\nu^*)$ to randomized RA-DPO with a no smaller objective value.  

Consider any optimal solution $\{C_1^*(x,a,\xi)\}_{a\in \cA}$ to~\ref{eq:RA-CPO1-Xi}. 
Then it must obey  
\$
\PP(Y(\pi_\RA(X,\Xi;C_1^*)) \in C_1^*(X,\pi_\RA(X,\Xi;C_1^*),\Xi))\ge 1-\alpha,
\$ 
and it maximizes the objective 
$ 
\EE[\max_{a\in \cA}\inf_{y\in C(X,a,\Xi)}u(a,y)]
$. 
Define the policy and utility certificate 
\$
\pi^*(x,\xi)=\pi_\RA(x,\xi;C_1^*), \quad \nu^*(x,\xi)=\nu_\RA(x,\xi;C_1^*).
\$
Note that by definition,  $Y(\pi_\RA(X,\Xi;C_1^*))\in C_1^*(X,\pi_\RA(X,\Xi;C_1^*),\Xi)$ implies 
\$
u(\pi^*(X,\Xi),Y(\pi^*(X,\Xi)))\ge \inf_{y \in C_1^*(X,\pi^*(X,\Xi),\Xi)}u(\pi^*(X,\Xi),y)=\nu^*(X,\Xi).
\$
This further implies 
\$
\PP\big(u(\pi^*(X,\Xi),Y(\pi^*(X,\Xi))) \ge \nu^*(X,\Xi)\big)\ge \PP\big(Y(\pi_\RA(X,\Xi;C_1^*))\in C_1^*(X,\pi_\RA(X,\Xi;C_1^*),\Xi)\big)\ge 1-\alpha.
\$
As such, we know 
$(\pi^*,\nu^*)$  are feasible solutions to randomized RA-DPO.

On the other hand, by definition we know $\E[\nu^*(X,\Xi)]=\E[\max_{a\in \cA}\min_{y\in C_1^*(X,a,\Xi)}u(a,y)]$, the (optimal) objective value of~\ref{eq:RA-CPO1-Xi}, thereby completing the second part. 

Finally, combining the two parts, we know that \ref{eq:RA-DPO-one-seed} and~\ref{eq:RA-CPO1-Xi} are equivalent. 
\end{proof}

\subsection{Proof of Property~\ref{thm:right-con}}
\label{app:subsec_right_continuity}

We show that the upper quantile defined in~\eqref{eq:def_upp_qt} is right continuous.


\begin{property}\label{thm:right-con}
The function $q(\alpha)= \sup \{z \in \RR \given \PP(Z\le z) \le \alpha\}$ is right-continuous in $\alpha\in [0,1)$.
\end{property}

\begin{proof}[Proof of property \ref{thm:right-con}]
    
For $\alpha\in[0,1)$ let
\$
A_\alpha := \{ z\in\mathbb{R} : F(z)\le \alpha\},
\$
so that $q(\alpha)=\sup A_\alpha$.

If $\alpha\le \beta$, then $A_\alpha\subseteq A_\beta$, hence
\$
q(\alpha)=\sup A_\alpha \le \sup A_\beta = q(\beta).
\$
Therefore $q$ is nondecreasing.

Fix $\alpha\in[0,1)$ and let $\alpha_n \downarrow \alpha$. By monotonicity, the sequence
$q(\alpha_n)$ is nonincreasing, hence the limit
\$
\beta := \lim_{n\to\infty} q(\alpha_n) = \inf_{n\ge 1} q(\alpha_n)
\$
exists in $[-\infty,\infty]$. We claim that $\beta=q(\alpha)$.

First, since $\alpha_n\ge \alpha$ we have $A_\alpha\subseteq A_{\alpha_n}$ for each $n$, and thus $\forall n$,
\$
q(\alpha)=\sup A_\alpha \le \sup A_{\alpha_n}=q(\alpha_n)
\$
Taking the infimum over $n$ yields $q(\alpha)\le \beta$.

To prove the reverse inequality, assume for contradiction that $\beta>q(\alpha)$.
Choose any $z$ such that
\$
q(\alpha) < z < \beta.
\$
Because $q(\alpha_n)\downarrow \beta$, we have $q(\alpha_n)\ge \beta>z$ for all $n$.
By the definition of supremum, for each $n$ there exists $y_n\in A_{\alpha_n}$ such that
\$
z<y_n\le q(\alpha_n)\quad\text{and}\quad F(y_n)\le \alpha_n.
\$
Since $F$ is nondecreasing and $z<y_n$, it follows that $\forall  n$,
\$
F(z)\le F(y_n)\le \alpha_n 
\$
Letting $n\to\infty$ and using $\alpha_n\downarrow \alpha$ gives $F(z)\le \alpha$, i.e., $z\in A_\alpha$.
This contradicts $z> \sup A_\alpha = q(\alpha)$. Hence $\beta\le q(\alpha)$.

Combining both inequalities, $\beta=q(\alpha)$, and therefore $q(\alpha_n)\to q(\alpha)$ whenever
$\alpha_n \downarrow \alpha$. This proves that $q$ is right-continuous on $[0,1)$.
\end{proof}

\subsection{Proof of Proposition \ref{thm:optimal-set} and \ref{thm:random-optimal-set} }
\label{app:subsec_optimal_set}

Since the randomized problem is more general, we prove Proposition~\ref{thm:random-optimal-set} only.

\begin{proof}[Proof of Proposition~\ref{thm:random-optimal-set}]
Consider any suite of prediction sets $\{C(x,a,\xi)\}_{a \in \cA}$ (with $x$ and $\xi$ fixed) that satisfy the coverage constraints  
\$
\PP(Y(a)\in C(X,a,\Xi) \given X=x,\Xi=\xi)\ge t,\quad \forall a\in \cA.
\$
This coverage guarantee implies 
\$
\PP\Big(u(a,Y(a))\ge \inf_{y \in C(X,a,\Xi)}u(a,y)\Biggiven X=x, \Xi=\xi \Big)\ge t
\$
Then for any constant $\epsilon>0$, we know 
\$
& \PP\Big(u(a,Y(a))\le \inf_{y \in C(X,a,\Xi)}u(a,y)-\epsilon \Biggiven X=x,\Xi=\xi \Big) \\ 
&\le \PP\Big(u(a,Y(a)) < \inf_{y \in C(X,a,\Xi)}u(a,y) \Biggiven X=x,\Xi=\xi\Big) \le 1-t,
\$
which implies 
\$
\inf_{y \in C(x,a,\xi)}u(a,y) \le \uquant_{1-t}[u(a,Y(a))|X=x]
\$
by the arbitrariness of $\epsilon>0$. Taking maximum over $a\in \cA$ on both sides yields
\[
\max_{a \in \cA} \inf_{y \in C(x,a,\xi)}u(a,y) \le \max_{a \in \cA}~\uquant_{1-t}[u(a,Y(a))|X=x]=\theta(x,t),
\]
which means $\theta(x,t)$ is an upper bound of $\nu_{RA}(x,\xi;C)$ for any prediction sets obeying the coverage constraints.

We now define the prediction sets
\$
C^*(x,a,\xi)=\{y \in \cY \given u(a,y) \ge \gamma(x,t,a)\}, ~~\text{for each}~  a \in \cA.
\$
We first show that these sets   satisfy the coverage guarantee. For any constant $\epsilon>0$, we note that 
\$
&\PP\big(u(a,Y(a)) \le \uquant_{1-t}[u(a,Y(a)) \given X=x]-\epsilon \biggiven X=x\big) \\
&=\PP\big(u(a,Y(a)) \le \sup\{z \in \RR \given \PP(u(a,Y(a))\le z \given X=x)\le 1-t\}-\epsilon \biggiven X=x\big) \leq 1-t. 
\$
The arbitrariness of $\epsilon>0$ implies 
\$
\PP(u(a,Y(a)) < \uquant_{1-t}[u(a,Y(a)) \given X=x] \given X=x )\le 1-t.
\$
Equivalently, we have the coverage guarantee: for each $a\in \cA$, 
\$
\PP(Y(a) \in C^*(x,a,\xi)\given X=x) = 
&\PP(u(a,Y(a)) \ge \uquant_{1-t}[u(a,Y(a)) \given X=x] \given X=x )\geq t.
\$
On the other hand, by construction we have 
\$
\nu_\RA(x,\xi;C^*)=\max_{a \in \cA} \inf_{y \in C^*(x,a,\xi)} u(a,y) \ge \max_{a \in \cA}  \gamma(x,t,a)=\theta(x,t).
\$
Since $\theta(x,t)$ is an upper bound for $\nu_{\RA}(x,\xi;C)$ for any prediction sets $\{C(x,a,\xi)\}_{a\in \cA}$ obeying the coverage guarantees, we know the above $\{C^*(x,a,\xi)\}_{a\in \cA}$ attain the optimal utility. 
\end{proof}

\subsection{Proof of Theorem \ref{thm:unified} and \ref{thm:seed_unified}}
\label{app:subsec_proof_unified_opt}

Here we prove the more general randomized version, Theorem~\ref{thm:seed_unified}. The only difference is that  Theorem~\ref{thm:unified} additionally assumes the maximizer is unique, thus $\inf S(x,\beta) = \sup S(x,\beta)$ for $P_X$-a.s.~$x\in \cX$. Theorem~\ref{thm:unified} can thus be proved by simplifying the proof below with $t_{\low}(x,\beta)=t_{\high}(x,\beta)$.

Intuitively, in RA-CPO-2'-Rand, the function $t(x,\xi)$ allocates the conditional (worst-case) coverage  across the $(x,\xi)$ values so as to satisfy the coverage guarantee while maximizing the corresponding optimal objective $\theta(x,t(x,\xi))$.  We shall show that the given solution $t^*(x,\xi)$ is indeed optimal.

\begin{proof}[Proof of Theorem~\ref{thm:seed_unified}]
Fix $\beta\ge0$ and define
\$
m_\beta(x):=\sup_{s\in[0,1]}\{\theta(x,s)+\beta s\}.
\$
For each $x\in\cX$, define the pointwise maximizer set
\$
S(x,\beta):=\Big\{s\in[0,1]\given \theta(x,s)+\beta s=m_\beta(x)\Big\},
\$
and its endpoints
\$
t_{\low}(x,\beta):=\inf S(x,\beta),\qquad t_{\high}(x,\beta):=\sup S(x,\beta).
\$
Since $s\mapsto\theta(x,s)$ is left-continuous and nonincreasing on $[0,1]$, the set $S(x,\beta)$ is nonempty and compact,
so $t_{\low}(x,\beta)$ and $t_{\high}(x,\beta)$ are well-defined. Moreover, $x\mapsto m_\beta(x)$ is measurable, and the endpoint mapping 
$x\mapsto t_{\low}(x,\beta)$ and $x\mapsto t_{\high}(x,\beta)$ can be chosen to be measurable. 

We denote the optimal objective of~\ref{eq:RA-CPO2'-Xi} as $\OPT_2$. In the following, we study the subgradients of the objective functions and establish the form of the optimal prediction sets.

\paragraph{Step 1: Weak duality upper bound for the randomized primal.}
Let $T=t(X,\Xi)\in[0,1]$ be any feasible randomized rule, i.e. $\EE[T]\ge 1-\alpha$.
For every value of $(x,s)$ we have
\$
m_\beta(x)=\max_{u\in[0,1]}\{\theta(x,u)+\beta u\} \ge \theta(x,s)+\beta s.
\$
Plugging in  $s=T$ and taking expectations on both sides, we have 
\$
\EE[\theta(X,T)]  \le \EE[m_\beta(X)]-\beta\EE[T]
\le \EE[m_\beta(X)]-\beta(1-\alpha)  = D(\beta),
\$
where we define 
\$
D(\beta):=\EE[m_\beta(X)]-\beta(1-\alpha),\qquad \beta\ge0.
\$
Therefore,
\@\label{eq:weak_duality_unified}
\OPT_2 \le \inf_{\beta\ge0} D(\beta).
\@

\paragraph{Step 2: Existence of a dual minimizer.}
For each fixed $x\in \cX$, the map $\beta\mapsto m_\beta(x)$ is convex because
\$
m_\beta(x)=\max_{s\in[0,1]}\{\theta(x,s)+\beta s\}
\$
is a pointwise maximum of affine functions of $\beta$. Hence the function $D(\cdot)$ is convex on $[0,\infty)$.
Moreover, since $\theta(x,s)$ is bounded, there exists $C<\infty$ such that $|\theta(x,s)|\le C$ for all $(x,s)$.
Therefore, for any $x\in \cX$ and $\beta\ge0$, we have 
\$
m_\beta(x) \ge \theta(x,1)+\beta\cdot 1 \ge -C+\beta,
\$
and consequently
\$
D(\beta)=\EE[m_\beta(X)]-\beta(1-\alpha) \ge (-C+\beta)-\beta(1-\alpha)= -C+\alpha\beta \xrightarrow[\beta\to\infty]{}+\infty.
\$
This implies $D$ is coercive on $[0,\infty)$ and attains its minimum at some $\beta^*\ge0$.

\paragraph{Step 3: Subgradient of $D$ and the attainable interval at $\beta^*$.} This step consists of three parts: (3a) derive the pointwise subgradient of the mapping $\beta\mapsto m_\beta(x)$, (3b)  derive the subgradient of the mapping $\beta\mapsto \EE[m_\beta(X)]$, and (3c) subdifferential of the mapping $\beta\mapsto D(\beta)$. 

\smallskip
\noindent\underline{\emph{(3a) Pointwise subgradient of $m_\beta(x)$ on $[0,\infty)$.}}
We first show a pointwise Lipschitz bound in $\beta$. For every $\beta,\beta'\ge0$,
\$
m_{\beta'}(x)=\max_{s\in[0,1]}\{\theta(x,s)+\beta s+(\beta'-\beta)s\}\le m_\beta(x)+|\beta'-\beta|,
\$
and symmetrically $m_\beta(x)\le m_{\beta'}(x)+|\beta'-\beta|$. Hence
\@\label{eq:mbeta_lipschitz_pointwise}
|m_{\beta'}(x)-m_\beta(x)|\le |\beta'-\beta|.
\@
In particular, $\beta\mapsto m_\beta(x)$ is continuous on $[0,\infty)$.

We now proceed to study the subgradients of $\beta \mapsto m_\beta(x)$. 
By standard convex analysis results, a convex function $f$ on $[0,\infty)$, for every $\beta>0$, the subgradients are given by 
\$
\partial f(\beta)=[f'_-(\beta),f'_+(\beta)],
\$
and at the boundary, the subgradients are given by 
\$
\partial f(0)=(-\infty,f'_+(0)].
\$
We apply this with $f(\beta)=m_\beta(x)$, and 
calculate the one-sided derivatives. 

\smallskip
\noindent\textbf{Right derivative.}
For $h>0$ define
\$
q_h^+(x):=\frac{m_{\beta+h}(x)-m_\beta(x)}{h}.
\$
For any $s\in S(x,\beta)$,
\$
m_{\beta+h}(x)\ge \theta(x,s)+(\beta+h)s=m_\beta(x)+hs,
\$
so we have the lower bound 
\@\label{eq:lb_qh}
q_h^+(x)\ge \max_{s\in S(x,\beta)}s=t_{\high}(x,\beta).
\@

On the other hand, for each $h>0$,  pick any element $s_h\in S(x,\beta+h)$. Then we have 
\$
m_{\beta+h}(x)=\theta(x,s_h)+\beta s_h+hs_h\le m_\beta(x)+hs_h,
\$
because $m_\beta(x)\ge \theta(x,s_h)+\beta s_h$. Hence $q_h^+(x)\le s_h\le 1$ for any $h>0$. 

Now take any sequence $h_k\downarrow 0$. By compactness of $[0,1]$, along a subsequence (still denoted)
$s_{h_k}\to \bar s\in[0,1]$. By \eqref{eq:mbeta_lipschitz_pointwise}, we have  $m_{\beta+h_k}(x)\to m_\beta(x)$, and clearly
$h_k s_{h_k}\to 0$, so
\$
\theta(x,s_{h_k})+\beta s_{h_k}
= m_{\beta+h_k}(x)-h_k s_{h_k} \longrightarrow m_\beta(x).
\$
Since $s\mapsto\theta(x,s)$ is nonincreasing and left continuous, we know 
\$
\limsup_{k\to\infty}\theta(x,s_{h_k})\le \theta(x,\bar s).
\$
Indeed, if $s_{h_k}\ge \bar s$ along a subsequence then monotonicity gives
$\theta(x,s_{h_k})\le \theta(x,\bar s)$ there; if $s_{h_k}<\bar s$ along a subsequence, then after taking a further subsequence we may assume
$s_{h_k}\uparrow\bar s$ and left continuity yields $\theta(x,s_{h_k})\to\theta(x,\bar s)$.
Combining the two cases gives the desired inequality above. Since $\beta s_{h_k}\to \beta\bar s$,
\$
m_\beta(x)=\lim_{k\to \infty} \big(\theta(x,s_{h_k})+\beta s_{h_k}\big)\le \theta(x,\bar s)+\beta\bar s ,
\$
which means $\bar s\in S(x,\beta)$ is also a maximizer. Hence $\limsup_{k\to \infty} s_{h_k}\le \sup S(x,\beta)=t_{\high}(x,\beta)$.
Now, since we have shown that $q_{h_k}^+(x)\le s_{h_k}$, we get $\limsup_{k\to \infty} q_{h_k}^+(x)\le t_{\high}(x,\beta)$.
Together with $q_h^+(x)\ge t_{\high}(x,\beta)$ in~\eqref{eq:lb_qh}, we know $q_{h_k}^+(x)\to t_{\high}(x,\beta)$ for any sequence $h_k\downarrow 0$, i.e.
\$
(m_\beta(x))'_+(\beta)=t_{\high}(x,\beta),\qquad \forall \beta\ge0.
\$

\smallskip
\noindent\textbf{Left derivative (interior points only).}
For any $\beta>0$ and $h\in(0,\beta)$ we define
\$
q_h^-(x):=\frac{m_\beta(x)-m_{\beta-h}(x)}{h}.
\$

First, for any $s\in S(x,\beta)$, we have 
\$
m_{\beta-h}(x)\ge \theta(x,s)+(\beta-h)s=m_\beta(x)-hs,
\$
so $q_h^-(x)\le s$ for all $s\in S(x,\beta)$, hence we have the upper bound $q_h^-(x)\le t_{\low}(x,\beta)$.

On the other hand, for any $h>0$, pick any element $r_h\in S(x,\beta-h)$. Then
\$
m_\beta(x)\ge \theta(x,r_h)+\beta r_h=m_{\beta-h}(x)+hr_h,
\$
so we have $q_h^-(x)\ge r_h\ge 0$.

Now take any sequence $h_k\downarrow 0$ and (by compactness) extract a subsequence such that $r_{h_k}\to \bar r\in[0,1]$.
Using \eqref{eq:mbeta_lipschitz_pointwise}, we have $m_{\beta-h_k}(x)\to m_\beta(x)$ and $h_k r_{h_k}\to 0$ as $k\to \infty$, hence
\$
\theta(x,r_{h_k})+\beta r_{h_k}=m_{\beta-h_k}(x)+h_k r_{h_k} \longrightarrow m_\beta(x).
\$
The same left-continuity and monotonicity argument as before then implies $\bar r\in S(x,\beta)$.
Since $r_{h_k}\le q_{h_k}^-(x)\le t_{\low}(x,\beta)$ for all $k$, taking $k\to \infty$,  we get $\bar r\le t_{\low}(x,\beta)$.
Meanwhile, as  $\bar r\in S(x,\beta)$ also implies $\bar r\ge \inf S(x,\beta)=t_{\low}(x,\beta)$, we know  $\bar r=t_{\low}(x,\beta)$.
Therefore $r_{h_k}\to t_{\low}(x,\beta)$, and by the squeeze $r_{h_k}\le q_{h_k}^-(x)\le t_{\low}(x,\beta)$ we conclude
$q_h^-(x)\to t_{\low}(x,\beta)$ as $h\downarrow 0$, i.e.
\$
(m_\beta(x))'_-(\beta)=t_{\low}(x,\beta),\qquad \forall \beta>0.
\$

\smallskip
\noindent\textbf{Putting two derivatives together.} Putting the above two parts together, we know that for $\beta>0$, the subgradient  of $\beta\mapsto m_\beta(x)$ is given by 
\begin{equation}\label{eq:subgrad_mbeta_pointwise_unified_S}
\partial_\beta m_\beta(x)=\big[(m_\beta(x))'_-(\beta),(m_\beta(x))'_+(\beta)\big]
=\big[t_{\low}(x,\beta),t_{\high}(x,\beta)\big],
\end{equation}
and at $\beta=0$,
\$
\partial_\beta m_0(x)=(-\infty,(m_0(x))'_+(0)]=(-\infty,t_{\high}(x,0)].
\$

\smallskip
\noindent\underline{\emph{(3b) Subgradient of $F(\beta):=\EE[m_\beta(X)]$.}}
Fix any $\beta\ge0$ and define
\$
F(\beta):=\EE[m_\beta(X)].
\$
By \eqref{eq:mbeta_lipschitz_pointwise}, for every $h>0$, we have 
\$
0\le \frac{m_{\beta+h}(x)-m_\beta(x)}{h}\le 1,
\qquad
\text{and }\quad  0\le \frac{m_\beta(x)-m_{\beta-h}(x)}{h}\le 1~~\text{for}~h\in(0,\beta) ~~ \text{when} ~~\beta>0 .
\$
Moreover, the pointwise limits of these quantities have been studied in (3a), namely, 
for $\beta\ge0$,
\$
\frac{m_{\beta+h}(x)-m_\beta(x)}{h} \xrightarrow[h\downarrow0]{} t_{\high}(x,\beta),
\$
and for $\beta>0$,
\$
\frac{m_\beta(x)-m_{\beta-h}(x)}{h} \xrightarrow[h\downarrow0]{} t_{\low}(x,\beta).
\$
Therefore, by dominated convergence theorem,
\$
F'_+(\beta)
=\lim_{h\downarrow 0}\frac{F(\beta+h)-F(\beta)}{h}
=\EE\!\left[\lim_{h\downarrow0}\frac{m_{\beta+h}(X)-m_\beta(X)}{h}\right]
=\EE[t_{\high}(X,\beta)],
\$
and for $\beta>0$,
\$
F'_-(\beta)
=\lim_{h\downarrow 0}\frac{F(\beta)-F(\beta-h)}{h}
=\EE\!\left[\lim_{h\downarrow0}\frac{m_\beta(X)-m_{\beta-h}(X)}{h}\right]
=\EE[t_{\low}(X,\beta)].
\$
Since $F$ is one-dimensional convex on $[0,\infty)$, for $\beta>0$ we have
\@\label{eq:subdiff_F_pos_S}
\partial F(\beta)=[F'_-(\beta),F'_+(\beta)]
=\big[\EE[t_{\low}(X,\beta)],\EE[t_{\high}(X,\beta)]\big],
\@
and at $\beta=0$ (boundary case),
\@\label{eq:subdiff_F_at0_S}
\partial F(0)=(-\infty,F'_+(0)]=(-\infty,\EE[t_{\high}(X,0)]].
\@

\smallskip
\noindent\underline{\emph{(3c) Subdifferential of $D$ and the attainable interval.}}
Recall $D(\beta)=F(\beta)-\beta(1-\alpha)$. Hence
\$
\partial D(\beta)=\partial F(\beta)-(1-\alpha).
\$
In particular, for $\beta>0$,
\@\label{eq:subdiff_D_pos_S}
\partial D(\beta)=\big[\EE[t_{\low}(X,\beta)]-(1-\alpha),\EE[t_{\high}(X,\beta)]-(1-\alpha)\big],
\@
while at $\beta=0$,
\@\label{eq:subdiff_D_at0_S}
\partial D(0)=(-\infty,\EE[t_{\high}(X,0)]-(1-\alpha)].
\@

Since $\beta^*$ minimizes the convex function $D$ over $[0,\infty)$, the 1D constrained optimality condition gives
\$
0\in \partial D(\beta^*)+N_{[0,\infty)}(\beta^*),
\$
where we define the normal cone $N_{[0,\infty)}(\beta^*)=\{0\}$ if $\beta^*>0$ and $N_{[0,\infty)}(0)=(-\infty,0]$.
Using \eqref{eq:subdiff_D_pos_S}--\eqref{eq:subdiff_D_at0_S}:
\begin{itemize}
\item If $\beta^*>0$, then $0\in\partial D(\beta^*)$, hence
\$
1-\alpha\in\big[\EE[t_{\low}(X,\beta^*)],\EE[t_{\high}(X,\beta^*)]\big].
\$
\item If $\beta^*=0$, then $\partial D(0)\cap[0,\infty)\neq\emptyset$, which is equivalent to
\$
\EE[t_{\high}(X,0)]\ge 1-\alpha.
\$
\end{itemize}

\paragraph{Step 4: Construct an optimal (randomized) solution at $\beta^*$.} We now construct an optimal solution that attains the optimal value $D(\beta^*)$. 

\smallskip
\noindent\underline{\emph{(4a) Lagrangian identity under pointwise optimality.}}
Fix $\beta\ge0$ and let $T=t(X,\Xi)$ satisfy $T\in S(X,\beta)$ a.s.
Then $\theta(X,T)+\beta T=m_\beta(X)$ a.s., hence
\@\label{eq:Lag_identity_unified_S}
\EE[\theta(X,T)+\beta T]=\EE[m_\beta(X)].
\@

\smallskip
\noindent\underline{\emph{(4b) Achievable means via seed-mixtures of endpoints.}}
Fix $\beta\ge0$ and write
\$
a_-(\beta):=\EE[t_{\low}(X,\beta)],\qquad a_+(\beta):=\EE[t_{\high}(X,\beta)].
\$
If $a_-(\beta)=a_+(\beta)$, then any $T$ with $T\in S(X,\beta)$ a.s.\ satisfies $\EE[T]=a_-(\beta)=a_+(\beta)$.
Otherwise, for any target mean $m\in[a_-(\beta),a_+(\beta)]$, define
\$
\lambda:=\frac{a_+(\beta)-m}{a_+(\beta)-a_-(\beta)}\in[0,1], \qquad
t_m(x,\xi):=
\begin{cases}
t_{\low}(x,\beta), & \xi\le \lambda,\\
t_{\high}(x,\beta), & \xi>\lambda.
\end{cases}
\$
Then $T_m:=t_m(X,\Xi)$ satisfies
$T_m\in\{t_{\low}(X,\beta),t_{\high}(X,\beta)\}\subseteq S(X,\beta)$ a.s.\ and $\EE[T_m]=m$.

\smallskip
\noindent\underline{\emph{(4c) Apply (4a)--(4b) at $\beta^*$ and prove optimality.}}
By Step~1 and since $\beta^*\in\argmin_{\beta\ge0}D(\beta)$,
\$
\textnormal{OPT}_2 \le \inf_{\beta\ge0}D(\beta)=D(\beta^*).
\$
We now construct a feasible $T^*$ that attains $D(\beta^*)$.

\smallskip
First, if $\beta^*>0$, then 
Step~3 gives $1-\alpha\in[a_-(\beta^*),a_+(\beta^*)]$, so (4b) provides a randomized solution $T^*$ with
$T^*\in S(X,\beta^*)$ a.s.\ and $\EE[T^*]=1-\alpha$.
Then by \eqref{eq:Lag_identity_unified_S}, we have 
\$
\EE[\theta(X,T^*)]
=\EE[m_{\beta^*}(X)]-\beta^*\EE[T^*]
=\EE[m_{\beta^*}(X)]-\beta^*(1-\alpha)
=D(\beta^*).
\$
Thus $\EE[\theta(X,T^*)]=\textnormal{OPT}_2$.

\smallskip
On the other hand, if  $\beta^*=0$, then 
Step~3 gives $\EE[t_{\high}(X,0)]\ge 1-\alpha$ and we have $t_{\low}(X,0)=0$ a.s., hence
$1-\alpha\in[a_-(0),a_+(0)]$ and step (4b) yields a randomized solution $T^*$ with $\EE[T^*]=1-\alpha$ and $T^*\in S(X,0)$ a.s.
Then $\theta(X,T^*)=m_0(X)$ a.s., so $\EE[\theta(X,T^*)]=\EE[m_0(X)]=D(0)=D(\beta^*)$. 
\end{proof}

\subsection{Proof of Theorem \ref{thm:equi-seed}} 
\label{app:subsec_equi_seed}
\begin{lemma}[Universal simulation from a single uniform seed] 
\label{lem:universal_simulation}
Let $\mu$ be any Borel probability measure on $[0,1]^2$. There exists a Borel measurable map
$T:[0,1]\to[0,1]^2$ such that if $\Xi\sim \mathrm{Unif}(0,1)$, then $T(\Xi)\sim \mu$.
Moreover, if $\Xi$ is independent of a random element $Z$, then $T(\Xi)$ is also independent of $Z$.
\end{lemma}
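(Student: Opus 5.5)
The plan is to build $T$ as a composition of two standard measurable maps. The first is a Borel isomorphism-type splitting of one uniform seed into two independent uniforms. The second pushes the uniform square onto $\mu$ by the conditional quantile (Rosenblatt) transform.

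\textbf{Splitting the seed.} Write $\Xi=\sum_{k\ge1} b_k 2^{-k}$ in its binary expansion, choosing the non-terminating expansion on the null set of dyadic rationals. Under $\Xi\sim\mathrm{Unif}(0,1)$ the digits $b_k$ are i.i.d.\ Bernoulli$(1/2)$. Define
\[
U_1=\sum_{k\ge1} b_{2k-1}2^{-k}, \qquad U_2=\sum_{k\ge1} b_{2k}2^{-k}.
\]
Each $b_k$ is a Borel function of $\Xi$, so each $U_j$ is a Borel function of $\Xi$ as a pointwise limit of Borel maps. The variables $U_1$ and $U_2$ are functions of disjoint families of i.i.d.\ fair bits, hence independent. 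Each $U_j$ is uniform, since its dyadic-interval probabilities match Lebesgue measure and the $\pi$--$\lambda$ theorem extends this to all Borel sets.

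\textbf{Conditional quantile transform.} Let $(V_1,V_2)\sim\mu$ and let $F_1$ be the CDF of $V_1$. Since $[0,1]$ is a standard Borel space, there is a regular conditional distribution of $V_2$ given $V_1=v$. Denote its CDF by $F_{2\mid1}(\cdot\mid v)$, chosen jointly Borel in $(v,z)$; this is possible because one can take it right-continuous in $z$ and measurable in $v$ for each rational $z$.

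Define the generalized inverses
\[
Q_1(p)=\inf\{z:F_1(z)\ge p\}, \qquad Q_{2\mid1}(p\mid v)=\inf\{z:F_{2\mid1}(z\mid v)\ge p\}.
\]
By right-continuity, $\{(p,v):Q_{2\mid1}(p\mid v)\le z\}=\{(p,v):p\le F_{2\mid1}(z\mid v)\}$, so $Q_{2\mid1}$ is jointly Borel. Set
\[
T(\Xi)=\bigl(Q_1(U_1),\,Q_{2\mid1}(U_2\mid Q_1(U_1))\bigr).
\]
The standard inverse-CDF fact gives $Q_1(U_1)\sim F_1$. Conditional on $U_1$, the variable $U_2$ is still uniform, so the second coordinate has conditional law $F_{2\mid1}(\cdot\mid Q_1(U_1))$. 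Hence the joint law is the first marginal composed with the kernel, which is exactly $\mu$. This can be checked on rectangles $[0,a]\times[0,b]$ via Fubini, and rectangles determine $\mu$.

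\textbf{Independence.} If $\Xi\perp Z$, then for Borel sets $B$ and $E$,
\[
\PP\bigl(T(\Xi)\in B,\, Z\in E\bigr)=\PP\bigl(\Xi\in T^{-1}(B),\, Z\in E\bigr)=\PP\bigl(T(\Xi)\in B\bigr)\,\PP(Z\in E).
\]
This uses only that $T$ is Borel, so $T^{-1}(B)$ is a Borel set.

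The main obstacle is the measurability bookkeeping: making the regular conditional CDF and its quantile jointly Borel, and handling the null set of dyadic expansions. I would address the first by invoking existence of regular conditional distributions on Polish spaces. Alternatively, one can cite directly the Borel isomorphism theorem, or the fact that any Borel probability on a standard Borel space is the pushforward of Lebesgue measure on $[0,1]$, which gives $T$ at once.
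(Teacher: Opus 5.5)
The paper gives no proof of this lemma. It is stated at the start of the proof of Theorem~\ref{thm:equi-seed} and used directly. The only justification is the earlier remark that a single uniform seed ``can simulate any Borel probability distribution on $[0,1]^2$''. In effect the paper treats it as the standard transfer/Borel-isomorphism fact.

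Your argument is a correct, self-contained proof, so the comparison is between citing a general theorem and building $T$ explicitly. The citation route (every Borel probability on a standard Borel space is a pushforward of Lebesgue measure on $[0,1]$) is shorter and covers any standard Borel target. Your route is elementary and yields a concrete $T$: interleave binary digits to get independent uniforms $U_1,U_2$, then apply the conditional quantile (Rosenblatt) transform. The rectangle/Fubini check of the joint law is sound. The independence clause is also handled correctly, since it only uses that $T^{-1}(B)$ is Borel.

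Two small conventions are needed so that $T$ is defined everywhere with values in $[0,1]^2$.
\begin{itemize}
\item Take the infima defining $Q_1$ and $Q_{2\mid 1}$ over $z\in[0,1]$. Over $\mathbb{R}$ they equal $-\infty$ at $p=0$. This choice also makes the equivalence $Q_{2\mid1}(p\mid v)\le z \iff p\le F_{2\mid1}(z\mid v)$ hold for all $p\in[0,1]$.
\item Assign digits at $\xi=0$ arbitrarily, since $0$ has no non-terminating expansion.
\end{itemize}
Both changes affect only null sets, so the law of $T(\Xi)$ is unchanged.

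In the paper's application, $\mu$ is the joint law of two marginally uniform seeds. There $Q_1$ is the identity, and your map reduces to $T(\Xi)=\bigl(U_1,\,Q_{2\mid1}(U_2\mid U_1)\bigr)$.
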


We use $P_1$ to denote the single-seed optimization program and $P_2$ to denote the two-seed optimization program, and their optimal objectives are $\OPT(P_1)$ and $\OPT(P_2)$, respectively. 

We prove the two inequalities $\OPT(P_1)\ge \OPT(P_2)$ and
$\OPT(P_2)\ge \OPT(P_1)$. Through the proof, we can easily see that from any optimal realization of $P_1$, we can construct an optimal realization of $P_2$ and from any optimal realization of $P_2$, we can construct an optimal realization of $P_1$. 

\noindent\textbf{Step 1: $\OPT(P_1)\ge \OPT(P_2)$.}
Let $(\nu_2,\pi_2,\Xi_1,\Xi_2)$ be any feasible realization for $P_2$.
Let $\mu$ denote the joint distribution of $(\Xi_1,\Xi_2)$ on $[0,1]^2$.
By Lemma~\ref{lem:universal_simulation}, there exists a Borel map
$T=(T_1,T_2):[0,1]\to[0,1]^2$ such that for $\Xi\sim\mathrm{Unif}(0,1)$,
\$
(T_1(\Xi),T_2(\Xi)) \stackrel{d}{=} (\Xi_1,\Xi_2).
\$
Since $\Xi\perp (X,\{Y(a)\}_{a\in\cA})$, Lemma~\ref{lem:universal_simulation} also gives
\$
(T_1(\Xi),T_2(\Xi)) \perp (X,\{Y(a)\}_{a\in\cA}).
\$
Define the single-seed decision rules
\$
\nu_1(x,\xi):=\nu_2(x,T_1(\xi))\qquad
\pi_1(x,\xi):=\pi_2(x,T_2(\xi)).
\$
Then the objective value is preserved:
\$
\EE[\nu_1(X,\Xi)]
=\EE[\nu_2(X,T_1(\Xi))]
=\EE[\nu_2(X,\Xi_1)].
\$
Likewise, the constraint probability is preserved by the distributional identity
$(T_1(\Xi),T_2(\Xi))\stackrel{d}{=}(\Xi_1,\Xi_2)$ together with independence from the data:
\$
&\PP(u(\pi_1(X,\Xi),Y(\pi_1(X,\Xi)))\ge \nu_1(X,\Xi))\\
=&\PP(u(\pi_2(X,T_2(\Xi)),Y(\pi_2(X,T_2(\Xi))))\ge \nu_2(X,T_1(\Xi)))\\
=&\PP(u(\pi_2(X,\Xi_2),Y(\pi_2(X,\Xi_2)))\ge \nu_2(X,\Xi_1))
~\ge~ 1-\alpha.
\$
Hence $(\nu_1,\pi_1,\Xi)$ is a feasible realization for $P_1$ and achieves the same objective value as
$(\nu_2,\pi_2,\Xi_1,\Xi_2)$ in $P_2$. Taking suprema over all feasible realizations $(\nu_2,\pi_2,\Xi_1,\Xi_2)$ yields 
$\OPT(P_1)\ge \OPT(P_2)$.

\noindent\textbf{Step 2: $\OPT(P_2)\ge \OPT(P_1)$.}
Let $(\nu_1,\pi_1,\Xi)$ be any feasible realization for $P_1$.
In the two-seed problem $P_2$, choose the admissible coupling
$\Xi_1=\Xi_2=\Xi$ where $\Xi\sim\mathrm{Unif}(0,1)$ and $\Xi\perp (X,\{Y(a)\}_{a\in\cA})$.
This construction is admissible: it satisfies the marginal uniformity requirements for both seeds, and the two-seed program does not require the seeds to be independent of each other.
Now define
\$
\nu_2(x,\xi_1):=\nu_1(x,\xi_1)\qquad
\pi_2(x,\xi_2):=\pi_1(x,\xi_2).
\$
Under $\Xi_1=\Xi_2=\Xi$, we have
\$
\EE[\nu_2(X,\Xi_1)]=\EE[\nu_1(X,\Xi)],
\$
and the constraint event coincides pointwise:
\$
u(\pi_2(X,\Xi_2),Y(\pi_2(X,\Xi_2)))\ge \nu_2(X,\Xi_1)
\quad\Longleftrightarrow\quad
u(\pi_1(X,\Xi),Y(\pi_1(X,\Xi)))\ge \nu_1(X,\Xi).
\$
Therefore $(\nu_2,\pi_2,\Xi_1,\Xi_2)$ is a feasible realization for $P_2$ and attains the same objective value as
$(\nu_1,\pi_1,\Xi)$ in $P_1$. Taking suprema over all feasible realizations $(\nu_1,\pi_1,\Xi)$ yields
$\OPT(P_2)\ge \OPT(P_1)$.

Combining the two inequalities gives $\OPT(P_1)=\OPT(P_2)$. And through the proof, we can construct an optimal realization of $P_1$ from an optimal realization of $P_2$ and construct an optimal realization of $P_2$ from an optimal realization of $P_1$.

\subsection{Proof of Proposition \ref{thm:non-randomized}}
\label{app:subsec_non-randomized}

\begin{proof}[Proof of Proposition \ref{thm:non-randomized}]
Let $(\nu,\pi)$ be any optimal solution to \ref{eq:RA-DPO-one-seed}.
Define, for each $x\in\cX$, $a\in\cA$, and $t\in\RR$,
\$
p(x,a,t):=\PP\big(u(a,Y(a))\ge t \given X=x\big).
\$

\paragraph{Step 1: construct a non-randomized selector on each $(x,t)$.}
For each $(x,t)$, let
\$
a^*(x,t)\ \in\ \argmax_{a\in\cA} p(x,a,t),
\$
and break ties deterministically by choosing the smallest index.
Then $a^*(x,t)$ is a deterministic map from $\cX\times\RR$ to $\cA$.

\paragraph{Step 2: define a modified policy that is constant on $\nu$-level sets.}
Set
\$
\tilde\pi(x,\xi)\ :=\ a^*\big(x,\nu(x,\xi)\big),\qquad (x,\xi)\in\cX\times[0,1].
\$
By construction, if $\nu(x,\xi)=\nu(x,\xi')$, then
\$
\tilde\pi(x,\xi)=a^*\big(x,\nu(x,\xi)\big)=a^*\big(x,\nu(x,\xi')\big)=\tilde\pi(x,\xi'),
\$
so $\tilde\pi$ satisfies the required non-randomization property.

\paragraph{Step 3: show feasibility is preserved (coverage constraint not decreased).}
Consider the coverage probability under $(\nu,\pi)$:
\$
\PP\big(u(\pi(X,\Xi),Y(\pi(X,\Xi)))\ge \nu(X,\Xi)\big)
=\EE\Big[\PP\big(u(\pi(X,\Xi),Y(\pi(X,\Xi)))\ge \nu(X,\Xi)\given X,\Xi\big)\Big].
\$
Fix $(X,\Xi)=(x,\xi)$ and write $t=\nu(x,\xi)$ and $a=\pi(x,\xi)$.
Using the assumption $\Xi\perp (X,\{Y(a)\}_{a\in\cA})$, we have
\$
\PP\big(u(\pi(X,\Xi),Y(\pi(X,\Xi)))\ge \nu(X,\Xi)\given X=x,\Xi=\xi\big)
=\PP\big(u(a,Y(a))\ge t \given X=x\big)
= p(x,a,t).
\$
Hence,
\$
\PP\big(u(\pi(X,\Xi),Y(\pi(X,\Xi)))\ge \nu(X,\Xi)\big)
=\EE\big[p\big(X,\pi(X,\Xi),\nu(X,\Xi)\big)\big].
\$
Similarly, for $\tilde\pi$ we obtain
\$
\PP\big(u(\tilde\pi(X,\Xi),Y(\tilde\pi(X,\Xi)))\ge \nu(X,\Xi)\big)
=\EE\big[p\big(X,\tilde\pi(X,\Xi),\nu(X,\Xi)\big)\big].
\$
But by definition of $\tilde\pi(X,\Xi)=a^*(X,\nu(X,\Xi))$,
\$
p\big(X,\tilde\pi(X,\Xi),\nu(X,\Xi)\big)
=\max_{a\in\cA} p\big(X,a,\nu(X,\Xi)\big)
\ \ge\ p\big(X,\pi(X,\Xi),\nu(X,\Xi)\big)
\quad \text{a.s.}
\$
Taking expectations yields
\$
\PP\big(u(\tilde\pi(X,\Xi),Y(\tilde\pi(X,\Xi)))\ge \nu(X,\Xi)\big)
\ \ge\
\PP\big(u(\pi(X,\Xi),Y(\pi(X,\Xi)))\ge \nu(X,\Xi)\big)
\ \ge\ 1-\alpha,
\$
so $(\nu,\tilde\pi)$ is feasible whenever $(\nu,\pi)$ is feasible.

\paragraph{Step 4: show optimality is preserved.}
The objective in \eqref{eq:RA-DPO-one-seed} depends only on $\nu$:
\$
\EE[\nu(X,\Xi)].
\$
We have not changed $\nu$, so the objective value of $(\nu,\tilde\pi)$ equals that of $(\nu,\pi)$.
Since $(\nu,\pi)$ is optimal and $(\nu,\tilde\pi)$ is feasible with the same objective value,
$(\nu,\tilde\pi)$ is also optimal. 
\end{proof}

\subsection{Proof of Theorem \ref{thm:continuous-cov}}
\label{app:subsec_cont_conv}

\begin{proof}[Proof of Theorem \ref{thm:continuous-cov}]
We first prove the results for $\hat{C}^{\text{full}}(x,a)$ and $\beta^{\text{full}}(y)$ in Remark~\ref{rem:orc_cal}. 
\paragraph{Step 1: Risk-averse policy $\pi_\RA(X_{\test};\hat{C}^{\text{full}})=\hat a(X_{\test})$.}
Recall that $\beta^{\text{full}}_0 \ge \beta^{\text{full}}(y)$ for all $y\in\cY$. 
Since $\hat g(x,\beta)$ is non-decreasing in $\beta$, we have
\$
\hat g(X_{\test},\beta^{\text{full}}_0) \ge \hat g(X_{\test},\beta^{\text{full}}(y)),\qquad \forall y\in\cY.
\$
Moreover, by the monotonicity of $\hat\theta(x,\cdot)$ (non-increasing in its second argument), it follows that
\$
\hat\theta\big(X_{\test},\hat g(X_{\test},\beta^{\text{full}}(y))\big)
\ge
\hat\theta\big(X_{\test},\hat g(X_{\test},\beta^{\text{full}}_0)\big),\qquad \forall y\in\cY.
\$
By the definition of $\hat\theta(\cdot,\cdot)$, we have  
\$
\hat\theta\big(X_{\test},\hat g(X_{\test},\beta^{\text{full}}_0)\big)\ge \hat\gamma\big(X_{\test},\hat g(X_{\test},\beta^{\text{full}}_0),a\big),\qquad \forall a\in\cA
\$
Therefore, for any $a \neq \hat a(X_{\test})$,
\$
\inf_{y \in \hat{C}^{\text{full}}(X_{\test},\hat a(X_{\test}))} u(\hat a(X_{\test}),y)
&\ge \inf_{y \in \cY} \hat\theta\big(X_{\test},\hat g(X_{\test},\beta^{\text{full}}(y))\big) \\
&= \hat\theta\big(X_{\test},\hat g(X_{\test},\beta^{\text{full}}_0)\big) \\
&\ge \hat\gamma\big(X_{\test},\hat g(X_{\test},\beta^{\text{full}}_0),a\big) \\
&= \inf_{y \in \hat{C}^{\text{full}}(X_{\test},a)} u(a,y),
\$
which implies $\pi_{\RA}(X_{\test};\hat{C}^{\text{full}})=\hat a(X_{\test})$ by definition. 

In the next, we prove the last equality: $\hat\gamma(X_{\test},\hat g(X_{\test},\beta^{\text{full}}_0),a)=\inf_{y \in \hat{C}^{\text{full}}(X_{\test},a)} u(a,y)$.
\begin{itemize}
    
\item First, if $\hat g(X_{\test},\beta^{\text{full}}_0)=0$, then $\hat\gamma(X_{\test},\hat g(X_{\test},\beta^{\text{full}}_0),a)=\inf_{y \in \hat{C}^{\text{full}}(X_{\test},a)} u(a,y)=u_{\max}$ by definition.
\item 
It remains to consider the case $\hat g(X_{\test},\beta^{\text{full}}_0)>0$. Let $m=\inf_{y\in\hat{C}^{\text{full}}(X_{\test},a)}u(a,y)$ and $q=\hat\gamma(X_{\test},\hat g(X_{\test},\beta^{\text{full}}_0),a)=\uquant_{1-\hat g(X_{\test},\beta^{\text{full}}_0)}[u(a,\widehat Y(a)) \given X=X_{\test}]$. Here, $\widehat Y(a)$ denotes a random variable generated from the fitted outcome model 
$\widehat P(\cdot\given X=X_{\test},A=a)$.
By the definition of $\hat{C}^{\text{full}}$, it is straightforward to see that $m\geq q$.
Let $\epsilon=\frac{m-q}{2}$. If $m>q$, then  there does not exist any $y\in\cY$ satisfying $u(a,y) \in [q,q+\epsilon]$. In particular, 
\$
\PP(u(a,\widehat Y(a)) \le q+\epsilon\given X=X_{\test})&=\PP(u(a,\widehat Y(a)) < q\given X=X_{\test})\\&=\lim_{s \uparrow q}\PP(u(a,\widehat Y(a)) \le s\given X=X_{\test}).
\$
By construction, we have $q=\uquant_{1-\hat g(X_{\test},\beta^{\text{full}}_0)}[u(a,\widehat Y(a)) \given X=X_{\test}]=\sup \{z \in \RR \given \PP(u(a,\widehat Y(a)) \le z \given X=X_{\test}) \le 1-\hat g(X_{\test},\beta^{\text{full}}_0)\}$. 
This implies  $\PP(u(a,\widehat Y(a))\le s \given X=X_{\test}) \le 1-\hat g(X_{\test},\beta^{\text{full}}_0)$ for any $s<q$. 
Therefore 
\$
\PP(u(a,\widehat Y(a)) \le q+\epsilon\given X=X_{\test})=\lim_{s\uparrow q}\PP(u(a,\widehat Y(a)) \le s\given X=X_{\test})\le 1-\hat g(X_{\test},\beta^{\text{full}}_0),
\$
which implies $\uquant_{1-\hat g(X_{\test},\beta^{\text{full}}_0)}[u(a,\widehat Y(a)) \given X=X_{\test}]\ge q+\epsilon$, a contradiction. Hence we must have $m=q$. 
\end{itemize}

\paragraph{Step 2: Coverage guarantee.} Since $\hat\pi_{\RA}(\cdot;\hat{C}^{\text{full}}) = \hat{a}(\cdot)$ is obtained independent of the data in $\cI_{\calib}$ and $X_{\test}$, we view it as fixed hereafter. 
For notational simplicity, relabel $\{(X_i,Y_i,w_i):i\in\cI_{\calib}^0\}$ as $(X_1,Y_1,w_1),\dots,(X_n,Y_n,w_n)$, and set $X_{n+1}=X_{\test}$, $Y_{n+1}=Y_{\test}(\hat a(X_{\test}))$, and $w_{n+1}=w_{\test}$. When the candidate label equals the true test label, abbreviate $S_{n+1}(\beta):=S_{n+1}^{(Y_{n+1})}(\beta)$.
Define the normalized weights
\$
p_i \coloneqq \frac{w_i}{\sum_{j=1}^{n+1} w_j},\qquad i=1,\dots,n+1,
\$
and the conformity scores
\$
V_i \coloneqq 1-\ind\{S_i(\beta^{\text{full}}(Y_{n+1}))\ge 0\},\qquad i=1,\dots,n+1.
\$
First, under the known behavior policy $\pi$, conditional on the membership in $\cI_\calib^0$, the samples $\{(X_i,Y_i)\}_{i\in\cI_{\calib}^0}$ are from the conditional distribution $\PP(X,Y(\hat{a}(X))\given A=\hat{a}(X))$. 
By Bayes' rule, the density ratio between $(X_{\test},Y(\hat{a}(X_{\test})))$ and these calibration samples is given by 
\$
\frac{\ud \PP_{X_{\test},Y(\hat{a}(X_{\test}))}}{\ud \PP_{X,Y(\hat{a}(X))\given A=\hat{a}(X)}} (x,y) \propto \frac{1}{\PP(A=\hat{a}(X)\given X=x,Y(\hat{a}(X))=y)}  = \frac{1}{ \PP(A=\hat{a}(X)\given X=x )} = \frac{1}{\pi(\hat{a}(x)\given x)}.
\$
That is, the samples $\{(X_i,Y_i(\hat{a}(X_i)))\}_{i=1}^{n+1}$ are subject to a covariate shift with density ratio $1/\pi(\hat{a}(x)\given x)$. Therefore, they obey weighted exchangeability with weights given by $\{p_i\}_{i=1}^{n+1}$~\citep{tibshirani2019conformal}. 

We now proceed to show the desired coverage guarantee. By the definition of $\beta^{\text{full}}(y)$ in~\eqref{eq:beta_star}, we have
\$
\frac{\sum_{i=1}^{n+1} w_i\,\ind\{S_i(\beta^{\text{full}}(Y_{n+1}))\ge 0\}}
{\sum_{i=1}^{n+1} w_i}
\ge 1-\alpha, \quad \Longleftrightarrow \quad \quant\Big(1-\alpha;\sum_{i=1}^{n+1} p_i\,\delta_{V_i}\Big) = 0,
\$
where Quantile($\beta;P$)$=\inf\{z\colon P(Z\leq z)\geq \beta\}$ is the quantile of a distribution $P$, and the ``='' is because $V_i\in\{0,1\}$. 
Consequently, by construction in~\eqref{eq:def_cp_set}, 
\begin{align*}
\PP\big(Y_{n+1}\in \hat C^{\text{full}}(X_{n+1},\hat a(X_{n+1}))\big)
&= \PP(V_{n+1}=0) \\
&= \PP\left(V_{n+1}\le 
\quant\Big(1-\alpha;\sum_{i=1}^{n+1} p_i\,\delta_{V_i}\Big)\right) \\
&= \PP\left(V_{n+1}\le 
\quant\Big(1-\alpha;\sum_{i=1}^{n} p_i\,\delta_{V_i}+p_{n+1}\delta_{+\infty}\Big)\right) \\
&\ge 1-\alpha,
\end{align*}
where the last inequality follows from Theorem 2 in \cite{tibshirani2019conformal}. We thus complete the proof of the first part of Theorem~\ref{thm:continuous-cov}. 

We now proceed to prove the second part of Theorem~\ref{thm:continuous-cov}, i.e., the coverage guarantee of $\hat{C}$ in~\eqref{eq:est_pred_set}. Again, we first derive the risk-averse policy and then prove the coverage guarantee.

\paragraph{Step 3: Risk-averse policy  $\pi_{\RA}(X_{\test};\hat{C}) = \hat{a}(X_{\test})$.} We show that the risk-averse policy under $\hat{C}$ in~\eqref{eq:est_pred_set} also coincides with $\hat{a}(\cdot)$. 
Note that in~\eqref{eq:est_pred_set}, we have $\hat\theta(X_{\test},\hat{g}(X_{\test},\beta^*)) \geq \hat\gamma(X_{\test},\hat{g}(X_{\test},\beta^*),a)$ for any $a\neq \hat{a}(X_{\test})$ by the definition of $\hat\theta(\cdot,\cdot)$. Therefore 
\$
\inf_{y \in \hat{C}(X_{\test},\hat a(X_{\test}))} u(\hat a(X_{\test}),y)
&\ge \hat\theta\big(X_{\test},\hat g(X_{\test},\beta^*)\big) \\
&\ge \hat\gamma\big(X_{\test},\hat g(X_{\test},\beta^*),a\big) \\
&= \inf_{y \in \hat{C}(X_{\test},a)} u(a,y),
\$
where the last equality follows by the same argument in Step 1.
This implies the worst-case utility among $\hat{C}(X_{\test},a)$ must be attained by $a = \hat{a}(X_{\test})$, thus $\pi_{\RA}(X_{\test};\hat{C}) = \hat{a}(X_{\test})$.

\paragraph{Step 4: Coverage guarantee via $\hat{C}(x,a)\supseteq \hat{C}^{\text{full}}(x,a)$.} 

For any fixed $y\in\cY$, let $\beta^{\text{full}}(y)$ be defined as in \eqref{eq:beta_star_y}. Since
\$
\frac{\sum_{i\in\cI_{calib}^0} w_i\,\ind\{S_i(\beta^{\text{full}}(y))\ge 0\} + w_{\test}\ind\{S_{\test}^{(y)}(\beta^{\text{full}}(y))\ge 0\}}
{\sum_{i\in\cI_{calib}^0} w_i + w_{\test}}
\ge
\frac{\sum_{i\in\cI_{calib}^0} w_i\,\ind\{S_i(\beta^{\text{full}}(y))\ge 0\}}
{\sum_{i\in\cI_{calib}^0} w_i + w_{\test}},
\$
we must have $\beta^*\ge \beta^{\text{full}}(y)$ for all $y\in\cY$ by the definition of $\beta^{\text{full}}(y)$.
Since $\hat g(x,\beta)$ is non-decreasing in $\beta$ for any $x\in \cX$, we know 
\$
\hat g(X_{\test},\beta^*)\ge \hat g(X_{\test},\beta^{\text{full}}(y)),\qquad \forall ~ y\in\cY.
\$
By the monotonicity  (non-increasing) of $\hat\theta(x,t )$ in the argument $t$, and the monotonicity (non-increasing) of $\hat\gamma(x,t,\beta)$  in the argument $t$, this implies
\$
\hat\theta\big(X_{\test},\hat g(X_{\test},\beta^*)\big)
\le
\hat\theta\big(X_{\test},\hat g(X_{\test},\beta^{\text{full}}(y))\big), 
\quad \\ 
\text{and}\quad 
\hat\gamma\big(X_{\test},\hat g(X_{\test},\beta^*),a\big)
\le
\hat\gamma\big(X_{\test},\hat g(X_{\test},\beta^{\text{full}}(y)),a\big).
\$
Therefore, we know $\hat{C}^{\text{full}}(X_{\test},\hat{a}(X_{\test}) )\subseteq \hat{C}(X_{\test}, \hat{a}(X_{\test}))$. Since the risk-averse policies of the two suites of prediction sets coincide, we obtain the desired coverage guarantee for $\hat{C}(x,a)$ as well. This completes the proof of Theorem~\ref{thm:continuous-cov}. 
\end{proof}

\section{Numerical experiment details}
\subsection{Simulation data-generating process}
\label{app:simulation-dgp}

We generate data from a linear-softmax environment with covariate space $\mathcal X = \RR^d$, action space $\cA = \{0,1,2\}$, and label space $\cY = \{0,1,2,3\}$. In our experiments, we use $d=10$.

First, the covariate vector is sampled as
\$
X \sim \cN(0, I_d).
\$

Next, the logged action $A$ is drawn from a behavior policy $\pi_b(a \given X)$ defined by a multinomial softmax model. For each action $a \in \cA$, let
\$
\ell_a(X) = \frac{v_a^\top X + c_a}{\sqrt d},
\$
where the behavior-policy parameters are sampled independently as
\$
v_a \stackrel{\text{i.i.d.}}{\sim} \cN(0, I_d),
\qquad
c_a \stackrel{\text{i.i.d.}}{\sim} \cN(0, 0.5^2).
\$
Then
\$
\pi_b(a \given X)
=
\frac{\exp(\ell_a(X))}
{\sum_{a' \in \cA} \exp(\ell_{a'}(X))},
\qquad
A \given X \sim \pi_b(\cdot \given X).
\$

Finally, conditional on $(X,A)$, the outcome $Y$ is generated from an action-dependent multinomial softmax model. For each action $a \in \cA$ and label $y \in \cY$, define
\$
r_{a,y}(X) = 1.2 \bigl(w_{a,y}^\top X + b_{a,y}\bigr),
\$
where the outcome-model parameters are sampled independently as
\$
w_{a,y} \stackrel{\text{i.i.d.}}{\sim} \cN\!\left(0, \frac{1}{d} I_d\right),
\qquad
b_{a,y} \stackrel{\text{i.i.d.}}{\sim} \cN(0, 0.5^2).
\$
The conditional outcome distribution is
\$
\PP(Y = y \given X, A=a)
=
\frac{\exp(r_{a,y}(X))}
{\sum_{y' \in \cY} \exp(r_{a,y'}(X))},
\qquad
Y \given X,A \sim \PP(\cdot \given X,A).
\$

For each simulation replicate, all behavior-policy and outcome-model parameters are sampled once using the replicate-specific random seed and then held fixed for all observations within that replicate.

\subsection{Plug-in baseline}
\label{app:plugin}

We describe the plug-in baseline used in the simulation. Unlike \textsc{PC-RACP}, this baseline does not include a learning step for a global coverage-allocation parameter and does not perform any conformal calibration for finite-sample exact coverage. Instead, it directly uses the fitted action-conditional outcome model $\widehat P(Y \given X,A)$ to choose an action and construct a prediction set based on the optimal formula in Theorem~\ref{thm:optimal-set}.

Fix a target miscoverage level $\alpha \in (0,1)$. After fitting the outcome model $\widehat P(\cdot \given x,a)$ for the distribution of $Y(a)$ conditional on $X=x$, the method computes the utility threshold:  
\$
\hat \gamma(x,1-\alpha,a)
:=
\uquant_{\alpha}\big[u(a,\widehat Y(a))\given \widehat Y(a)\sim \widehat{P}(\cdot\given X=x,A=a)\big].
\$

The baseline then chooses the action with the largest plug-in utility quantile:
\$
\hat a_{\mathrm{plug}}(x,1-\alpha)
\in
\argmax_{a \in \cA} \hat \gamma(x,1-\alpha,a),
\$
and sets
\$
\hat \theta_{\mathrm{plug}}(x,1-\alpha)
:=
\max_{a \in \cA} \hat \gamma(x,1-\alpha,a).
\$
The resulting plug-in prediction set is
\$
\widehat C_{\mathrm{plug}}(x,a)
:=
\bigl\{y \in \cY : u(a,y) \ge \hat \gamma(x,1-\alpha,a)\bigr\}.
\$
By the same argument as in Appendix~\ref{app:subsec_cont_conv}, the associated risk-averse policy is $\pi_\RA(X;\widehat C_{\mathrm{plug}})=\hat a_{\mathrm{plug}}(X,1-\alpha)$.

\subsection{Evaluation in real data}
\label{app:subsec_eval_real}

Here we outline the consistent estimator for coverage and utility in the randomized experiment data in Section~\ref{sec:hillstrom-exp}. 
Suppose our policy is $\hat{a}(X)$ and associated selected-action prediction set is $\hat{C}(X)$. 
Due to randomization, we know $\PP_{X_{\text{test}},Y_{\test}(t) \given T_{\test}=t}=\PP_{X_{\text{test}},Y_{\test}(t)}$ for $t\in \{0,1\}$. 
The coverage can then be written as (viewing $\hat{a}$ and $\hat{C}$ as fixed)
\$
& \PP(Y_\test(\hat{a}(X_\test)) \in \hat{C}(X_\test)) \\ &= 
\EE\Big[ \ind\{\hat{a}(X_{\test})=1, Y_\test(1) \in \hat{C}(X_\test) \} + \ind\{\hat{a}(X_{\test})=0, Y_\test(0) \in \hat{C}(X_\test) \} \Big] \\ 
&= \EE\Big[ \ind\{\hat{a}(X_{\test})=1, Y_\test(1) \in \hat{C}(X_\test) \} \Biggiven T_\test = 1\Big] \\ 
&\quad + \EE\Big[ \ind\{\hat{a}(X_{\test})=0, Y_\test(0) \in \hat{C}(X_\test) \} \Biggiven T_\test = 0\Big]\\ 
&= \EE\Big[ \ind\{\hat{a}(X_{\test})=1, Y_\test  \in \hat{C}(X_\test) \} \Biggiven T_\test = 1\Big] \\ 
&\quad + \EE\Big[ \ind\{\hat{a}(X_{\test})=0, Y_\test  \in \hat{C}(X_\test) \} \Biggiven T_\test = 0\Big]
\$
The last two quantities can now be estimated by sample averages in actually treated and actually control units in the test samples. 

\subsection{CatBoost Hyperparameters for the Hillstrom Experiment}
For the Hillstrom experiment, we use CatBoost classifiers to estimate the conditional outcome distribution $\widehat P(Y\given X,A)$ and the marginal label distribution $\widehat P(Y\given X)$. The conditional outcome model $\widehat P(Y\given X,A)$ is fitted separately within each action group, using one CatBoost classifier for each value of $A$. The marginal label model $\widehat P(Y\given X)$ is fitted once on the pooled training data. All CatBoost classifiers are implemented using \texttt{CatBoostClassifier} from the Python library \texttt{catboost}.

The CatBoost classifiers use the following hyperparameters:
\$
\texttt{loss\_function}=\texttt{Logloss},\quad \texttt{eval\_metric}=\texttt{Logloss},\quad \texttt{iterations}=500, \\
\texttt{depth}=6,\quad \texttt{learning\_rate}=0.03,\quad \texttt{random\_seed}=0,
\$
with \texttt{verbose=False} and \texttt{allow\_writing\_files=False}. No additional hyperparameter tuning is performed in the reported experiments. For the Hillstrom CatBoost experiment, we set the parameter \texttt{standardize=False}, since CatBoost can handle mixed numeric and categorical covariates directly and does not require feature standardization.

\subsection{Random Forest Hyperparameters for the Simulation Experiments}
In the experiments using Random Forests, we implement the base probabilistic classifiers with \texttt{RandomForestClassifier} from \texttt{scikit-learn}. For the simulation experiments, Random Forests are used to estimate the conditional outcome model, the behavior policy model, and the marginal label model.

For the simulation experiments, the Random Forest classifiers use the following hyperparameters:
\$
\texttt{n\_estimators}=200,\quad \texttt{max\_depth}=14,\quad \texttt{min\_samples\_leaf}=2,\quad \texttt{n\_jobs}=1.
\$
The random seed of each Random Forest model is matched to the seed of the corresponding experimental run.

All Random Forest hyperparameters not explicitly specified, including the split criterion, feature subsampling rule, bootstrap sampling, and the minimum number of samples required for an internal split, are kept at their default \texttt{scikit-learn} values. No additional hyperparameter tuning is performed within the simulation scripts.
\end{document}